\DeclareMathOperator*{\argmin}{arg\,min}
\DeclareMathOperator*{\argmax}{arg\,max}
\numberwithin{theorem}{section}
\newtheorem{lemma}{Lemma}
\numberwithin{lemma}{section}
\newtheorem{corollary}{Corollary}
\numberwithin{corollary}{section}
\newtheorem{assumption}{Assumption}
\newcommand{\sycomment}[1]{}%{\textcolor{red}{[Steven] #1}}
\newcommand{\synew}[1]{#1}
\newcommand{\removedsy}[1]{}
\newcommand{\newshipra}[1]{#1}
\newcommand{\removedshipra}[1]{}% {\textcolor{red}{[Shipra. removed the following in red.] #1}}
\newcommand{\sacomment}[1]{#1}
\newcommand{\1}{\mathbbm{1}}
\def\ddefloop#1{\ifx\ddefloop#1\else\ddef{#1}\expandafter\ddefloop\fi}
\def\ddef#1{\expandafter\def\csname bb#1\endcsname{\ensuremath{\mathbb{#1}}}}
\def\ddef#1{\expandafter\def\csname c#1\endcsname{\ensuremath{\mathcal{#1}}}}
\def\ddef#1{\expandafter\def\csname v#1\endcsname{\ensuremath{\boldsymbol{#1}}}}
\def\ddef#1{\expandafter\def\csname v#1\endcsname{\ensuremath{\boldsymbol{\csname #1\endcsname}}}}
\newcommand{\pcurvei}{\lcb{p_i}}
\newcommand{\preg}{\text{Pseudo-Regret}}
\newcommand{\gm}{\gamma m} %{\lfloor \gamma m \rfloor}
\newcommand{\gim}{\gamma_i m}
\newcommand{\pualgo}{\log(e+\phi T)}
\newcommand{\pu}{\log(e+(\alpha+\beta) T)}
\newcommand{\detV}{V^{\text{det}}}
\newcommand{\detQ}{Q^{\text{det}}}
\newcommand{\detG}{G^{\text{det}}}
\newcommand{\stochV}{V^{\text{stoch}}}
\newcommand{\pcurve}{p^*}
\newcommand{\pe}{p_0} %exploration price
\newcommand{\lcb}[1]{\underline{#1}}
\newcommand{\reg}{\text{Regret}}
\newcommand{\detreg}{\text{Regret}^{\text{det}}}
\newcommand{\rev}{\text{Rev}}
\newcommand{\Xopt}{X^*}
\newcommand{\Xdet}{X^{\text{det}}}
\newcommand{\KL}{\text{KL}}
\newcommand{\alphaL}{L_\alpha}
\newcommand{\betaL}[1]{L_{\beta #1}}
\newcommand{\popt}{\pi^*}
\newcommand{\epochboundorder}{m^{2/3}\log\left(\frac{T}{\delta}\right) \left(\frac{1}{\alpha}+\frac{1}{\beta}+\frac{\beta}{\alpha^2}\right)\phi}
\newcommand{\regretboundorder}{m^{2/3} \log(m) \log(\frac{T}{\delta}) \left(\frac{1}{\alpha}+\frac{1}{\beta}+\frac{\beta}{\alpha^2}\right)\phi}
\author{ {\sf Shipra  Agrawal}\thanks{Industrial Engineering and Operations Research,
e-mail: {\tt sa3305@columbia.edu}} \\Columbia University
\and {\sf Steven Yin}\thanks{Industrial Engineering and Operations Research,
e-mail: {\tt sy2737@columbia.edu}} \\Columbia University
\and {\sf Assaf  Zeevi}\thanks{Graduate School of Business,
e-mail: {\tt assaf@gsb.columbia.edu}} \\Columbia University}
\title{Dynamic Pricing and Learning under the Bass Model}
\begin{document}
\date{}
\maketitle
%%%%%%%%%%%%
% Abstract %
%%%%%%%%%%%%
\begin{abstract}
    We consider a novel formulation of the dynamic pricing and demand
    learning problem, where the evolution of demand in response to posted
    prices is governed by a stochastic variant of the popular Bass model with
    parameters $(\alpha, \beta)$ that are linked to the so-called
    ``innovation'' and ``imitation'' effects. Unlike the more commonly used
    i.i.d. and contextual demand models, in this model the posted price not
    only affects the demand and the revenue in the current round but also the future 
    evolution of demand, and hence the fraction of potential market size $m$  that can
    be ultimately captured. Finding a revenue-maximizing dynamic
    pricing policy in this model is non-trivial even in the full information
    case, where model parameters are known. In this paper, we consider the  more
    challenging {\it incomplete information} problem where dynamic pricing is applied in conjunction with
    learning the unknown model parameters, with the objective of optimizing the
    cumulative revenues over a given selling horizon of length $T$. Equivalently, the goal is to minimize the regret which measures the revenue loss of the algorithm relative to the {\it optimal} expected
    revenue achievable under the stochastic Bass model with market size $m$ and time horizon $T$. 
    
    Our main contribution
    is the development of an algorithm that satisfies a high probability regret guarantee of order $\tilde
    O(m^{2/3})$; where the market size $m$ is known a priori.  Moreover,  we show that no algorithm can incur smaller order of loss by deriving a matching lower bound.
    %Hence the ``price'' of incomplete information and the ``cost'' of learning it on the fly, are reasonably small.  
    Unlike most regret analysis results, in the present problem the market size $m$ is the fundamental driver of the complexity; our lower bound in fact, indicates  that for any fixed $\alpha, \beta$, most non-trivial  instances of the problem have  constant $T$ and large $m$. We believe that this insight sets the problem of dynamic pricing under the Bass model apart from the typical i.i.d. setting and multi-armed bandit based models for dynamic pricing,  which typically focus only on the asymptotics with respect to time horizon $T$.
\end{abstract}

\newpage
%%%%%%%%%
% Intro %
%%%%%%%%%
\section{Introduction}
\label{sec: introduction}
\paragraph{\underline{Background and motivation}} The dynamic pricing and learning
literature, often referred to as ``learning and earning,'' has at its focal
point the objective of maximizing revenues jointly with inferring the
structure of a demand model that is a priori not known to the decision maker.
It is an extremely active area of current research that can essentially be
traced back to two strands of work. Within the computer science community,
the first paper on the topic is \cite{LeightonKleinberg03} that studies a
posted price auction with infinite inventory in which the seller does not
know the willingness-to-pay (or valuation) of buyers and must learn it over
repeated interactions. The problem is stateless, and demand is independent
from period to period; as such, it is amenable to a multi-armed bandit-type
approach and in fact can be solved near-optimally (in the sense of regret
bounds) using an ``explore-first-exploit-later'' pricing and learning
strategy. Various refinements and improvements have been proposed since in
what has become a very active field of study in economics, computer science
and operations research. The second strand of work originates in the
operations research community \cite{BesbesZeevi09} which focuses on the same
finite horizon regret criteria in the posted-price auction problem but with
limited inventory. This problem is sometimes referenced as ``bandits with
knapsacks'' due to the follow up work of
\cite{BadanidiyuruKleinbergSlivkins13} and subsequent papers. In that problem, the learning
objective is more subtle as the system ``state'' is changing over time (in
the dynamic programming full information solution, this state is given by the
level of remaining inventory and time). For further references and historical
notes on the development of the topic see, e.g., the recent survey
\cite{dynamic_pricing_learning_review}.

Most of the literature that has evolved from the inception points identified
above has focused on a relatively simple setting where given current pricing
decision, demand is independent of past actions and demand values. In
addition, much of the literature has focused on the ``stateless'' problem
setting, which provides further simplification and tractability. With the
evolution of online platforms and marketplaces, the focus on such homogeneous
modeling environments is becoming increasingly less realistic. For example,
platforms now rely more and more on online reviews and ratings to inform and
guide consumers. Product quality information is also increasingly available
on online blogs, discussion forums, and social networks,  that create further
word-of-mouth effects. One clear implication on the dynamic pricing
and learning problem is that the demand environment can no longer  assumed to be static,
meaning the demand model may change from one time instance to the next; for example, in
the context of online reviews, sales of the product trigger reviews/ratings,
and these in turn influence subsequent demand behavior etc.

While it is possible to model demand shifts and non-stationarities in a
reasonably flexible (nonparametric) manner within the dynamic pricing and
learning problem (see, e.g., \cite{keskin_zeevi_2017}, and
\cite{BesbesGurZeevi19} for a general MAB formulation), this
approach can be too broad and unstructured to be effective in practical dynamic
pricing settings. To that end, product diffusion models, such as the
popular Bass model \cite{Bass69, Bass04}, are known to be extremely robust and
parsimonious, capturing aforementioned word-of-mouth and imitation
effects on the growth in sales of a new product. 
The Bass diffusion model, originally proposed by Frank Bass in 1969
\cite{Bass69} has been extremely influential in marketing and management
science, often described as one of the most celebrated empirical generalizations
in marketing. It describes the process by which new products get adopted as an
interaction between existing users and potential new users. The Bass model
is attractive insofar as it creates a state-dependent evolution of market
response which is well aligned with the impact of recent technological
developments, such as online review platforms,  on the customer purchase
behavior. To that end, several recent empirical studies in marketing science and
econometrics utilize abundant social data from online platforms to quantify
the impact of word-of-mouth effect on consumer purchase behaviors and a new product
diffusion process (e.g., \cite{SENECAL2004, Chevalier2006, DELLAROCAS2007,
EAST2008, Iyengar2011, Toubia2014}, also see \cite{Dellarocas2003,
ToubiaSurvey2015} for literature surveys).

Motivated by these observations, the objective of this paper is to
investigate a novel formulation of the the dynamic pricing and demand
learning problem, where the evolution of demand is governed by the Bass
diffusion model, and where the parameters of this model are unknown a priori
and need to be learned from repeated interactions with the market.

\paragraph{\underline{The proposed model and main problem studied}}
The Bass diffusion model \cite{Bass69, Bass04} has two
parameters: the ``coefficient of innovation'' representing external influence
or advertising effect; and the ``coefficient of imitation'' representing
internal influence or word-of-mouth effect. The principle on which we
incorporate this model is that ``the probability of adopting by those who
have not yet adopted is a linear function of those who had previously
adopted.'' More precisely, the likelihood that a potential buyer buys a new
item at time $t$, given that he or she has not yet bought it, is a linear
function of the proportion of buyers at time $t$:
\begin{equation}
\label{eq:Bass1} 
\frac{f(t)}{1-F(t)} = \alpha+\beta F(t)
\end{equation}
where $F(t)$ and $f(t)$ are the distribution and density functions of first-purchase times, respectively. Here, the parameter $\alpha$ is the above referenced ``coefficient of innovation'', and $\beta$ is the ``coefficient of imitation.'' Let $m$ be the number of potential buyers,  i.e., the market size, and let $X_t$ be fraction of customers who has already adopted the product until time $t$. Then, $m X_t= m F(t)$ represents the cumulative sales (aka adoptions) up until time $t$,  and $m(1-X_t)$ is the size of remaining market yet to be captured. The instantaneous sales at time $t$, $m\frac{dX_t}{dt} = m f(t)$ can then be expressed as 
%\vspace{-0.1in}
\begin{equation}
\label{eq:Bassnoprice} 
m \frac{dX_t}{dt} = \underbrace{m \alpha (1-X_t)}_{\text{sales due to external influence}} + \underbrace{\textstyle m \beta X_t (1-X_t)}_{\text{sales due to internal influence or imitation}}
%\vspace{-0.1in}
\end{equation}
 
A generalization of the Bass diffusion model that can be harnessed for the
dynamic pricing context was proposed by Robinson and Lakhani
\cite{RobinsonLakhani1975}. In the latter model, $p_t$ denotes the price
posted at time $t$. Then, given previous adoption level $X_t$ the number of new adoptions at time instant $t$ is given by
\begin{equation}
\label{eq:detrministicBassintro}
m\frac{d X_t}{d t} = m e^{-p_t} (\alpha+\beta X_t) (1-X_t).
\end{equation}
Thus, the current price affects not only the immediate new adoptions and revenue, but also future adoptions due to its  dependence on the adoption level $X_t$, which essentially forms the state at time $t$ in this stateful demand model.
%due to state-based Bass diffusion model. 
Finding a revenue-maximizing optimal pricing trajectory in
such a dynamic demand model is non-trivial even when the model parameters are
known, e.g., see \cite{experience_curve} for some characterizations. 

In this
paper, we consider a stochastic variant of the above Bass model, where 
customers arrive sequentially and the number of customer arrivals (aka adoptions) %\footnote{Throughout the text we use the terms `arrivals' and `adoptions' inter-changeably to mean the same thing.}) 
$d_t$ until time $t$ is  a non-homogeneous Poisson process with rate $\lambda_t$ given by the right hand side of \eqref{eq:detrministicBassintro}.
%\sacomment{can we change number of customers to $N_t$ and $N$ instead of $d_t$ and $d$ everywhere? Or is that too much work? $d$ can get mixed up with $dt$ and $dX_t$ that we use a lot. }\sycomment{This might be a lot of work and error-prone so I'm leaving them as is right now.}%inter-arrival time between customer $d-1$ and $d$ is an exponential random variable with rate $e^{-p_{d}}(\alpha+\beta \frac{d-1}{m})(m-d+1)$, where $p_{d}$ is set when customer $d-1$ arrives.
%the new adoptions in time period $[t, t+1)$ are given by
%$m(X_{t+1}-X_{t})+\eta_t$, with $\eta_t$ being a $0$-mean noise given $X_t$.
%Here, the distribution of $\eta_t$ can also depend on the current adoption
%level $X_t$. 
More details on the stochastic model are provided in Section~\ref{sec: problem
formulation} where we describe the full  problem formulation and performance objectives.  

The problem of dynamic pricing under demand learning can then be described, informally, as follows: the learner (seller) is required to dynamically
choose prices $\{p_t\}$ to be posted at  time $t\in [0,T]$, where $p_t$ is chosen based on the past
observations that include the number of customers arrived so far, their times of arrival,
%which stand for the inter-arrival times between customers,
and the price decisions 
%$p_1, \ldots, p_{d-1}$ in the
made in the past. The number of customers $d_t$ arriving until any time $t$ is given by the stochastic Bass model. Note that we use the term ``customer arrival'' and ``customer adoption'' interchangeably to mean the same thing, i.e., every customer arriving at time $t$ adopts the product and pays the price $p_t$.
We assume that the
size of the market $m$ and the horizon $T$ are known to the
learner, but the Bass model parameters  $\alpha, \beta$ (i.e., coefficient of innovation and
coefficient of imitation) are unknown. The aim is to maximize the 
cumulative revenue over the horizon  $T$, i.e.,$\sum_{d=1}^{d_T} p_d$
via a suitable sequence of prices, where $p_d$ is the price paid by the $d^{th}$ customer and $d_T$ is the total number of adopters until time $T$. In essence, we will not be directly maximizing this quantity but rather, and
much in line with the online learning and multi-armed bandits literature,
will focus on evaluating a suitable notion of regret and establishing
``good'' regret performance of our proposed pricing and learning algorithm.

%https://www.probabilitycourse.com/chapter11/11_1_4_nonhomogeneous_poisson_processes.php

%\sacomment{Note to self: add a paragraph here on "\underline{Our contributions}". It should overview main results and discuss their significance - also discussing the interesting ranges of $m, T$ etc. for any market. The main result paragraph in section 2 is too late and more on the technical side.}

\paragraph{\underline{Main contributions}} The paper makes two significant advances in the study of the aforementioned Bass model learning and earning problem.  First, 
we present a learning and pricing
algorithm that achieves  a high probability $\tilde O\left(m^{2/3}\right)$ regret
bound. Second, under certain mild restrictions on algorithm design, we provide a matching  lower bound, showing that any algorithm must incur order $\Omega(m^{2/3})$ regret for this problem.   The precise statements of these results are provided as Theorem \ref{thm:main} and Theorem \ref{thm:mainlowerbound}, in Section \ref{sec: regret upper bound} and \ref{sec: regret lower bound} respectively. 
Hence the ``price'' of incomplete information and the ``cost'' of learning it on the fly, are reasonably small.  Unlike most regret analysis results, in the present setting the market size $m$ is the fundamental driver of the problem complexity; our lower bound, in fact,  indicates that for any fixed $\alpha, \beta$, most non-trivial  instances of the problem have  constant $T$ and large $m$. 
This is also reflected in the regret of our algorithm which depends sublinearly on market size $m$, and only logarithmically on the time horizon $T$. This insight sets the problem of dynamic pricing under Bass model uniquely apart from the typical i.i.d. and multi-armed bandit-based models for dynamic pricing. 

\paragraph{\underline{Other Related Work}}
\label{sec: literature review}
Several other models of non-stationary demand have been considered in recent
literature on dynamic pricing and demand learning. \citet{denboer2015} studies
an additive demand model where the market condition is determined by the sum
of an unknown market process and an adjustment term that is a known function
of the price. The paper numerically studies several sliding window and weight decay
based algorithms, including one experiment where their pricing strategy
is tested on the Bass diffusion model. However, they do not provide any
regret guarantees under the Bass model. \citet{besbes_zeevi_2011} and
\citet{chen2019dynamic} consider the pricing problem under a piece-wise
stationary demand model. \citet{keskin_zeevi_2017} consider a two-parametric
linear demand model whose parameters are indexed by time, and the total
quadratic variation of the parameters are bounded. Many of the existing
dynamic pricing and learning algorithms borrow ideas from related settings in
multi-armed bandit literature. \citet{garivier2008ucb} and
\citet{cao2018nearly} study piecewise stationary bandits based on sliding
window and change point detection techniques.
\citet{BesbesGurZeevi19,russac2019weighted} and
\citet{contextualBanditsNonstationary} study regret based on an appropriate
measure of reward variation. In rested and restless bandits literature,
\citet{tekin_2012} and \citet{bertsimas2000} also study a related problem
where the reward distribution depends on the state which follows an unknown
underlying stochastic process. However, the above-mentioned models of non-stationarity
are typically very broad and unstructured, and fundamentally different from
the stateful structured Bass diffusion model studied here. 

There is also much recent work on learning and regret minimization in stateful models using general MDP and reinforcement learning frameworks (for example \cite{yang2020reinforcement,agrawal2017posterior,jaksch2010near}). However, these results typically rely on having settings where each state can be visited many times over the learning process. This is ensured either because of an episodic MDP setting (e.g., \cite{yang2020reinforcement}), or through an assumption of communicating or weakly communicating MDP with bounded diameter, i.e., a bound on the number of steps to visit any state from any starting state under the optimal policy (e.g., see \cite{jaksch2010near, agrawal2017posterior}). Our setting is non-episodic, and every state is transient -- the state is described by the number of cumulative adopters so far and the remaining time, both of which can only move in one direction.
%seller only has one chance to sell the product (single episode the MDP is only played once). 
Therefore, the results in the above and related papers on learning general MDPs are not applicable to our problem setting. 
%too crude to be applied to our specific structured setting.
%\sacomment{add a line on why regret bouds for general stateful bandit settings (i.e., general MDP) are too crude to apply in this very structured setting.}

In the marketing literature, there is significant work on using the Bass
model to forecast demand before launching a product. Stochastic variants of
Bass model have also been considered previously, e.g., in \citet{Grasman2019,
niu2002}. In addition to the work mentioned in the introduction,
\citet{lee2014,FAN201790,bassAndDeepLearning2020, Grasman2019} present
empirical methods for using historical data from similar products or from
past years to estimate Bass model parameters, in order to obtain reliable
forecasts. In this context, we believe our proposed dynamic pricing and
learning algorithm may provide an efficient method for adaptively estimating
the Bass model parameters while optimizing revenue. However, the focus of
this paper is on providing theoretical performance guarantees; the empirical
performance of the proposed method has not been studied.

The work most closely related to ours is a parallel recent paper by \citet{zhang2020data}.
In \cite{zhang2020data}, the authors consider a dynamic pricing problem under a similar stochastic Bass model setting as the one studied here. They propose an algorithm based on Maximum Likelihood Estimation (MLE) that is claimed to achieve a logarithmic regret bound  of $\tilde O(\log(mT))$. At first glance this regret bound seems to contradict our lower bound\footnote{Technically our lower bound only holds for algorithms that satisfy certain conditions (Assumption \ref{assum: const price in between arrivals} and \ref{assum:price upper bound}) stated in Section \ref{sec: regret lower bound}. However, we believe it is still applicable to the algorithm in \cite{zhang2020data}. Their algorithm with its  constant upper bound on prices seems to satisfy both of our assumptions.} of $\Omega(m^{2/3})$. However, it appears  the results in \cite{zhang2020data} may have some mistakes (see, in particular,  the current statement and proofs of Lemma 3, Theorem 5 \synew{and Theorem 6} in \cite{zhang2020data} which we believe contain the aforementioned inconsistencies). \sycomment{I removed the appendix part. I think the wording is fine.} \removedsy{For completeness, interested readers can find some details on these potential inconsistencies present in \cite{zhang2020data} in Appendix \ref{sec:michiganpaper}. }
To the best of our understanding, these inconsistencies cannot be fixed without changing the current results in a significant way.
%%%%%%%%%%%%%%%%%%%%%%%%%%%%%%%%%%%%%%%%
% Problem Formulation and Main Results %
%%%%%%%%%%%%%%%%%%%%%%%%%%%%%%%%%%%%%%%%
\section{Problem formulation and preliminaries}
\label{sec: problem formulation}
\paragraph{\underline{The stochastic model and problem primitives}}There is an unlimited supply of durable goods of a single type to be sold in a
market with $m$ customers. We consider a dynamic pricing problem with sequential customer arrivals over a time interval $[0,T]$. 
%The dynamic pricing problem proceeds with one customer arrival (aka adoption) at a time. 
We denote by $d_t$, the number of arrivals by time $t$; with $d_0=0$.
%\sacomment{Why did we have $d_0$? I think $d_0=0$ is fine.}
%denotes the initial number of adoptions. 
At any given time $t$, the seller  observes $d_t$, the number of arrivals so far as well as their arrival times $\{\tau_1, \tau_2, \dots, \tau_{d_t}\}$, and chooses a new price $p_{t}$ to be posted for times $t'>t$. The seller can use the past information to update the price any number of times until the end of time horizon $T$. As mentioned earlier, in our formulation a customer arriving at time $t$ immediately adopts the product and pays the posted price $p_t$, and therefore the terms ``adoption" and ``arrival" are used interchangeably throughout the text.

%\sacomment{Note that I changed arrival time to be $\tau_d$ instead of $t_d$. Can we make this change everywhere? $t_d$ does not appear to be a randomized quantity and seems more like an index.}
%Let $t_d$ represent the time at which customer $d$ decides to adopt the product, and let $t_{d_0}=0$. There is also a time horizon $T$ that represents the product life cycle. The process stops when either $T$ time has passed or the entire market has adopted the product.

The customer arrival process for any pricing policy is given by a stochastic Bass diffusion model with (unknown) parameters $\alpha, \beta$. In this model, number of arrivals $d_t$ by time $t$ is a non-homogeneous Poisson point process \footnote{
A counting process $\{d_t,t\ge 0\}$ is called a non-homogeneous Poisson process with rate $\lambda_t$ if all the following conditions hold: (a) $d_0=0$; (b) $d_t$ has independent increments; (c) for any $t\ge 0$ we have $\Pr(d_{t+\delta} - d_t =0) = 1-\lambda_t \delta + o(\delta)$, $\Pr(d_{t+\delta} - d_t =1) = \lambda_t \delta + o(\delta)$, $\Pr(d_{t+\delta} - d_t \ge 2) = o(\delta)$. 
} with rate $\lambda_t$ given by  \eqref{eq:detrministicBassintro}, the adoption rate in deterministic Bass diffusion model  \cite{RobinsonLakhani1975}. That is, 
$$\lambda_t = me^{-p_t}(\alpha+\beta X_t)(1-X_t)$$
where $X_t=d_t/m$. 
For convenience of notation, we define  function 
$$\lambda(p, x) := me^{-p}(\alpha+\beta x)(1-x),$$ so that $\lambda_t= \lambda(p_t, X_t)$, with $X_t=d_t/m$.

%The seller updates the price whenever a customer arrives: when customer $d$
%arrives, she pays price $p_{d}$ and the sellers updates the price to
%$p_{d+1}$. The seller observes the current number of adoptions at time $t$:
%$d_t$, as well as the current time $t$, and remaining time $T-t$. Given the
%current number of adoptions $d-1$ and the current price $p_d$, the time until
%the next arrival is given by $I_d$ which is distributed as an Exponential random variable, $$I_d \sim Exp\left(\lambda(p_d, \frac{d-1}{m})\right)$$
%where $\lambda(p, x) = me^{-p}(\alpha+\beta x)(1-x)$ is the rate of arrival given by the Bass model for dynamic demand \cite{RobinsonLakhani1975} with parameters $\alpha, \beta, m$. 
%\sacomment{I removed all the $d_0$s from the definitions below.}

The seller's total revenue is simply the sum of the prices paid by the customers who arrived before time $T$, under the dynamic pricing algorithm used by the seller. We denote by $p_d$, the price paid by $d^{th}$ customer, i.e., $p_d= p_{\tau_d}$. Then, the revenue over  $[0,T]$ is given by:
\begin{align*}
\rev(T) = \sum_{d=1}^{d_T} p_d.
\end{align*}
The optimal dynamic pricing policy is defined as the one that  maximizes the total expected revenue $E[\rev(T)]$. We denote by $\stochV(T)$, the total expected revenue under the optimal dynamic pricing policy. Then, regret is defined as the difference between the optimal expected revenue $\stochV(T)$ and seller's revenue, i.e., 
\begin{equation}
\label{eq: regret}
\reg(T) = \stochV(T) - \rev(T).    
\end{equation}

In this paper, we aim to provide a dynamic learning and pricing algorithm with a high probability upper bound of $\tilde O(m^{2/3})$ on regret, as well as a closely matching lower bound. 
\sycomment{I changed this part a bit:}
\removedsy{However, the optimal pricing policy and optimal revenue $\stochV(T)$ for the stochastic Bass model described
above is difficult to derive and analyze even when the model parameters are known, which makes it difficult to characterize the regret of our learning problem. 
%if we were to use $\stochV(0, T)$ as a benchmark. 
To aid algorithm design and regret analysis, we define the notion of ``pseudo-regret'', which measures regret against a more tractable benchmark of $\detV(T)$, the {\it optimal revenue under the deterministic Bass model}. This follows a widely used approach in analysis of stochastic systems which tackles the deterministic ``skeleton" or fluid model as a stepping stone to developing insights for policy design and complexity drivers.   Later, we show that pseudo-regret upper bounds the regret measure $\reg(T)$ defined earlier, and further is within $\tilde O(\sqrt{m})$ of $\reg(T)$. Therefore, it will suffice to analyze the pseudo-regret instead of regret for both upper and lower bound purposes.}
\synew{Instead of analyzing the regret directly, we define a notion of ``pseudo-regret,'' which measures regret against the {\it optimal revenue under the deterministic Bass model} ($\detV(T)$). This is useful because as we will show later in \eqref{eq: opt_p_model_in_opt_trajectory}, there is a simple expression for the optimal prices in the deterministic Bass model. This can be leveraged using a fluid model approach, widely used in the analysis of stochastic systems,  which targets a deterministic ``skeleton"  as a stepping stone toward developing insights for policy design and complexity drivers.   Later, we show more formally that the pseudo-regret upper bounds the regret measure $\reg(T)$, defined earlier, and further is within $\tilde O(\sqrt{m})$ of $\reg(T)$. This justifies our focus on the pseudo-regret for both the purpose of developing lower bounds, as well as analysis and upper bounds on policy performance.}

Next, we discuss the expressions for optimal pricing policy and optimal revenue under the deterministic Bass model, and use those to define the notion of pseudo-regret.

%Let $\{\cF_d, d=d_0,d_0+1,d_0+2,\ldots\}$ be the filtration representing the information in the first $d-d_0$ arrivals. We say a pricing policy $p$ is admissible if the price it picks for customer $d$ is adapted to $\cF_{d-1}$: $p_d \in \cF_{d-1}$. Let $\cP$ be the set of all admissible policies. Given an initial number of adoptions $d_0$ and time horizon $T$, we define the optimal expected revenue in the stochastic Bass model as $$\stochV(d_0, T) = \sup\limits_{p\in\cP} \bbE[\sum_{d=d_0+1}^{d_T} p_d]$$

 %To work around this difficulty, we use the optimal revenue under the continuous time \emph{deterministic} Bass model with the same parameters $\alpha, \beta$. Later we show in Lemma~\ref{lem: deterministic value better than stochastic value} and Lemma~\ref{lem:smalloptimalgap} that this choice of benchmark is appropriate in the sense that the deterministic optimal revenue is at least as much as the expected stochastic optimal, and the gap between the two is small. But first we describe the deterministic Bass model and some results characterizing the optimal prices in the deterministic model.

%\paragraph{\underline{Deterministic Model}}
\paragraph{\underline{Optimal revenue and pricing policy under the deterministic Bass model}}
Recall (refer to introduction) that the adoption process under the deterministic Bass model is described as follows:
%customers are assumed to be arriving \emph{continuously} at a rate specified
%by the same Bass model expression. 
given the current adoption level $X_t$ and
 price $p_t$ at time $t$, the new adoptions are generated deterministically with rate \footnote{A subtle but important difference between this deterministic model vs. the stochastic Bass model  introduced earlier is that in the deterministic model, the increment $m dX_t$ in adoptions is fractional. On the other hand, in our stochastic model, the number of customer arrivals (or adoptions) $d_t$  is a counting process with discrete increments. This difference will be  taken into account later when we compare our pseudo-regret to the original definition of regret. }
\begin{equation}
    \label{eq:detrministicBass}
    m\frac{d X_t}{d t} = m e^{-p_t} (\alpha+\beta X_t) (1-X_t).
\end{equation}

%Given any current adoption level $x_0$, remaining time horizon $T$, and any continuous sequence of prices $p_x$ indexed by the adoption level $x$, the deterministic revenue is: 
%$$\detrev(x_0, T) = m \int_{x_0}^{X_T(x_0)} p_x dx$$ 
%where $X_T(x_0) = x_0 + \int_0^{T} dX_t(p_{X_t})$.

The optimal price curve for the deterministic Bass model is then defined as the price trajectory $\{p_t\}$ that maximizes the total  revenue $m\int_{0}^T p_t dX_t$ under the above adoption process.  We denote by $\detV(T)$ the total  revenue under the optimal price curve.

An analytic expression for the optimal price curve under the deterministic Bass model is in fact known. It can be 
derived using optimal control theory  (see Equation (8) of \cite{experience_curve}). For a Bass model with parameters $\alpha, \beta$, horizon $T$, and initial adoption level $0$, the price at adoption level $x$ in the optimal price curve  is given by the following expression:
%\newshipra{to compute the optimal price at adoption level $x$, in the Bass model with parameters $\alpha, \beta$, horizon $T$, and initial adoption level $X_0=0$:} 

\begin{equation}
\pcurve(x, \alpha, \beta) 
= 1+\log\left(\frac{(\alpha+\beta x)(1-x)}{(\alpha+\beta \Xopt_T)(1-\Xopt_T)}\right),
\label{eq: opt_p_model_in_opt_trajectory}
\end{equation}
where $\Xopt_T$, the adoption level at the end of horizon $T$, is given by the following equations:
\begin{align}
        \Xopt_T = \frac{1}{e}(\alpha+\beta \Xopt_T)(1-\Xopt_T)T, \label{eq: intro XT_model_quadratic_sol1}\\
        \text{ or, more explicitly} \nonumber \\
        \Xopt_T=\frac{T(\beta-\alpha)-e+\sqrt{[T(\beta-\alpha)-e]^2+4\alpha\beta T^2 }}{2\beta T}.\label{eq: intro XT_model_quadratic_sol2}
\end{align}
For completeness, a derivation of the above expression of $\Xopt_T$ is included in Appendix~\ref{sec: det opt price properties}.
Using the above notation for optimal price curve, we can write $\detV(T)$, the optimal total revenue, as:
$$\detV(T) =  m\int_{0}^{\Xopt_T}\pcurve(x, \alpha, \beta) \,dx.$$
%The optimal deterministic revenue in the Bass model is then $\revopt \coloneqq \detV(0, T)$.

\paragraph{\underline{Pseudo-Regret}}
We define pseudo-regret as the difference between $\detV(T)$, the optimal total revenue in the deterministic Bass model, and the algorithm's total revenue $\rev(T)$. That is, %we define pseudo regret  as:
\begin{equation}
    \label{eq:pseudo-reg}
\preg(T)=\detV(T) - \rev(T).
\end{equation}
Essentially, pseduo-regret replaces the benchmark of  stochastic optimal revenue $\stochV(T)$ used in the regret definition by deterministic optimal revenue $\detV(T)$. We show that in fact the deterministic optimal revenue is a stronger benchmark, in the sense that it is always larger than the stochastic optimal revenue. Furthermore, we show that it  is within $\tilde{O}(\sqrt{m})$ of the stochastic optimal revenue. To prove this relation between the two benchmarks we demonstrate a concavity property of deterministic optimal revenue which is crucial for our results. Specifically, we define an expanded notation $\detV(x,T)$ as the deterministic optimal revenue starting from adoption level $x$ and remaining time $T$. Note that $\detV(T) = \detV(0, T)$. Then, we show that $\detV(x,T)$ is concave in $x$ for any $T$, and any market parameters $m,\alpha, \beta$. More precisely, we prove the following key lemma.
%We define regret to be the difference between the deterministic optimal
%revenue and the revenue we achieve in the stochastic Bass model, when the
%initial adoption level is $0$: $x_0=d_0=0$.
%$$\reg = \detV(0, T) - \rev(0, T)$$
\begin{restatable}[Concavity of deterministic optimal revenue]{lemma}{concavity}
    For any deterministic Bass model,  $\detV(x, T)$, defined as the optimal revenue  starting from adoption level $x$ and remaining time $T$,  is concave in $x$, %i.e.,  $\frac{\partial^2\detV (x, T)}{\partial x^2} \leq 0$, 
    for all $T\geq 0$, and all adoption levels $x\in[0,1]$.
    \label{lem: concavity of deterministic value function}
\end{restatable}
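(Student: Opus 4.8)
The plan is to obtain a workable closed form for $\detV(x,T)$ from the optimal-control solution, differentiate it twice in $x$, and reduce the resulting inequality to an elementary fact about the concave quadratic $g(x):=(\alpha+\beta x)(1-x)$. Throughout I write $g$ for this shape factor, so that the rate in \eqref{eq:detrministicBass} is $e^{-p}g(X)$.

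First I would establish the analogue of \eqref{eq: opt_p_model_in_opt_trajectory} for an arbitrary starting level $x$. Reparametrizing the dynamics \eqref{eq:detrministicBass} by the adoption level (so that $dt = e^{p}/g(X)\,dX$), the problem becomes: maximize $\int_x^u p(X)\,dX$ over price curves $p(\cdot)$ and terminal level $u$, subject to the time budget $\int_x^u e^{p(X)}/g(X)\,dX = T$. A single Lagrange multiplier for the budget, pointwise optimization in $p$, and the free-endpoint (transversality) condition (which forces the terminal price to equal $1$) reproduce exactly the form $1+\log\!\big(g(y)/g(u)\big)$ for the optimal price at level $y\in[x,u]$, with the terminal level $u=\psi(x,T)$ now pinned down by
$$T = \frac{e\,(u-x)}{g(u)}. \qquad (\ast)$$
When $x=0$ this recovers \eqref{eq: intro XT_model_quadratic_sol1}. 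Substituting into the objective yields the explicit expression $\detV(x,T) = m\int_x^{u}\big[\,1+\log g(y)-\log g(u)\,\big]\,dy$ with $u=\psi(x,T)$.

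Next I would differentiate. Implicit differentiation of $(\ast)$ gives $u_x = g(u)/D$, where $D:=g(u)-(u-x)g'(u)$; a short computation shows the clean form $D = (1-x)(\alpha+\beta x)+\beta(u-x)^2>0$ for $x\in[0,1)$, which both guarantees $\psi$ is well defined and shows $u$ is increasing in $x$. The key simplification is that, after substituting $(u-x)g'(u)/g(u) = Tg'(u)/e$ from $(\ast)$ and using the identity $u_x\big(1-Tg'(u)/e\big)=1$, essentially all terms cancel and one is left with the remarkably clean first derivative
$$\frac{\partial}{\partial x}\detV(x,T) = m\big(\log g(u) - \log g(x)\big).$$
I expect getting this cancellation right --- carefully tracking the terms produced by the chain rule through $u=\psi(x,T)$ --- to be the most delicate bookkeeping step, though it is purely mechanical once $(\ast)$ is in hand. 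It is reassuring that this equals $m$ times one minus the optimal price at $x$, consistent with an envelope/shadow-price reading of $\detV_x$.

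Finally, differentiating once more and substituting $u_x=g(u)/D$, the concavity claim $\detV_{xx}\le 0$ reduces to $\tfrac{g'(u)}{D}\le \tfrac{g'(x)}{g(x)}$; since $D>0$ and $g(x)>0$, cross-multiplying (valid regardless of the signs of $g'$) shows this is equivalent to $\Phi(x,u)\le 0$, where
$$\Phi(a,b) := g'(b)g(a) - g'(a)g(b) + (b-a)\,g'(a)g'(b).$$
Here the structure of $g$ enters. I would note $\Phi(a,a)=0$ and compute, after the middle terms cancel, $\partial_b\Phi(a,b) = g''(b)\big[g(a)+(b-a)g'(a)\big]$. Since $g$ is a downward quadratic, $g''\equiv -2\beta<0$, while $g(a)+(b-a)g'(a)$ is the value at $b$ of the tangent line to $g$ at $a$, which lies above the concave $g$ and hence is $\ge g(b)>0$. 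Thus $\partial_b\Phi(a,b)<0$ for $b\ge a$, so $\Phi(a,b)\le \Phi(a,a)=0$ on $b\ge a$. As $u=\psi(x,T)\ge x$ always, this gives $\Phi(x,u)\le 0$ and hence $\detV_{xx}(x,T)\le 0$, completing the proof; the boundary cases $T=0$ (where $u=x$ and $\detV=0$) and $x=1$ are immediate.
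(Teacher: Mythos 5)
Your proof is correct, and it reaches the same central identity as the paper --- namely $\frac{1}{m}\frac{\partial}{\partial x}\detV(x,T)=\log g(u)-\log g(x)$ with $u=\Xopt_T(x)$ and $g(y)=(\alpha+\beta y)(1-y)$ --- but it gets there and finishes differently. The paper obtains the first derivative indirectly, via the HJB relation $\popt(x,T)=1-\frac{1}{m}\frac{\partial \detV}{\partial x}$ combined with the known closed form \eqref{eq:opt_p_model}, whereas you write $\detV(x,T)=m\int_x^u[1+\log g(y)-\log g(u)]\,dy$ explicitly and verify the cancellation by hand using $u_x\bigl(1-Tg'(u)/e\bigr)=1$; both routes lean equally on the optimal-control solution from \cite{experience_curve}, so the level of rigor is comparable. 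The real divergence is in the second-derivative step: the paper splits $\frac{1}{m}\detV_{xx}$ into the two groups $\frac{1}{1-x}-\frac{1}{1-\Xopt_T(x)}\frac{\partial \Xopt_T(x)}{\partial x}$ and $\frac{-\beta}{\alpha+\beta x}+\frac{\beta}{\alpha+\beta \Xopt_T(x)}\frac{\partial \Xopt_T(x)}{\partial x}$ and shows each is separately nonpositive by substituting the explicit quadratic-formula expression \eqref{eq: general XT_model_quadratic_sol2}, while you prove only their sum is nonpositive, reducing it to $\Phi(x,u)\le 0$ with $\Phi(a,b)=g'(b)g(a)-g'(a)g(b)+(b-a)g'(a)g'(b)$ and establishing this via $\Phi(a,a)=0$ and $\partial_b\Phi(a,b)=g''(b)[g(a)+(b-a)g'(a)]\le 0$, the bracket being the tangent line at $a$ which dominates the concave positive $g$. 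Your finish is cleaner and more portable: it uses only that $g$ is positive and concave on $[x,u]$ and that $u\ge x$, rather than the specific algebraic form of $\Xopt_T(x)$, which is in the spirit of the paper's own remark that this concavity argument might extend to other stateful models. The computations I checked --- $D=g(u)-(u-x)g'(u)=(1-x)(\alpha+\beta x)+\beta(u-x)^2>0$, the telescoping in $\detV_x$, and the equivalence of $\detV_{xx}\le 0$ with $\Phi(x,u)\le 0$ after cross-multiplying by $Dg(x)>0$ --- all hold.
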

%\sycomment{Do we need to use this new notation $T'$ here?}
%Given the above concavity property, the relation between $\detV(T)$ and $\stochV(T)$ used in Lemma \ref{lem: pseudo regret and regret} can be derived using Jensen's inequality and dynamic programming equations. 

Using these observations, we can prove the following relation between $\preg(T)$ and $\reg(T)$; all proofs can be found in the appendix.
\begin{lemma}[Pseduo-regret is close to regret]
\label{lem: pseudo regret and regret}
For any $T\ge 0$, 
$$\preg(T) \ge \reg(T)$$
\removedsy{And, for markets with $ T,m, \alpha, \beta$ such that   $m \ge \lfloor m\Xopt_T\rfloor + 2$, we have}
$$\preg(T)\le \reg(T)+ O\left(\sqrt{m}\log^2(m) + \log((\alpha+\beta)T)\log^2(m\log((\alpha+\beta)T)) \right).$$ 
%\sacomment{is there no way to simplify the above expression?}
%\sacomment{the conditions in $m\Xopt_T$ in the original upper bound on $\detV(T)$ can be removed by using price upper bound and bounding the pseudo regret trivially when $m\Xopt_T$ is small. This needs to be done.}
%    where the second inequality holds under a mild condition on the deterministic model. 
\end{lemma}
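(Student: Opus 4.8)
Since $\reg(T)=\stochV(T)-\rev(T)$ and $\preg(T)=\detV(T)-\rev(T)$, both claimed inequalities are statements purely about the two benchmarks: $\preg(T)-\reg(T)=\detV(T)-\stochV(T)$. Thus the plan is to reduce the lemma to the single two-sided bound
$0 \le \detV(T)-\stochV(T) \le O\big(\sqrt m\,\log^2 m + \log((\alpha+\beta)T)\,\log^2(m\log((\alpha+\beta)T))\big)$,
i.e. that the deterministic optimum is a valid but not-too-loose upper bound on the stochastic optimum. The left inequality gives $\preg(T)\ge\reg(T)$ and the right one gives the approximation bound.

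For $\detV(T)\ge\stochV(T)$ I would use the deterministic optimal revenue as a potential (value) function for the stochastic control problem. Write $W(x,s):=\detV(x,s)$ and recall it satisfies the deterministic dynamic-programming relation $\partial_s W(x,s)=\max_{p}\{p\,\lambda(p,x)+\tfrac1m\lambda(p,x)\,\partial_x W(x,s)\}$ with $W(x,0)=0$. Fix any admissible policy, let $R_t$ be the revenue collected by time $t$ and $X_t=d_t/m$, and consider $\Phi_t:=W(X_t,T-t)+R_t$. Applying Dynkin's formula to the non-homogeneous Poisson jump process, the drift of $\Phi_t$ is $-\partial_s W(X_t,T-t)+\lambda_t\big(W(X_t+\tfrac1m,T-t)-W(X_t,T-t)\big)+p_t\lambda_t$. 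The crucial step is to bound the jump increment by its first-order term: by Lemma~\ref{lem: concavity of deterministic value function}, $W(\cdot,T-t)$ is concave, so $W(X_t+\tfrac1m,T-t)-W(X_t,T-t)\le\tfrac1m\partial_x W(X_t,T-t)$. Substituting this and invoking the dynamic-programming inequality at the policy's own price $p_t$ shows the drift is nonpositive, so $\Phi_t$ is a supermartingale. Hence $E[R_T]=E[\Phi_T]\le\Phi_0=W(0,T)=\detV(T)$, and taking the supremum over policies gives $\stochV(T)\le\detV(T)$.

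For the right inequality I would lower-bound $\stochV(T)$ by the expected revenue of one concrete policy: the feedback rule $p_t=\pcurve(X_t,\alpha,\beta)$ that posts the deterministic optimal-curve price from \eqref{eq: opt_p_model_in_opt_trajectory} evaluated at the current stochastic level $X_t=d_t/m$; since $\stochV(T)$ is the optimum it dominates this policy's revenue. The key structural observation is that under this rule the per-capita drift of $X_t$, namely $\tfrac1m\lambda(\pcurve(X_t),X_t)=e^{-\pcurve(X_t)}(\alpha+\beta X_t)(1-X_t)$, is exactly the vector field of the deterministic ODE \eqref{eq:detrministicBass}, so $X_t$ is a Poisson-driven ("fluid") version of the same dynamics that generate $\Xdet_t$. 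I would then (i) prove a high-probability trajectory concentration bound $\sup_{t\le T}|X_t-\Xdet_t|=\tilde O(1/\sqrt m)$, by writing $X_t-\Xdet_t$ as a compensated-Poisson martingale plus a drift difference controlled by a Lipschitz/Gr\"onwall argument and applying a Freedman/Bernstein maximal inequality together with a peeling over $O(\log m)$ geometric scales (this peeling is what produces the $\log^2 m$ factors); and (ii) convert trajectory closeness into a revenue bound by writing both $\detV(T)=\int_0^T\pcurve(\Xdet_t)\,\lambda(\pcurve(\Xdet_t),\Xdet_t)\,dt$ and the policy's expected revenue $E\!\int_0^T\pcurve(X_t)\,\lambda(\pcurve(X_t),X_t)\,dt$ as integrals of the same integrand along the two trajectories, and bounding the difference by the deviation. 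Since each of the $\Theta(m)$ adopters contributes a price fluctuation of order $|X_t-\Xdet_t|=\tilde O(1/\sqrt m)$, the accumulated revenue gap is $\tilde O(\sqrt m)$, the leading error term.

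The main obstacle — and the origin of the second error term $\log((\alpha+\beta)T)\,\log^2(m\log((\alpha+\beta)T))$ — is the behavior near the two ends of the adoption range, where $\pcurve(x)$ and the logarithmic elasticity factor in $\lambda$ degenerate: near $x\approx\Xopt_T$ the optimal price and the curvature $\partial_{xx}\detV$ blow up, while near $x\approx 0$ the price attains its extreme value of order $\log((\alpha+\beta)T)$. In these regimes the clean $1/\sqrt m$ concentration and the Lipschitz transfer from trajectory to revenue both break down, so I would isolate a small boundary window and bound its contribution crudely by (number of adopters in the window) $\times$ (maximum price), which yields the stated $\log\cdot\log^2$ term. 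I would separately dispose of the low-probability event on which concentration fails (using the uniform boundedness of prices over the relevant range together with the smallness of the failure probability) and account for the $O(1)$ fluid-versus-discrete error between fractional and integer adoptions. Collecting the bulk $\tilde O(\sqrt m)$ term with the boundary $\log$-term gives the desired bound on $\detV(T)-\stochV(T)$, and hence the lemma.
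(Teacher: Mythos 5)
Your reduction of the lemma to the two-sided bound $0\le \detV(T)-\stochV(T)\le \tilde O(\sqrt m)$ is exactly the paper's decomposition (Lemmas~\ref{lem: deterministic value better than stochastic value} and~\ref{lem:smalloptimalgap}). For the left inequality your Dynkin/supermartingale argument is the paper's backward induction in continuous-time clothing: both hinge on the concavity of $\detV(\cdot,T)$ from Lemma~\ref{lem: concavity of deterministic value function} (the paper applies Jensen to $\detV(x+\Delta/m,\cdot)$, you apply the secant-below-tangent inequality to the unit jump); this half is essentially identical and correct. For the right inequality you pick the same policy as the paper (post $\pcurve(X_{t^-},\alpha,\beta)$, constant between arrivals), but you then run generic fluid-limit machinery (Gr\"onwall, Freedman, peeling) where the paper exploits a structural fact that trivializes the concentration: under the deterministic-optimal price curve the arrival rate $\lambda(\pcurve(x),x)=\tfrac1e(\alpha+\beta\Xopt_T)(1-\Xopt_T)m=m\Xopt_T/T$ is \emph{constant in $x$}, so the inter-arrival times are i.i.d.\ exponentials and the whole question collapses to ``how many of the $m\Xopt_T$ customers arrive by time $T$'' (answer: all but $\tilde O(\sqrt m)$, by Lemma~\ref{lem: general exponential arrival time martingale concentration bound}). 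The revenue gap is then just the sum of the prices of the missed customers, each at most $\pcurve_{max}=O(\log((\alpha+\beta)T))$ by Lemma~\ref{lem:priceupperbound}, plus an $O(\log m)$ discretization error. Note also that the paper's second error term $\log((\alpha+\beta)T)\log^2(m\log((\alpha+\beta)T))$ is not a boundary-window contribution; it is the trivial bound $m\Xopt_T\,\pcurve_{max}$ used in the degenerate regime where $m\Xopt_T$ is too small for the concentration argument to apply at all.

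The one step in your sketch that would fail as written is the Lipschitz transfer from trajectory deviation to revenue deviation. Comparing $\int_0^T \pcurve(X_t)\lambda\,dt$ with $\int_0^T\pcurve(\Xdet_t)\lambda\,dt$ at matched times, or charging each adopter a price error $|\partial_x\pcurve|\cdot|X_t-\Xdet_t|$, picks up the factor $\sup_{x\le \Xopt_T}|\partial_x\pcurve(x)|\ge \frac{1}{1-\Xopt_T}=\Theta((\alpha+\beta)T)$, so the bulk term becomes $\tilde O(\sqrt m\,(\alpha+\beta)T)$ --- polynomial rather than logarithmic in $T$, and weaker than the lemma claims. Your boundary-window remark does not repair this, because the uniform derivative bound over the whole range $[0,\Xopt_T]$ is already of order $(\alpha+\beta)T$. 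The fix is precisely the paper's accounting: compare the two revenues as sums over the adoption index rather than over time, so the gap is $\sum_{d=d_T+1}^{m\Xopt_T}\pcurve(\tfrac{d-1}{m})\le \tilde O(\sqrt m)\cdot\pcurve_{max}$ (equivalently, integrate $|\partial_x\pcurve|$ over the adoption range, which yields $\log\frac{1}{1-\Xopt_T}=O(\log((\alpha+\beta)T))$ instead of its supremum). With that substitution your argument goes through and matches the stated bound.
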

%\sacomment{I have rewritten above results in terms of $\preg(T)$ instead of $\detV(T)$.}
\noindent A simpler summary of this result can be stated as 
$$\reg(T) \le \preg(T) \le \reg(T)+\tilde O(\sqrt{m}).$$
 Therefore, an upper bound on $\preg(T)$ implies the same upper bound on $\reg(T)$. And, a lower bound on $\preg(T)$ implies a lower bound on $\reg(T)$ within $\tilde O(\sqrt{m})$. In the rest of the paper, we therefore derive bounds on pseudo-regret only.

%Above relation between pseudo-regret and regret is proven by
%deriving a corresponding relation between $\detV(T)$ and $\stochV(T)$. To prove this, we demonstrate
%\sacomment{proof needs to be added to the appendix.}\sycomment{Added.}

%\newshipra{Note that given the relation between regret and pseudo-regret proven in Lemma \ref{lem: pseudo regret and regret}, any upper or lower bound on $\preg(T)$ of order $m^{2/3}$ directly implies an equivalent bound on $\reg(T)$. Therefore, in rest of the paper, we state and derive these bounds for  pseudo-regret only. }

\paragraph{\underline{Notation conventions}} Throughout the paper, if a potentially fractional number (like $m \Xopt_T$, $m^{2/3}$, $\gamma m$ etc.) is used as a whole number (for example as number of customers or as boundary of a discrete sum) without a floor or ceiling operation, the reader should assume that it is rounded {\it down} to its nearest integer.

%%%%%%%%%%%%%%%%%%%%%
% Proposed Algorithm %
%%%%%%%%%%%%%%%%%%%%%

\section{Algorithm Description}
\label{sec: proposed algorithm}
%Before describing the algorithm we first present some results that motivated our algorithm. 

%In particular, Lemma~\ref{lem: deterministic value better than stochastic value} and \ref{lem:smalloptimalgap} implies that $\detV(0, T) \geq \stochV(0, T) \geq \detV(0, T)-\tilde O(\sqrt{m})$. As mentioned earlier, both the upper bound and the lower bound we provide scale as $\tilde O(m^{2/3})$ 
%This means that our choice of benchmark is appropriate for both the upper bound and the lower bound, because the gap between the deterministic and stochastic optimal revenue is a lower order term. This also suggests that offering the deterministic optimal prices should probably give us good revenues. Before we make this intuition more concrete, we state a key result that is used in the proof of Lemma~\ref{lem: deterministic value better than stochastic value}:
The concavity property of deterministic optimal revenue and the implied relation between pseudo-regret and regret derived in Lemma \ref{lem: pseudo regret and regret} suggests that deterministic optimal revenue provides a benchmark that is comparable to the stochastic optimal revenue. Further, this benchmark is more tractable than the stochastic optimal due to the known\synew{ and relatively simple} analytical expressions for optimal pricing policy, as stated in Section \ref{sec: problem formulation}. Using these insights, our algorithm is designed to essentially follow  (an estimate of) the optimal price curve for the deterministic model  at every time. 
We believe this approach could be applied to other finite
horizon MDP problems where such concavity property holds, which may be of independent interest. We now describe our algorithm.
%Through out our discussion of the algorithm and the regret upper bound
%(Section~\ref{sec: proposed algorithm}, \ref{sec: regret upper bound}), let
%$t_d$ denote the time when customer $d$ adopts the product in the stochastic Bass model.
\paragraph{\underline{Algorithm outline}} 
Our algorithm is provided, as input, the market size $m$, 
%the model parameter $k$, 
and a constant upper bound $\phi$ on $\alpha+\beta$. For many applications, it is reasonable to assume that $\alpha +\beta \le 1$, i.e., $\phi=1$, but we allow for more general settings. The market parameters $\alpha, \beta$ are unknown to the algorithm.

The algorithm alternates between using a low ``exploratory'' price aimed at
observing demand in order to improve the estimates of model parameters, and
using (a lower confidence bound on) the deterministic optimal prices for the estimated market parameters. Setting the exploratory price $\pe$ as $0$ suffices for our analysis, but more generally it can be set as any 
lower bound on the deterministic optimal prices, i.e., we just need $0\le \pe\leq \pcurve(x,\alpha, \beta), \ \forall x$. 
Using a non-zero exploratory price could be better in practice.

\removedshipra{Moreover, our algorithm keeps the price constant in between arrivals, i.e., we set the price for customer $d$ when customer $d-1$ arrives (at time $\tau_{d-1}$), and keep that price fixed until $d$ arrives. This might seem like a handicap on our algorithm, since the optimal pricing policy could potentially change the offered price in between arrivals. However, we are able to achieve good regret bound in spite of this restriction.}

Our algorithm changes price only on arrival of a new customer, and holds the price constant in between two arrivals. The prices are set as follows. The algorithm starts with using an exploratory price $\pe$ for the first
$\gm$ customers, where $\gamma=m^{-1/3}$.
The prices $p_1,\ldots, p_{\gm}$ and the observed arrival times $\tau_{1}, \ldots, \tau_{\gm}$ for the first $\gm$ customers %\removedshipra{observations $[(p_1, I_1), \ldots, (p_{\lfloor \gamma m \rfloor}, I_{\lfloor \gamma m \rfloor})]$ }
are then used to obtain an estimation of $\alpha$, and a high probability 
error bound $A$ on this estimate. The estimate of parameter $\alpha$ is not
updated in subsequent time steps.
The algorithm then proceeds in epochs $i=1, 2, \ldots K$. Let $\gamma_i =
2^{i}\gamma$. Epoch $i$ starts right after customer $\gamma_im$ arrives and
ends either when customer $2\gamma_im$ arrives, or when we reach the end of
the planning horizon $T$.
In the beginning of each epoch $i$, the exploratory price $\pe$ is again
offered for the first $\gm$ customers in that epoch. The arrival times
observed from these customers are used to update the estimate of the $\beta$
parameter and its' high probability error bound $B_i$. For the remaining
customers in epoch $i$,
%(i.e., for $d= \gamma_im+\gamma m +1,\ldots,\gamma_{i+1}m$), 
the algorithm offers a lower confidence bound $\pcurvei$ on the
deterministic optimal price $\pcurve\left(\frac{d-1}{m},\alpha, \beta\right)$
computed using the current estimates of $\alpha, \beta$ and their error bounds.

\begin{algorithm}[t]
    \SetAlgoLined
    Input: Horizon $T$, market size $m$, $\delta \in (0,1)$, a constant upper bound $\phi$ on $\alpha+\beta$.\\
    Initialize: $X_0=0$, $\gamma = m^{-1/3}$, $\gamma_{i}=2^{i-1}\gamma$ for $i=1, 2,\ldots$, and exploratory price $p_0=0$ \\
    Offer $\pe$ for the first $\gm$ customers. \\
    Use observed arrival times of the first $\gm$ customers to compute $\hat \alpha$ %Estimate $\alpha$   
    according to \eqref{eq:alphahat}.
    Also, obtain a high
    probability estimation bound on $|\alpha-\hat \alpha|\leq A$ where $A$ is as defined in \eqref{eq:ABdef}.\\
    %Lemma~\ref{lem:alphahatbound}.\\
    \For{$i\gets 1, 2, 3, ...$}{
        Post price $\pe$ for customers $d=\gim + 1,\ldots, \gim+\gm$.\\
        Use $\hat \alpha$ along with the observed arrival times of the first $\gm$ customers in the current epoch to 
        calculate a new estimate $\hat \beta_i$ according to \eqref{eq:betahat}. Also
        update the high probability bound $|\beta - \hat
        \beta_i| \leq B_i$, with $B_i$ as defined in \eqref{eq:ABdef}.\\ %Lemma~\ref{lem:betahatbound}.\\
        Use $\hat \alpha, A, \hat \beta_i, B_i$ to compute price $\pcurvei(\frac{d-1}{m})$ according to \eqref{eq: lcb on opt price}. 
         Post price  $\pcurvei(\frac{d-1}{m})$
        for $d=\gim+\gm+1, \ldots, 2\gim$, or until end of time horizon $T$.\\
        If $\gamma_i\geq \frac{1}{3}$ then extend the current epoch until the
        end of the planning horizon.
    }
    \caption{Dynamic Learning and Pricing under Bass Model}
    \label{alg:main}
\end{algorithm}

\paragraph{\underline{Estimation and price computation details}} Since our algorithm fixes prices between two arrivals, the arrival rate of the (Poisson) adoption  process  is constant in between arrivals, which in turn implies that the inter-arrival times are exponential random variables. %with rate specified by the stochastic Bass model. 
This greatly simplifies the estimation procedure for $\alpha, \beta$, which we describe next.

Estimates  $\hat \alpha$, and $\hat \beta_i$ for every epoch $i$ are calculated 
using the following equations which match the observed inter-arrival times to the expected value of the corresponding exponential random variables:
\begin{equation}
    \label{eq:alphahat}
    \frac{1}{e^{-\pe} \hat \alpha m} \lfloor\gamma m\rfloor = \tau_{\gm}
\end{equation}
\begin{equation}
    \label{eq:betahat}
    \frac{\lfloor\gamma m\rfloor}{e^{-\pe}(\hat \alpha +\hat \beta_i \gamma_i)(1-\gamma_i)m} = \tau_{\gim+\gm} - \tau_{\gim} .
\end{equation}
Also, we define
\begin{equation}
        \label{eq:ABdef}
        \textstyle
        A\coloneqq\frac{8\phi}{(1-\gamma)^2}\sqrt{8\log(\frac{2}{\delta})}\ m^{-1/3}, \ B_i \coloneqq
    \frac{16\phi}{\gamma_i(1/3-\gamma)^2}\sqrt{8\log(\frac{2}{\delta})}\ m^{-1/3}.
    \end{equation}

Then, using concentration results for exponential random variables, %(refer to Lemma \ref{lem: general exponential arrival time martingale concentration bound}) 
we can show that the error in estimates $\hat \alpha$ and $\hat \beta_i$ are bounded $A$ and $B_i$ respectively.   Specifically, we prove the following result. The proof is in Appendix~\ref{sec: upper bound proofs}. %\sacomment{I moved these lemmas here since we are using the expression for $A, B_i$ in the algorithm.}

% \begin{restatable}{lemma}{alphahatbound}
%     \label{lem:alphahatbound}
%     Assuming that $m^{1/3} \geq \frac{16
%     (\alpha+\beta)}{\alpha}\sqrt{8\log(\frac{2}{\delta})}$,
%     then with probability $1-\delta$, $|\alpha-\hat \alpha|
%     %\leq
%     %\frac{8(\alpha+\beta)}{(1-\gamma)^2}\sqrt{8\log(\frac{2}{\delta})}m^{-1/3}
%     \leq A$.
% \end{restatable}
% \begin{restatable}{lemma}{betahatbound}
%     \label{lem:betahatbound}
%     Assuming that $m^{1/3} \geq
%     64\frac{(\alpha+\beta)^2}{\alpha^2}\sqrt{8\log(\frac{2}{\delta})}$, then for any $i=1,\ldots, K$, 
%     with probability $1-\delta$, $|\beta-\hat \beta_i|
%     %\leq
%     %\frac{16(\alpha+\beta)(1-\gamma_i)}{\gamma_i(1/3-\gamma)^2}\sqrt{8\log(\frac{2}{\delta})}m^{-1/3}
%     \leq B_i$.
% \end{restatable}
\begin{restatable}{lemma}{alphahatbetahatbound}
    \label{lem:alphahatbetahatbound}
    Assume $m^{1/3} \geq
    64\frac{(\alpha+\beta)^2}{\alpha^2}\sqrt{8\log(\frac{2}{\delta})}$. Then with probability $1-\delta$, $|\alpha-\hat \alpha|
    %\leq
    %\frac{8(\alpha+\beta)}{(1-\gamma)^2}\sqrt{8\log(\frac{2}{\delta})}m^{-1/3}
    \leq A$.
    And for any $i=1,\ldots, K$, 
    with probability $1-\delta$, $|\beta-\hat \beta_i|\leq B_i$.
\end{restatable}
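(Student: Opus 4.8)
The plan is to build everything on the observation made just before the lemma: since the algorithm holds the price fixed between consecutive arrivals, the adoption process is a Poisson process whose rate is constant on each inter-arrival interval, so the inter-arrival times are \emph{independent} exponential random variables. Concretely, for the $d$-th customer the waiting time $E_d := \tau_d - \tau_{d-1}$ is exponential with rate $\lambda(\pe,\tfrac{d-1}{m}) = me^{-\pe}(\alpha+\beta\tfrac{d-1}{m})(1-\tfrac{d-1}{m})$, and increments over disjoint blocks of customer indices are independent (independent increments of the Poisson process). Thus both $\tau_{\gm}=\sum_{d=1}^{\lfloor\gm\rfloor}E_d$ and $\tau_{\gim+\gm}-\tau_{\gim}=\sum_{d=\gim+1}^{\gim+\gm}E_d$ are sums of $\lfloor\gm\rfloor$ independent exponentials, which is exactly what makes the closed-form estimators \eqref{eq:alphahat}, \eqref{eq:betahat} analyzable.

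For the $\hat\alpha$ bound I would proceed in three steps. First, compare $\tau_{\gm}$ to the ``ideal'' value $\tfrac{\lfloor\gm\rfloor}{me^{-\pe}\alpha}$ that \eqref{eq:alphahat} implicitly assumes, splitting $\tau_{\gm}-\tfrac{\lfloor\gm\rfloor}{me^{-\pe}\alpha}$ into a \emph{bias} $E[\tau_{\gm}]-\tfrac{\lfloor\gm\rfloor}{me^{-\pe}\alpha}=\sum_d\big(\tfrac{1}{\lambda(\pe,(d-1)/m)}-\tfrac{1}{me^{-\pe}\alpha}\big)$ and a \emph{fluctuation} $\tau_{\gm}-E[\tau_{\gm}]$. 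The bias is controlled by the algebraic identity
$$\frac{1}{(\alpha+\beta x)(1-x)}-\frac1\alpha = \frac{x\,(\alpha-\beta(1-x))}{\alpha(\alpha+\beta x)(1-x)},$$
which shows each summand is $O\!\big(\tfrac{(\alpha+\beta)x}{\alpha^2}\big)$ uniformly for $x\in[0,\gamma]$; summing and using $\sum_{d}\tfrac{d-1}{m}\approx\tfrac12\gamma^2 m$ shows the bias contributes $O\!\big(\tfrac{\alpha+\beta}{\alpha^2}\big)m^{-1/3}$ to $|\tfrac1{\hat\alpha}-\tfrac1\alpha|$. Second, for the fluctuation I would invoke a Bernstein-type concentration inequality for sums of independent exponentials (isolated as an auxiliary lemma); since every rate satisfies $\lambda(\pe,x)\ge me^{-\pe}\alpha(1-\gamma)$, this gives $|\tau_{\gm}-E[\tau_{\gm}]|\lesssim \tfrac{\sqrt{\lfloor\gm\rfloor\log(2/\delta)}}{me^{-\pe}\alpha(1-\gamma)}$ with probability $1-\delta$, i.e. an $O\!\big(\tfrac{\sqrt{\log(2/\delta)}}{\alpha(1-\gamma)}\big)m^{-1/3}$ contribution to $|\tfrac1{\hat\alpha}-\tfrac1\alpha|$. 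Third, convert via $|\hat\alpha-\alpha|=\alpha\hat\alpha\,|\tfrac1\alpha-\tfrac1{\hat\alpha}|$; here the hypothesis $m^{1/3}\ge 64\tfrac{(\alpha+\beta)^2}{\alpha^2}\sqrt{8\log(2/\delta)}$ is precisely what forces $|\tfrac1{\hat\alpha}-\tfrac1\alpha|$ small enough that $\hat\alpha$ is bounded away from $0$ and $\alpha\hat\alpha\le 2\alpha^2$, so that (after absorbing constants and using $\alpha,\alpha+\beta\le\phi$) the bound closes to $|\hat\alpha-\alpha|\le A$ from \eqref{eq:ABdef}.

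For $\hat\beta_i$ the argument is parallel, applied to the epoch-$i$ exploration block, whose data are independent of the $\hat\alpha$ data by independent increments; I would carry out the bound conditionally on the good event $|\hat\alpha-\alpha|\le A$ and treat the fresh concentration over this block as the source of the new $1-\delta$. The rates $\lambda(\pe,x)$ for $x\in[\gamma_i,\gamma_i+\gamma]$ now cluster around $\lambda(\pe,\gamma_i)=me^{-\pe}(\alpha+\beta\gamma_i)(1-\gamma_i)$, so the same bias-plus-concentration analysis shows that the quantity $(\hat\alpha+\hat\beta_i\gamma_i)(1-\gamma_i)=\tfrac{\lfloor\gm\rfloor}{me^{-\pe}(\tau_{\gim+\gm}-\tau_{\gim})}$ defined by \eqref{eq:betahat} is within $O(\cdot)m^{-1/3}$ of $(\alpha+\beta\gamma_i)(1-\gamma_i)$. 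Solving for $\hat\beta_i$ and subtracting $\beta$ gives
$$\hat\beta_i-\beta = \frac{1}{\gamma_i}\left(\frac{(\hat\alpha+\hat\beta_i\gamma_i)(1-\gamma_i)-(\alpha+\beta\gamma_i)(1-\gamma_i)}{1-\gamma_i} - (\hat\alpha-\alpha)\right),$$
so the $\beta$ error inherits the product-estimation error (divided by $1-\gamma_i$) plus the already-bounded $\hat\alpha$ error, the whole thing amplified by $\tfrac1{\gamma_i}$. Tracking these two contributions, using $\gamma_i\le 1/3$ and the window width $\gamma$ to lower-bound the rate factor $(1-x)$, and bounding $\alpha,\beta$ by $\phi$, reproduces the $\tfrac1{\gamma_i}$ factor and the remaining constants in $B_i$ from \eqref{eq:ABdef}.

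The main obstacle is twofold. The first is the high-probability concentration of a sum of non-identically-distributed exponentials together with the non-negligible bias created by replacing the state-dependent rate $\lambda(\pe,x)$ by its value at a single adoption level; controlling this bias relies crucially on the exploration windows having width only $\gamma=m^{-1/3}$ in adoption level, so that the rate varies little across each window. The second is the conversion from reciprocal (rate) error to parameter error, which is valid only in a regime where the estimate is provably close to the truth — exactly the role of the hypothesis on $m^{1/3}$ — and, for $\hat\beta_i$, the careful propagation of the $\hat\alpha$ error and the $1/\gamma_i$ amplification that accounts for the $1/\gamma_i$ factor in $B_i$.
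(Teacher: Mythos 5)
Your proposal follows essentially the same route as the paper's proof: split the error in the observed arrival time over each exploration window into a bias term (from the rate $\lambda(\pe,x)$ varying over a window of width $\gamma=m^{-1/3}$ in adoption level) plus a concentration term for the sum of exponential inter-arrival times with a uniform rate lower bound, then invert to parameter scale using the largeness assumption on $m$, and for $\hat\beta_i$ propagate the $\hat\alpha$ error with the $1/\gamma_i$ amplification. The only nit is that the algorithm guarantees $\gamma_i\le 2/3$ (not $\le 1/3$), which is what yields the $(1/3-\gamma)$ rate lower bound via $1-\gamma_i-\gamma\ge 1/3-\gamma$; this does not affect the validity of your argument.
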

Note that only the estimate and error bounds ($\hat \beta_i$ and  $B_i$)  for the parameter $\beta$ are updated in every
epoch. The estimate and error bound ($\hat \alpha$ and $A$) for $\alpha$ is computed once in the
beginning and remains fixed throughout the algorithm. 
Given $\hat \alpha, A, \hat\beta_i B_i$, 
%(as defined above in Lemma \ref{lem:alphahatbound} and \ref{lem:betahatbound}), %and their high probability estimation bounds $|\alpha-\hat \alpha|\leq A$, $|\beta- \hat \beta_i|\leq B_i$, the lower confidence bound on the price is defined as
we define the price $\pcurvei(d/m)$ to be used by the algorithm in epoch $i$ as 
follows. For any $d\le m$,
\begin{equation}
    \label{eq: lcb on opt price}
    \begin{split}
    \pcurvei(\frac{d}{m}) &= \text{clamp}\left(\pcurve(\frac{d}{m}, \hat\alpha, \hat\beta_i) - \alphaL A - \betaL{i} B_i,\  [0, \pualgo] \right),\\ 
    \text{ where \ \ } & \alphaL = \frac{2}{\hat\alpha - A} + \frac{\hat\beta_i+B_i}{(\hat\alpha-A)^2},\\
    & \betaL{i}  = \frac{3}{\hat\alpha-A} + \frac{3}{\hat\beta_i-B_i}.
    \end{split}
\end{equation}
%\sacomment{add some explanation of the above expression.}
Here $\pcurve(\cdot, \cdot, \cdot)$ is the optimal deterministic price curve as given by \eqref{eq: opt_p_model_in_opt_trajectory}. In above, we clamped the price to the range $[0, \pualgo]$. That is, if the computed price is less than $0$ we set $\pcurvei$ to $0$; and if it is above  $\pualgo$, which is an upper bound on the optimal price (proven later in Lemma \ref{lem:priceupperbound}), we set $\pcurvei$ equal to this upper bound. 
%$\hat\alpha-A\leq0$, or $\hat\beta_i-B_i\leq0$, then we set $\pcurvei = 0$. Also, if the estimated price exceeds the price upper bound (proven later in Lemma \ref{lem:priceupperbound})

Later in Lemma~\ref{lem: lcb on opt
price}, we show that the quantities $\alphaL, \betaL{i}$ play the role of Lipschitz constants for the optimal price curve:  %Lipschitz constants
they provide high probability upper bounds on the derivatives of the optimal price curve
with respect to $\alpha, \beta$ respectively. 
%as we show  Later (in %Lemma \ref{lem: lcb on opt price} and 
Consequently (in Corollary~\ref{cor: lcb on opt price}), we can show that with high probability the price defined in \eqref{eq: lcb on opt price} will be lower than the corresponding deterministic optimal price. The reason 
that the algorithm uses  a lower confidence bound on the optimal price is that we want to acquire at least  as many customers in time $T$ as the optimal
trajectory. The intuition here is that losing customers (and the entire
revenue associated with those customers) can cost a lot more than losing a
little bit of revenue from each customer. 

Our algorithm is described in detail as Algorithm~\ref{alg:main}.

%\sacomment{$k$ is being used as both model parameter and epoch index. Use $i$ for epoch index? or change the model parameter.}

\section{Regret Upper Bound}
\label{sec: regret upper bound}
%\removedshipra{removed this assumption. this is not an "assumption" made for analysis. $\phi$ is an input to the algorithm. see addition to the algorithm inputs.}

%\removedshipra{
%For the upper bound results, we assume that the seller has access to an upper
%bound on the sum of the Bass model parameters:
%\begin{assumption}
%    There exists $\phi$ known to the seller such that $\phi \geq \alpha+\beta$.
%    \label{assum: upper bound on alpha plus beta}
%\end{assumption}
%}
The main result from this section is the following  upper bound on the pseudo-regret of the algorithm proposed in the previous section. Since pseudo-regret upper bounds regret (refer to Lemma \ref{lem: pseudo regret and regret}) it directly implies the same upper bound on $\reg(T)$. 
\begin{restatable}[Regret upper bound]{theorem}{mainupperbound}
    \label{thm:main}
    For any
    market with parameters $\alpha, \beta, \alpha+\beta \le \phi$, market size $m$, and time horizon $T$,
    Algorithm \ref{alg:main} achieves the following regret bound
    %$$\preg(T) \le \tilde O\left(\regretboundordertilde\right).$$
    %More precisely, 
    with probability $1-\delta$,  
    $$\textstyle 
    \preg(T)\le O\left(\regretboundorder\right) = \tilde{O}(m^{3/2}).
    $$
\end{restatable}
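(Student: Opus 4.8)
The plan is to work on a high-probability ``good event'' and reduce $\preg(T)=\detV(T)-\rev(T)$ to a per-customer price comparison. I would first rewrite the benchmark as a sum over the $m\Xopt_T$ customers captured by the optimal deterministic trajectory: setting $\bar p_d:=m\int_{(d-1)/m}^{d/m}\pcurve(x,\alpha,\beta)\,dx$, we have $\detV(T)=\sum_{d=1}^{m\Xopt_T}\bar p_d$, and each $\bar p_d$ agrees with $\pcurve((d-1)/m,\alpha,\beta)$ up to an $O(1/m)$ discretization term whose total contribution is lower order. Provided the algorithm collects at least $m\Xopt_T$ customers by time $T$ (so $d_T\ge m\Xopt_T$), nonnegativity of posted prices lets me drop the revenue from customers beyond $m\Xopt_T$ and obtain
\[
\preg(T)\ \le\ \sum_{d=1}^{m\Xopt_T}\bigl(\bar p_d-p_d\bigr),
\]
which I then split into the exploration customers (posted $\pe=0$) and the exploitation customers (posted $\pcurvei$).

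The main obstacle is showing $d_T\ge m\Xopt_T$ with high probability. The crucial structural fact is that the optimal deterministic price keeps the adoption rate constant, $e^{-\pcurve(x,\alpha,\beta)}(\alpha+\beta x)(1-x)=(\alpha+\beta\Xopt_T)(1-\Xopt_T)/e$, so by \eqref{eq: intro XT_model_quadratic_sol1} the expected time to reach level $m\Xopt_T$ \emph{at the optimal prices} is exactly $T$. On the good event every posted price is at most the optimal price (the exploratory price $\pe=0$ trivially, the exploitation price by Corollary~\ref{cor: lcb on opt price}), so by a coupling each inter-arrival time is dominated by an exponential with rate at least the optimal one. Because the lower-confidence-bound construction posts a price strictly below $\pcurve$ by a margin $\alphaL A+\betaL{i}B_i$, the expected arrival time at level $m\Xopt_T$ is in fact below $T$ by a buffer of order $m^{-1/3}$, whereas the standard deviation of this (sum of independent exponentials) arrival time is only of order $m^{-1/2}$. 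A Chernoff/Bernstein bound for sums of exponentials then makes $\Pr(d_T< m\Xopt_T)$ exponentially small; the buffer dominating the fluctuation is exactly the reason for using the LCB. I expect this coupling-plus-concentration step, which entangles the random estimates (hence random prices) with the Poisson dynamics, to require the most care.

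With $d_T\ge m\Xopt_T$ in hand, I would bound the two pieces. For the exploration loss, each $\bar p_d\le\pualgo$ by Lemma~\ref{lem:priceupperbound}, and the number of exploratory customers is $\gm$ in the initial phase plus $\gm$ in each of the $K=O(\log m)$ epochs, i.e.\ $O(m^{2/3}\log m)$ in total, giving a contribution of $O\!\left(m^{2/3}\log(m)\,\pualgo\right)$. For the exploitation loss, Lemma~\ref{lem: lcb on opt price} certifies that $\alphaL,\betaL{i}$ are valid Lipschitz constants of $\pcurve$ in $(\alpha,\beta)$, so on the good event $\bar p_d-\pcurvei((d-1)/m)\le 2(\alphaL A+\betaL{i}B_i)+O(1/m)$. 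Summing over epoch $i$ (at most $\gim$ customers) yields a per-epoch bound of order $\gim\,\alphaL A+\gim\,\betaL{i}B_i$; substituting $A=\tilde O(\phi m^{-1/3})$ and $B_i=\tilde O(\phi m^{-1/3}/\gamma_i)$, the $B_i$ term loses its $\gamma_i$ dependence and contributes $\tilde O\!\left((\tfrac1\alpha+\tfrac1\beta)\phi\, m^{2/3}\right)$ per epoch, summing to $\tilde O\!\left(\log(m)(\tfrac1\alpha+\tfrac1\beta)\phi\, m^{2/3}\right)$ across epochs, while the $A$ term carries a factor $\gamma_i$ whose geometric sum is $O(1)$ and contributes $\tilde O\!\left((\tfrac1\alpha+\tfrac\beta{\alpha^2})\phi\, m^{2/3}\right)$.

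Finally I would assemble the bound. A union bound over the $K+1=O(\log m)$ estimation events of Lemma~\ref{lem:alphahatbetahatbound} and the single arrival-time concentration event (rescaling $\delta$ by $O(\log m)$, which only enters through $\sqrt{\log(1/\delta)}$ inside $A,B_i$) keeps the total failure probability at $\delta$. Adding the exploration and exploitation contributions and absorbing $\pualgo$ and $\sqrt{\log(1/\delta)}$ into a single $\log(T/\delta)$ factor gives
\[
\preg(T)\ \le\ O\!\left(\regretboundorder\right),
\]
as claimed.
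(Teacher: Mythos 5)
Your overall decomposition into exploration and exploitation losses, the per-epoch Lipschitz accounting via $\alphaL A+\betaL{i}B_i$, and the final union bound over $O(\log m)$ events all coincide with the paper's argument, and that part of your bound is correct. The genuine gap is in what you call the main obstacle: the claim that $d_T\ge m\Xopt_T$ with exponentially small failure probability because the algorithm posts prices ``strictly below $\pcurve$ by a margin $\alphaL A+\betaL{i}B_i$.'' The construction in \eqref{eq: lcb on opt price} guarantees no such margin. On the good event, Lemma~\ref{lem: lcb on opt price} gives only $|\pcurve(\cdot,\hat\alpha,\hat\beta_i)-\pcurve(\cdot,\alpha,\beta)|\le \alphaL A+\betaL{i}B_i$, so the posted price $\pcurve(\cdot,\hat\alpha,\hat\beta_i)-\alphaL A-\betaL{i}B_i$ lies anywhere in the interval $[\pcurve-2(\alphaL A+\betaL{i}B_i),\ \pcurve]$; when the estimated price overshoots the true one by nearly the full confidence width, the margin is essentially zero. (Nor can the exploration customers be relied on for the buffer: $\pcurve(x)$ is not bounded away from $0$ uniformly over admissible $\alpha,\beta$, and they are only an $m^{-1/3}\log m$ fraction of the market in any case.) With zero margin the expected arrival time of customer $m\Xopt_T$ is exactly $T$, its fluctuation is of order $T/\sqrt{m}$, and so $d_T<m\Xopt_T$ occurs with constant probability, with a shortfall of order $\sqrt{m}$. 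Your inequality $\preg(T)\le\sum_{d=1}^{m\Xopt_T}(\bar p_d-p_d)$ therefore does not hold on a $1-\delta$ event.

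The repair is exactly the paper's Step~3: prove only the weaker statement $d_T\ge m\Xopt_T-O(\sqrt{m\Xopt_T\log(1/\delta)})$ (Lemma~\ref{lem:finalstochXclosetooptimal}, which uses nothing beyond $\pcurvei\le\pcurve$ — hence arrival rate at least the constant optimal rate $m\Xopt_T/T$ — together with the one-sided concentration bound of Lemma~\ref{lem: general exponential arrival time martingale concentration bound}), and then charge each of the at most $O(\sqrt{m\log(1/\delta)})$ missed customers at the price cap $\pualgo$ from Lemma~\ref{lem:priceupperbound}. This contributes an additional $\tilde O(\sqrt{m}\log T)$ term to the regret, which is dominated by the $\tilde O(m^{2/3})$ main term, so the statement of Theorem~\ref{thm:main} is unaffected; but the term must appear in your decomposition.
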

%\sacomment{I simplified above. Please see if this makes sense. I believe this upper bound is only slightly larger than earlier bound of $\oldregretboundorder$.}
%\sacomment{$\phi$ looks like a function in above. May be we can use something like $L$ or $C$?}

%At a very high level, the intuition for the upper bound is this. We can show
%that using the observations collected in the exploration phase (as described
%in Algorithm~\ref{alg:main}) are enough to estimate the optimal deterministic
%price upto an $O(\frac{1}{\gamma_i}m^{-1/3})$ error. Since we use a lower
%confidence bound on the price, and that there are $\gamma_i m$ customers in
%each epoch, we lose $O(m^{2/3})$ revenue in each epoch. Furthermore, since we
%use the lower confidence bound on the prices, we can show that with high
%probability the final number of adoptions at the end of the planning horizon
%$T$ is not far from the deterministic optimal final adoption level. 
%Before we outline the proof of the main theorem,
\paragraph{\underline{Proof Intuition}} We first give an intuitive
explanation for why our algorithm works well. \sycomment{Also changed this part a bit:}\removedsy{As mentioned earlier in the introductory sections, one of the key challenges we
face in this problem is that we do not completely understand  the optimal pricing policy in the
stochastic Bass model even when the market parameters $\alpha, \beta$ are known. It is difficult to analyze regret when the benchmark is unknown. We get around this issue by replacing
the benchmark with a stronger one, but one that we have a better
understanding of. Specifically, the definition of pseudo-regret and Lemma~\ref{lem: pseudo regret and regret} 
allows us to replace the benchmark from the stochastic optimal revenue 
$\stochV(T)$ to deterministic optimal revenue $\detV(T)$.} 
\synew{As mentioned earlier in the introductory sections, there is a simple closed form expression for the optimal prices in the deterministic model for any $\alpha, \beta, T$ (see \eqref{eq: opt_p_model_in_opt_trajectory}--\eqref{eq: intro XT_model_quadratic_sol2}). Moreover, the definition of pseudo-regret and Lemma~\ref{lem: pseudo regret and regret} 
allows us to replace the stochastic optimal revenue 
$\stochV(T)$ with the deterministic optimal revenue $\detV(T)$. }
\removedsy{This is helpful because we have a \synew{simple} analytical
expression of the optimal price curve for any $\alpha, \beta, T$ in
the deterministic Bass model (see \eqref{eq: opt_p_model_in_opt_trajectory}--\eqref{eq: intro XT_model_quadratic_sol2}).} Our algorithmic strategy is then to follow (an
estimate of) the deterministic optimal price trajectory, and show that the resulting revenue is close to the deterministic optimal revenue with high probability. 

To prove this, we show that the prices $\pcurvei$ used by our algorithm were set so that they lower bound the deterministic optimal price  with high probability. Intuitively, using a lower price would ensure that the algorithm sees at least as many as optimal number of customer adoptions in horizon $T$, so that the gap in revenue can be bounded simply by the gap in prices paid by those customers. The
final piece of the puzzle is to show that we can learn or estimate $\alpha, \beta$ at a
fast enough rate so that the estimated prices are increasingly close to the optimal price and we do not lose too much revenue from learning.

\paragraph{\underline{Proof Outline}} In the remainder of this section we outline the proof of
Theorem~\ref{thm:main} in four steps. All the missing proofs from this section can be
found in Appendix~\ref{sec: upper bound proofs}.

\paragraph{{Step 1 (Bounding the estimation errors)}}  In Lemma \ref{lem:alphahatbetahatbound}
%\ref{lem:alphahatbound} and Lemma \ref{lem:betahatbound}, %We show that we can 
we provided a high probability upper bound on the estimation error in  $\alpha, \beta$ in each epoch of the algorithm.
    %$|\alpha-\hat
    %\alpha|\leq \tilde O(m^{-1/3})$, and $|\beta-\hat\beta_k| \leq \tilde
    %O(\frac{1}{\gamma_k}m^{-1/3})$. 
 Using these error bounds and the definition of price $\pcurvei$ in \eqref{eq: lcb on opt price}, we can show that the prices offered by the algorithm are close to optimal and lower bound the corresponding optimal price with high probability. Specifically, we prove the following result.

\begin{restatable}[Error bounds for estimated prices]{lemma}{lcbonoptprice}
    \label{lem: lcb on opt price}
    Given any market parameters $\alpha, \beta$ and their estimations
    $\hat\alpha, \hat\beta_i$ that satisfy $|\alpha-\hat \alpha|\leq A$, $|\beta-
    \hat \beta_i|\leq B_i$, $\hat\alpha-A > 0$, $\hat\beta_i-B_i>0$, then for every $d=0,\ldots, m-1$,
    $$ \textstyle \left|\pcurve(\frac{d}{m},\hat\alpha, \hat\beta_i) -
    \pcurve(\frac{d}{m}, \alpha, \beta)\right| \leq \alphaL A + \betaL{i}
    B_i,$$
    where $\alphaL, \betaL{i}$ are as defined in \eqref{eq: lcb on opt price}.
\end{restatable}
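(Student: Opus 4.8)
The plan is to reduce the claim to a bound on the partial derivatives of the optimal price curve with respect to the two parameters, and then integrate along the line segment joining $(\alpha,\beta)$ and $(\hat\alpha,\hat\beta_i)$. Fixing $d$ and writing $g(a,b):=\pcurve(\frac{d}{m},a,b)$, the fundamental theorem of calculus along this segment gives
\begin{equation*}
\left|g(\hat\alpha,\hat\beta_i)-g(\alpha,\beta)\right| \;\le\; |\hat\alpha-\alpha|\,\sup\left|\partial_a g\right| + |\hat\beta_i-\beta|\,\sup\left|\partial_b g\right|,
\end{equation*}
where the suprema are taken over the segment. Since $|\hat\alpha-\alpha|\le A$ and $|\hat\beta_i-\beta|\le B_i$ by hypothesis, it suffices to establish $\sup|\partial_a g|\le\alphaL$ and $\sup|\partial_b g|\le\betaL{i}$. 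Moreover, every point $(a,b)$ on the segment satisfies $a\ge\hat\alpha-A>0$ and $\hat\beta_i-B_i\le b\le\hat\beta_i+B_i$ (with $\hat\beta_i-B_i>0$), so the derivative bounds may be expressed through precisely the quantities $\hat\alpha-A$, $\hat\beta_i-B_i$, $\hat\beta_i+B_i$ appearing in the definitions of $\alphaL,\betaL{i}$ in \eqref{eq: lcb on opt price}.

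To compute $\partial_a g$ and $\partial_b g$ I would use the closed form \eqref{eq: opt_p_model_in_opt_trajectory}, in which the parameters enter both through the factor $\log\big((a+b\tfrac{d}{m})(1-\tfrac{d}{m})\big)$ and through the endpoint $\Xopt_T=\Xopt_T(a,b)$ defined implicitly by the fixed-point relation \eqref{eq: intro XT_model_quadratic_sol1}. Differentiating $g$ therefore requires $\partial_a\Xopt_T$ and $\partial_b\Xopt_T$, which I obtain by implicit differentiation of \eqref{eq: intro XT_model_quadratic_sol1}. A direct computation shows that the implicit-function denominator equals
\begin{equation*}
D \;=\; \frac{\Xopt_T}{1-\Xopt_T}+\frac{a}{a+b\,\Xopt_T},
\end{equation*}
a sum of two strictly positive quantities because $\Xopt_T\in[0,1)$. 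This positivity both justifies applying the implicit function theorem uniformly along the segment and supplies the lower bounds $D\ge \frac{a}{a+b\Xopt_T}$ and $D\ge \frac{\Xopt_T}{1-\Xopt_T}$ that are needed next.

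With the derivative expressions in hand, the last step is to bound $|\partial_a g|$ and $|\partial_b g|$ using $\Xopt_T\in[0,1)$, $\frac{d}{m}\in[0,1)$, and the positivity of $a+b\frac{d}{m}$ and $a+b\Xopt_T$. The ``direct'' terms $\frac{1}{a+b\,d/m}$ and $\frac{d/m}{a+b\,d/m}$ are immediately controlled by $\frac1a$ and $\frac1b$, while the contributions of $\partial_a\Xopt_T$ and $\partial_b\Xopt_T$ are controlled by combining the factor $\frac{1}{\Xopt_T}$ with the lower bounds on $D$; substituting $a\ge\hat\alpha-A$ and $\hat\beta_i-B_i\le b\le \hat\beta_i+B_i$ then produces the stated constants $\alphaL=\frac{2}{\hat\alpha-A}+\frac{\hat\beta_i+B_i}{(\hat\alpha-A)^2}$ and $\betaL{i}=\frac{3}{\hat\alpha-A}+\frac{3}{\hat\beta_i-B_i}$. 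I expect the main obstacle to be exactly this final estimation: a naive term-by-term bound on the partial derivatives produces factors $\frac{1}{1-\Xopt_T}$ that blow up as $\Xopt_T\to1$, so the argument must exploit the fact that the same growth appears in the denominator $D$ (hence in $\partial_a\Xopt_T$ and $\partial_b\Xopt_T$) and cancels, keeping the derivatives of the order $\frac1\alpha+\frac{\beta}{\alpha^2}$ and $\frac1\alpha+\frac1\beta$. Once this cancellation is made explicit, matching the specific Lipschitz constants is routine algebra.
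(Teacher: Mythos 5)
Your proposal follows essentially the same route as the paper: the paper also reduces the claim to Lipschitz bounds on $\partial\pcurve/\partial\alpha$ and $\partial\pcurve/\partial\beta$ (its Lemma~\ref{lem: opt price lipschitz bound}, applied via the mean value theorem over the segment of parameter values), and obtains those bounds by implicitly differentiating the fixed-point relation \eqref{eq: intro XT_model_quadratic_sol1} for $\Xopt_T$ and exploiting exactly the cancellation you identify, namely that $\frac{1}{1-\Xopt_T}\frac{\partial \Xopt_T}{\partial\alpha}\le\frac1\alpha$ and $\frac{1}{1-\Xopt_T}\frac{\partial \Xopt_T}{\partial\beta}\le\frac1\beta$ (its Lemma~\ref{lem: price lipschitz helper result}). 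The only cosmetic difference is that you integrate along the diagonal segment while the paper varies one parameter at a time; your final substitution $a\ge\hat\alpha-A$, $\hat\beta_i-B_i\le b\le\hat\beta_i+B_i$ to recover $\alphaL,\betaL{i}$ matches the paper's argument.
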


%The proof of above results can be found in Appendix~\ref{sec: estimation bounds}. 
From the expressions of $A$ (error bound for $\hat \alpha$) vs. $B_i$ (error bound for $\hat \beta_i$ in epoch $i$) in  \eqref{eq:ABdef}, 
%Lemma \ref{lem:alphahatbound} and Lemma \ref{lem:betahatbound},
observe that $A=\tilde{O}(m^{-1/3})$ and $B_i=\tilde{O}(\frac{1}{\gamma_i}m^{-1/3})$. Therefore, the estimation of $\beta$ is the bottleneck here: it has an extra
$\frac{1}{\gamma_i}$ factor in the error bound. This is because the imitation
factor $\beta$ is multiplied with the current adoption level in the
definition of the Bass model (see \eqref{eq:Bassnoprice}). This means that the
estimation error on $\beta$ is likely to be large when the adoption level is low.
This is why the algorithm needs to keep updating the estimate of $\beta$ in each epoch but not
$\alpha$. 

The above lemma and the definition of $\pcurvei$ immediately implies the following.
\begin{corollary}[Lower confidence bound on optimal price]
    \label{cor: lcb on opt price}
    Under the same assumptions as in Lemma~\ref{lem: lcb on opt price},  and given the definition of $\pcurvei$ in \eqref{eq: lcb on opt price}, we have that 
    for every $d=0,\ldots,m-1$,
    $\pcurvei(\frac{d}{m}) \leq \pcurve(\frac{d}{m}, \alpha, \beta)$.
\end{corollary}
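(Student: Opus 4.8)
The plan is to deduce the corollary directly from the one-sided form of the bound established in Lemma~\ref{lem: lcb on opt price}, together with a short check that the clamp in \eqref{eq: lcb on opt price} cannot push the posted price above the true optimum. First I would extract from the absolute-value bound in Lemma~\ref{lem: lcb on opt price} its upper half, namely
\begin{equation*}
\pcurve(\tfrac{d}{m},\hat\alpha,\hat\beta_i) - \alphaL A - \betaL{i} B_i \;\le\; \pcurve(\tfrac{d}{m},\alpha,\beta),
\end{equation*}
so that the unclamped argument of the clamp operation is already no larger than the true optimal price $\pcurve(\tfrac{d}{m},\alpha,\beta)$.

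Next I would verify that clamping this argument to the interval $[0,\pualgo]$ preserves the inequality. Writing $v := \pcurve(\tfrac{d}{m},\hat\alpha,\hat\beta_i) - \alphaL A - \betaL{i} B_i$ so that $\pcurvei(\tfrac{d}{m}) = \mathrm{clamp}(v,[0,\pualgo])$, I would treat the two cutoffs separately. For the upper cutoff, Lemma~\ref{lem:priceupperbound} gives $\pcurve(\tfrac{d}{m},\alpha,\beta)\le\pualgo$, and since $v\le\pcurve(\tfrac{d}{m},\alpha,\beta)$ we also have $v\le\pualgo$, so the upper clamp never activates. For the lower cutoff, either $v\ge 0$, in which case $\pcurvei(\tfrac{d}{m})=v\le\pcurve(\tfrac{d}{m},\alpha,\beta)$, or $v<0$, in which case $\pcurvei(\tfrac{d}{m})=0\le\pcurve(\tfrac{d}{m},\alpha,\beta)$, the last inequality holding because the optimal price curve is non-negative on $[0,1)$ (indeed, this is exactly the property $0\le\pe\le\pcurve(x,\alpha,\beta)$ invoked to justify the exploratory price). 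Combining the cases yields $\pcurvei(\tfrac{d}{m})\le\pcurve(\tfrac{d}{m},\alpha,\beta)$ for every $d=0,\dots,m-1$.

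There is no genuine obstacle here: the corollary is immediate once the Lipschitz slack $\alphaL A+\betaL{i}B_i$ has been subtracted off, which is precisely how $\pcurvei$ was constructed. The only points needing a line of justification are the two clamp cutoffs, and both reduce to facts already available in the paper -- the upper cutoff to the price-upper-bound Lemma~\ref{lem:priceupperbound}, and the lower cutoff to the non-negativity of the optimal price curve that underlies the choice $\pe=0$. I would therefore expect the final write-up to be only a few lines long.
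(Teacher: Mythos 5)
Your proof is correct and takes essentially the same route as the paper, which presents Corollary~\ref{cor: lcb on opt price} as an immediate consequence of Lemma~\ref{lem: lcb on opt price} and the definition of $\pcurvei$ in \eqref{eq: lcb on opt price}; your write-up merely makes explicit the one-sided half of the absolute-value bound and the two clamp cases. The only ingredients beyond the lemma are the non-negativity of $\pcurve(\cdot,\alpha,\beta)$ and the bound $\pcurve(\cdot,\alpha,\beta)\le\pu\le\pualgo$, which you correctly source from the paper's standing assumption $0\le\pe\le\pcurve(x,\alpha,\beta)$ and from Lemma~\ref{lem:priceupperbound} together with $\alpha+\beta\le\phi$.
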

\paragraph{Step 2 (Bounding the revenue loss due to lower price).}
    Using the fact that there are $\gamma_i m$ customers in epoch $i$, and the  price difference bound in Lemma~\ref{lem: lcb on opt price} %, and the estimation bounds mentioned 
    in Step 1, we can show that we lose at most 
    {$\tilde O\left(\gamma_i m \left(\alphaL m^{-1/3}
    +\betaL{i}\frac{1}{\gamma_i} m^{-1/3} \right)\right) = $ }
    $\tilde O\left(
    m^{2/3}\right)$ revenue per epoch. This loss bound per epoch is when compared to the optimal revenue earned
    from the same $\gamma_i m$ customers, assuming that the stochastic trajectory is given as much time as needed to reach the same number of customers. \removedshipra{Here $\alphaL, \betaL{i}$ are the Lipschitz constants
    for the optimal price function with respect to change in $\alpha, \beta$, as defined in
    \eqref{eq: lcb on opt price}. }
    More precisely, let $\rev_i$ denote the algorithm's revenue from the $(2\gim \wedge m\Xopt_T) - \gim$ customers in an epoch $i$ completed by the algorithm, and $\detV_i$ denote the potential revenue from those customers if they paid the deterministic optimal price instead. Then, we prove the following bound. 

\begin{restatable}{lemma}{epochbound}
    \label{lem:epochbound}
    %\removedsy{Assume that $m- \lfloor\Xopt_Tm\rfloor \geq 2$. }
    For any epoch $i$ in the algorithm, with probability $1-\delta$,
    $$
    \textstyle
        \detV_i -\rev_i  \le  O\left(\epochboundorder \right) = \tilde{O}(m^{2/3}).
    $$
    
\end{restatable}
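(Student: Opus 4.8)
The plan is to split the loss within epoch $i$ according to the two pricing phases the algorithm uses. Writing
$$\detV_i - \rev_i = \sum_d \left(\pcurve(\tfrac{d-1}{m}, \alpha, \beta) - p^{\mathrm{alg}}_d\right),$$
where the sum ranges over the customers served in the epoch and $p^{\mathrm{alg}}_d$ is the price charged to customer $d$, I would separate the summands into (i) the first $\gm$ exploratory customers, for whom $p^{\mathrm{alg}}_d = \pe = 0$, and (ii) the remaining at most $\gim$ exploitation customers, for whom $p^{\mathrm{alg}}_d = \pcurvei(\tfrac{d-1}{m})$. The evaluation point $(d-1)/m$ is exactly the adoption level just before customer $d$ arrives, so $\pcurve(\tfrac{d-1}{m},\alpha,\beta)$ is the deterministic optimal price for that customer and the decomposition is exact.

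For the exploration phase, each summand is simply $\pcurve(\tfrac{d-1}{m},\alpha,\beta)$, which I would bound by the optimal-price upper bound $\pu = \log(e+(\alpha+\beta)T)$ from Lemma~\ref{lem:priceupperbound}. Since there are only $\gm = m^{2/3}$ such customers, this phase contributes at most $O\!\left(m^{2/3}\log((\alpha+\beta)T)\right)$, which is absorbed into the claimed bound after noting that $\log((\alpha+\beta)T)$ is dominated, up to constants, by $\phi\log(T/\delta)$ and that the factor $(\tfrac1\alpha+\tfrac1\beta+\tfrac{\beta}{\alpha^2})\phi \ge 1$ (using $\alpha+\beta\le\phi$).

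For the exploitation phase, I would invoke Lemma~\ref{lem: lcb on opt price} on the high-probability event of Lemma~\ref{lem:alphahatbetahatbound}, where $|\alpha-\hat\alpha|\le A$, $|\beta-\hat\beta_i|\le B_i$, and the sample-size condition $m^{1/3}\ge 64\tfrac{(\alpha+\beta)^2}{\alpha^2}\sqrt{8\log(2/\delta)}$ guarantees $\hat\alpha-A>0$ and $\hat\beta_i-B_i>0$. On this event each per-customer gap is at most $\alphaL A + \betaL{i} B_i$, so the phase contributes at most $\gim(\alphaL A + \betaL{i} B_i)$. Substituting $A=\tilde O(m^{-1/3})$ and $B_i = \tilde O(\tfrac{1}{\gamma_i}m^{-1/3})$ from \eqref{eq:ABdef}, and using that on the good event the accurate estimates give $\alphaL = O(\tfrac1\alpha + \tfrac{\beta}{\alpha^2})$ and $\betaL{i} = O(\tfrac1\alpha+\tfrac1\beta)$, the crucial cancellation is that the $\tfrac{1}{\gamma_i}$ factor in $B_i$ is multiplied by the $\gim$ customers in the epoch, leaving both resulting terms at $\tilde O\!\left(m^{2/3}(\tfrac1\alpha+\tfrac1\beta+\tfrac{\beta}{\alpha^2})\phi\right)$ uniformly in the epoch index $i$. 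Combining the two phases, folding the $\sqrt{\log(1/\delta)}$ factors of $A,B_i$ together with the $\log T$ from the exploration bound into $\log(T/\delta)$, and applying a union bound over the estimation events, yields the stated $O(\epochboundorder)$ guarantee with probability $1-\delta$.

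The main obstacle, and the heart of why the scheme succeeds, is the exploitation bound: the error $B_i$ on $\hat\beta_i$ degrades like $\tfrac{1}{\gamma_i}$ at low adoption levels (because $\beta$ multiplies the adoption level in the rate), yet precisely because epoch $i$ contains $\gim$ customers this blowup cancels exactly, producing a per-epoch loss that does not grow with $i$. The delicate step in making this rigorous is controlling $\alphaL,\betaL{i}$ --- which are defined through the \emph{estimated} quantities $\hat\alpha-A$ and $\hat\beta_i-B_i$ --- by the \emph{true} parameters $\alpha,\beta$; this is exactly where the sample-size assumption of Lemma~\ref{lem:alphahatbetahatbound} is needed, ensuring the estimates are accurate enough that, e.g., $\hat\alpha-A\ge c\,\alpha$ and $\hat\beta_i-B_i\ge c\,\beta$ for an absolute constant $c>0$, so that the denominators in $\alphaL,\betaL{i}$ are bounded below by the true parameters up to constants.
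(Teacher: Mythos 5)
Your overall route is the same as the paper's: split the epoch into the $\gm$ exploratory customers (bounded by $\gm\cdot\pcurve_{max}$ with $\pcurve_{max}=O(\log((\alpha+\beta)T))$ from Lemma~\ref{lem:priceupperbound}) and the exploitation customers (bounded by $\gim(\alphaL A+\betaL{i}B_i)$ via Lemma~\ref{lem: lcb on opt price} on the good event of Lemma~\ref{lem:alphahatbetahatbound}, with the $1/\gamma_i$ in $B_i$ cancelling against the $\gim$ epoch length). However, there are two gaps.

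First, and more substantively, your claim that the sample-size condition of Lemma~\ref{lem:alphahatbetahatbound} guarantees $\hat\beta_i-B_i>0$ (and hence $\betaL{i}=O(\tfrac1\alpha+\tfrac1\beta)$) is false for early epochs. That condition controls only the concentration $|\beta-\hat\beta_i|\le B_i$; it does not make $B_i$ small relative to $\beta$. Since $B_i=\frac{16\phi}{\gamma_i(1/3-\gamma)^2}\sqrt{8\log(2/\delta)}\,m^{-1/3}$, in epoch $1$ we have $\gamma_1=m^{-1/3}$ and $B_1$ is a constant independent of $m$, typically exceeding $\beta$, so $\hat\beta_1-B_1$ can be negative and your bound on $\betaL{i}$ breaks. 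The paper handles this by requiring the separate condition $\gamma_i m^{1/3}\ge \frac{640\phi}{\beta}\sqrt{8\log(2/\delta)}$ for the main argument, and when it fails, falling back to the trivial bound $\pcurve_{max}\,\gamma_i m$, which is still $O(\log((\alpha+\beta)T)\frac{\phi}{\beta}\sqrt{\log(1/\delta)}\,m^{2/3})$ precisely because $\gamma_i m$ is small in that regime. Without this case analysis your argument does not cover the first epoch(s).

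Second, your assertion that the decomposition $\detV_i-\rev_i=\sum_d(\pcurve(\tfrac{d-1}{m},\alpha,\beta)-p^{\mathrm{alg}}_d)$ is exact conflicts with the paper's definition of $\detV_i$ as the integral $m\int_{\gamma_i}^{2\gamma_i\wedge\Xopt_T}\pcurve(x,\alpha,\beta)\,dx$; relating the integral to the Riemann-type sum requires the discretization bound (Corollary~\ref{cor: discretization is good enough}), which contributes an additional $\frac{\beta}{2\alpha}+\frac12\log(m)+3\pcurve_{max}$. This term is lower order and does not affect the final rate, but the step is not free and should be stated.
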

%\sacomment{I replaced the expression $O(\oldepochboundorder)$ by above.}

\paragraph{Step 3 (Bounding the revenue loss due to fewer adoptions).}
    %Lemma~\ref{lem:epochbound} 
    In the previous step, we upper bounded the potential revenue loss for the $\gim$  customers that arrive in any epoch $i$ of the algorithm. 
    %, assuming that we observe at least $\gamma_i m$ customers in epoch $i$. 
    However it is
    possible that the algorithm reaches the end of time horizon $T$ early and never even get to observe
    some customers, i.e. $d_T < \Xopt_Tm$. In that case the algorithm would incur an additional regret due to fewer adoptions. Therefore, we need to
    show that our total number of adoptions in time $T$ cannot be  much
    lower than that in the optimal trajectory. To show this, we use the observation made in Step 1 that in each epoch the algorithm offered a lower
    confidence bound on the optimal prices (note that the exploration price $p_0=0$
    is always a lower bound on the optimal prices). 
    Due to the use of lower prices, we can show that with high probability our final number of adoptions can be at most  $\tilde{O}(\sqrt{m})$ below the optimal number of adoptions in the deterministic Bass model. 
    %cannot be too low compared
    %to the optimal: $mX^*_T - d_T \leq \tilde O(\sqrt{m})$. 
    Further, we can prove an $O(\log(T))$ upper bound on the optimal price, which allows us to easily bound the revenue
    loss that may result from the small gap in number of adoptions.
    Lemma~\ref{lem:finalstochXclosetooptimal} and
    Lemma~\ref{lem:priceupperbound} make these results  precise.
\begin{restatable}{lemma}{finalstochXclosetooptimal}
    \label{lem:finalstochXclosetooptimal}
    If the seller follows Algorithm~\ref{alg:main}, then with probability at
    least $1-\delta\log(m)$, the number of adoptions at the end of time horizon $T$ is lower bounded as: 
    $$ \textstyle d_T\geq m\Xopt_T -
    \sqrt{8m\Xopt_T\log(\frac{4}{\delta})}.$$
\end{restatable}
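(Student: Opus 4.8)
The plan is to reduce the claim to a single Poisson concentration bound, exploiting a crucial simplification of the optimal trajectory. The starting observation is that \emph{along the deterministic optimal price curve the adoption rate is constant}: substituting the closed form \eqref{eq: opt_p_model_in_opt_trajectory} into $\lambda(p,x)=me^{-p}(\alpha+\beta x)(1-x)$ gives
\[
\lambda\big(\pcurve(x,\alpha,\beta),x\big)=\tfrac{m}{e}\,(\alpha+\beta\Xopt_T)(1-\Xopt_T),
\]
which does not depend on $x$, and by the fixed-point identity \eqref{eq: intro XT_model_quadratic_sol1} equals $\Lambda:=m\Xopt_T/T$. Consequently, a seller posting exactly the deterministic optimal price at every adoption level would generate a \emph{homogeneous} Poisson arrival process of rate $\Lambda$, whose adoption count by time $T$ is exactly $\mathrm{Poisson}(\Lambda T)=\mathrm{Poisson}(m\Xopt_T)$. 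This reference process is the benchmark I would dominate.

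Next I would transfer this to the algorithm via its lower-price property. Let $E$ be the ``good event'' on which all estimation bounds of Lemma~\ref{lem:alphahatbetahatbound} hold simultaneously over the $K=O(\log m)$ epochs; a union bound gives $\Pr(E)\ge 1-O(\delta\log m)$, which is the source of the $\log m$ factor in the stated failure probability. On $E$, Corollary~\ref{cor: lcb on opt price} yields $\pcurvei(d/m)\le\pcurve(d/m,\alpha,\beta)$ for every count $d$, and the exploratory price $\pe=0$ is also a lower bound on $\pcurve$; since $\lambda(p,x)$ is decreasing in $p$, the algorithm's instantaneous rate at any count $n\le m\Xopt_T$ is at least $\Lambda$. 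Because the algorithm holds price constant between arrivals, the $d$-th inter-arrival time is, conditionally on the past, exponential with a rate bounded below by $\Lambda$, so the hitting time $S_D=\sum_{d=1}^D W_d$ of level $D\le m\Xopt_T$ is stochastically dominated by a $\mathrm{Gamma}(D,\Lambda)$ variable. Using $\{d_T\ge D\}=\{S_D\le T\}$ and $\Pr(\mathrm{Gamma}(D,\Lambda)\le T)=\Pr(\mathrm{Poisson}(m\Xopt_T)\ge D)$, this gives, for any $D\le m\Xopt_T$,
\[
\Pr(d_T\ge D\mid E)\ \ge\ \Pr\big(\mathrm{Poisson}(m\Xopt_T)\ge D\big).
\]
Finally I would take $D=m\Xopt_T-\sqrt{8m\Xopt_T\log(4/\delta)}$ and apply the standard Chernoff lower-tail bound $\Pr(\mathrm{Poisson}(\mu)\le\mu-s)\le e^{-s^2/(2\mu)}$, which gives $\Pr\big(\mathrm{Poisson}(m\Xopt_T)<D\big)\le\delta$ comfortably; combining with $\Pr(E^c)\le O(\delta\log m)$ yields the claim.

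I expect the main obstacle to be the domination step rather than the concentration. The rates $\lambda_d$ are chosen \emph{adaptively}, since they depend on the random epoch-dependent estimates $\hat\beta_i$, and the bound $\lambda_d\ge\Lambda$ holds only on $E$ and only while the count stays below $m\Xopt_T$ (for counts above $m\Xopt_T$ the clamped prices need not lie below the formula value). Making the dominance rigorous therefore requires either conditioning carefully on $E$ and on the filtration generated by past arrivals so that each waiting time is a genuine exponential of rate $\ge\Lambda$, or a stopping-time coupling of $d_t$ with the rate-$\Lambda$ Poisson process up to the first time either reaches $m\Xopt_T$ — which suffices, because if the algorithm has already reached $m\Xopt_T$ the conclusion $d_T\ge m\Xopt_T-\sqrt{8m\Xopt_T\log(4/\delta)}$ holds trivially. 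Once this coupling is established, the Poisson tail estimate and the epoch union bound are routine.
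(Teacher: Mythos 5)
Your proposal is correct, and its skeleton matches the paper's: both arguments rest on the observation that the deterministic optimal price curve induces a constant arrival rate $\Lambda = m\Xopt_T/T$, that on the good estimation event the algorithm's prices lower-bound the optimal prices (Corollary~\ref{cor: lcb on opt price}, plus $\pe=0$ during exploration), hence every conditional inter-arrival rate is at least $\Lambda$ while the count stays below $m\Xopt_T$, and that this forces enough arrivals by time $T$. Where you diverge is the final concentration step. The paper bounds the hitting time $\tau_n$ directly using its Lemma~\ref{lem: general exponential arrival time martingale concentration bound}, a martingale/MGF concentration inequality for sums of conditionally exponential variables with adapted rates bounded below by $\underline{\lambda}$; that lemma is tailored precisely to the adaptive-rate setting, so no coupling is needed, and the constant $8$ in $\sqrt{8m\Xopt_T\log(\cdot)}$ comes out of its sub-exponential tail $\exp(-\epsilon^2\underline{\lambda}^2/(8n))$. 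You instead stochastically dominate the hitting time by a $\mathrm{Gamma}(D,\Lambda)$ variable, convert via the exact Gamma--Poisson duality, and finish with the Poisson lower-tail Chernoff bound $e^{-s^2/(2\mu)}$; this is cleaner and gives a slightly sharper tail, but it shifts the technical burden onto the dominance/coupling step for adaptively chosen rates, which you correctly identify as the crux (and which the paper's route sidesteps by baking adaptivity into its concentration lemma). Your handling of the failure probability --- union bound over the $O(\log m)$ epochs for the estimation event, plus one application of the tail bound --- mirrors the paper's $1-\delta(K+1)$ accounting, and your observation that the dominance is only needed up to the first time the count reaches $m\Xopt_T$ is a valid way to close the argument.
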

\begin{restatable}[Upper bound on optimal prices]{lemma}{priceupperbound}
    \label{lem:priceupperbound}
    All prices in the optimal price curve for deterministic Bass model are upper bounded as:
    % $$\pcurve(x, \alpha, \beta) \leq
    % 1+\log\left(\frac{2(\alpha+\beta)^2}{\alpha e}\right) +
    % \log(T+\frac{e}{\alpha+\beta}),
    % \text{ for any } T, \alpha, \beta, x. $$
    $$
    \pcurve(x, \alpha, \beta)\leq \log\left(e+(\alpha+\beta)T\right)
    $$
\end{restatable}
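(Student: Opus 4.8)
The plan is to work directly with the closed-form expression \eqref{eq: opt_p_model_in_opt_trajectory} for the optimal price curve together with the fixed-point characterization \eqref{eq: intro XT_model_quadratic_sol1} of $\Xopt_T$. Writing $g(y) := (\alpha+\beta y)(1-y)$, the optimal price is $\pcurve(x,\alpha,\beta) = 1 + \log\big(g(x)/g(\Xopt_T)\big)$, and since the optimal trajectory runs from adoption level $0$ up to $\Xopt_T$, it suffices to establish the claimed bound for $x\in[0,\Xopt_T]$. This restriction is genuinely needed: for $x$ beyond $\Xopt_T$, where $g$ may still be increasing toward its vertex, the expression $1+\log(g(x)/g(\Xopt_T))$ can exceed $\pu$. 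The overall strategy is to reduce the logarithmic inequality to a purely algebraic one and then bound a quadratic.

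First I would exponentiate. Since $\log$ is increasing, $\pcurve(x,\alpha,\beta)\le \pu$ is equivalent to $e\,g(x) \le \big(e+(\alpha+\beta)T\big)\,g(\Xopt_T)$. The fixed-point equation \eqref{eq: intro XT_model_quadratic_sol1} supplies the key identity $g(\Xopt_T) = (\alpha+\beta\Xopt_T)(1-\Xopt_T) = e\Xopt_T/T$ (we may assume $T>0$, the case $T=0$ being trivial since then $\Xopt_T=0$). Substituting this identity on both sides and cancelling the common factor $e/T$ reduces the target to the clean algebraic statement
\begin{equation*}
g(x) - g(\Xopt_T) \le (\alpha+\beta)\,\Xopt_T \qquad \text{for all } x\in[0,\Xopt_T].
\end{equation*}

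To prove this, I would substitute $u := \Xopt_T - x \in [0,\Xopt_T]$ and expand, obtaining $g(x) - g(\Xopt_T) = (\alpha-\beta)u + \beta u(2\Xopt_T - u)$. The second term has derivative $2\beta(\Xopt_T-u)\ge 0$ on $[0,\Xopt_T]$, so it is increasing and maximized at $u=\Xopt_T$, where it equals $\beta\Xopt_T^2 \le \beta\Xopt_T$ (using $\Xopt_T\le 1$); the first term is at most $(\alpha-\beta)_+\Xopt_T$. Adding these per-term maxima gives an upper bound on the sum. When $\alpha\ge\beta$ it becomes $(\alpha-\beta)\Xopt_T + \beta\Xopt_T = \alpha\Xopt_T \le (\alpha+\beta)\Xopt_T$, and when $\alpha<\beta$ it becomes $\beta\Xopt_T \le (\alpha+\beta)\Xopt_T$; in either case the desired inequality holds.

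The only real subtlety — the step I would be most careful about — is this final bounding of the quadratic. Bounding the cross term $2\beta\Xopt_T u$ and the square term $-\beta u^2$ separately loses a factor and yields only the weaker constant $(\alpha+2\beta)\Xopt_T$, which would degrade the logarithm to $\log(e+(\alpha+2\beta)T)$. Keeping them together as $\beta u(2\Xopt_T-u)$ and invoking $\Xopt_T^2\le\Xopt_T$ is precisely what produces the tight coefficient $(\alpha+\beta)$ matching $\pu$. Everything else is routine: the exponentiation is monotone, and no lower bound on $\Xopt_T$ is required because the term $g(\Xopt_T)$ is eliminated exactly through the fixed-point identity \eqref{eq: intro XT_model_quadratic_sol1}.
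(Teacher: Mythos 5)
Your proof is correct, but it takes a genuinely different route from the paper's. The paper bounds the two factors of the ratio separately -- it uses $\frac{\alpha+\beta x}{\alpha+\beta \Xopt_T}\le 1$ and $1-x\le 1$ to get $\pcurve(x,\alpha,\beta)\le 1+\log\bigl(\frac{1}{1-\Xopt_T}\bigr)$, and then invokes the comparison $\Xopt_T(0,\alpha,\beta)\le \Xopt_T(0,\alpha+\beta,0)=\frac{(\alpha+\beta)T}{(\alpha+\beta)T+e}$, justified by a monotonicity argument on the fixed-point equation \eqref{eq: intro XT_model_quadratic_sol1} (replacing $\alpha+\beta\Xopt_T$ by $\alpha+\beta$ increases $\Xopt_T/(1-\Xopt_T)$). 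You instead use the fixed-point identity $g(\Xopt_T)=e\Xopt_T/T$ to eliminate the denominator exactly and reduce the claim to the elementary quadratic inequality $g(x)-g(\Xopt_T)\le(\alpha+\beta)\Xopt_T$ on $[0,\Xopt_T]$, which you verify by the substitution $u=\Xopt_T-x$ and careful grouping of $\beta u(2\Xopt_T-u)$. Your version is fully self-contained -- it needs no comparison of $\Xopt_T$ across different parameter settings -- at the cost of slightly more algebra; the paper's version is shorter but leans on the auxiliary monotonicity fact about $\Xopt_T$ (of the same flavor as Corollary~\ref{cor: beta improves final XT}). Both arguments require $x\le\Xopt_T$ (the paper's in its first inequality, yours in the quadratic step), and both land on the same tight constant $\log(e+(\alpha+\beta)T)$.
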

These two results combined tell us that, compared to the optimal, we lose at
most $\tilde O(\sqrt{m}\log(T))$ revenue due to fewer adoptions. The dominant
term in regret comes from the seller losing roughly $\tilde O(m^{-1/3})$ revenue on each
customer, which led to $\tilde{O}(m^{2/3})$ revenue loss per epoch in Step 2. %Note that this was made possible by our deliberate choice to offer a lower confidence bound on the prices.

\paragraph{Step 4 (Putting it all together).} 
    Finally, we  put the previous steps together to prove Theorem \ref{thm:main}. Since the number of customers in each epoch 
    grows geometrically %with respect to the number of customers, and that
    and there are at most $m$ customers in total, the number of epochs is bounded by
    $O(\log(m))$. By Step 2, the algorithm loses at most $\approx m^{2/3}$
    revenue on adoptions in each epoch. By Step 3, it loses at most $\approx \sqrt{m} \log(T)$ revenue due to missed adoptions.  The total regret is therefore  bounded by $\approx (\log(m)m^{2/3}+\sqrt{m}\log(T))$.

\section{Regret Lower Bound}
\label{sec: regret lower bound}
We prove a lower bound on the regret of any dynamic pricing and learning algorithm under the following mild assumptions on algorithm design. 
%Through out our discussion of the lower bound, we also make two additional assumptions on the pricing policy:
\begin{assumption}
    \label{assum: const price in between arrivals}
    The algorithm can change price only on arrival of a new customer. The price is held constant between two arrivals. 
    %The price offered in between arrivals of customers is held constant. 
\end{assumption}

\begin{assumption}
    \label{assum:price upper bound}
    Given a planning horizon $T$, the price offered by the algorithm
    at any time $t\in [0,T]$ is upper bounded by $p_{max} \coloneqq \log(T) + C(\alpha, \beta)$, for some function $C$ of $\alpha, \beta$.
\end{assumption}
The above assumptions are indeed satisfied by Algorithm~\ref{alg:main} since it changes prices only on arrival of a new customer, and the prices offered are clamped to the range $[0,\log(e+\phi T)]$ (refer to \eqref{eq: lcb on opt price}) where $\phi$ is a constant upper bound on $\alpha+\beta$. These assumptions are also satisfied by the optimal dynamic pricing policy for the deterministic Bass model since the optimal prices are bounded by $\log(e+(\alpha+\beta)T)$ (refer to Lemma \ref{lem:priceupperbound}). Note that since customer arrivals are continuous in the deterministic Bass model, Assumption \ref{assum: const price in between arrivals} is vacuous in that setting.
%\sacomment{does this "constant" mean something independent of m and T, or independent of m,alpha, beta, T?}

%\synew{In the proof of Lemma~\ref{lem:smalloptimalgap} in Appendix~\ref{sec: concavity related proofs}, we showed that there exist policies that satisfy these two assumptions and achieve an expected revenue that is at most $O(\sqrt{m})$ additive factor away from the optimal. Since the lower bound result in this section is on the order of $O(m^{2/3})$, these assumptions do not impose a significant handicap on the pricing policy.}

We argue that these assumptions do not significantly handicap an algorithm and preserve the difficulty of the problem. 
Assuming an upper bound on the price is a common practice in dynamic pricing literature. Indeed, unlike most existing literature which assumes a constant upper bound, Assumption \ref{assum:price upper bound} allows the price to potentially grow with the planning horizon. Intuitively, given enough time to sell the product, the seller should be able to potentially increase prices in exchange for a slower
adoption rate.\footnote{
In Lemma~\ref{lem: const price ub and large T makes problem trivial} in Appendix~\ref{sec: auxiliary results}, we show that if a constant upper bound is required on the price, then  the problems becomes trivial for any $T\geq \Omega(\log(m))$).
}  Such a dynamics is observed in the optimal dynamic pricing policy for deterministic Bass model, where the price can grow with the time horizon. However, %as demonstrated earlier in Lemma~\ref{lem:priceupperbound}, 
the optimal prices are still uniformly upper bounded by $\pu$.
%for some constant $C$ 
%\sycomment{replace with: upper bounded by $O\left(\log(\frac{\alpha+\beta}{\alpha}T)\right)$}.

Furthermore, in the proof of Lemma \ref{lem: pseudo regret and regret} (specifically, refer to  Lemma~\ref{lem:smalloptimalgap} in Appendix~\ref{sec: concavity related proofs}), we show that there exist pricing policies in the stochastic Bass model that satisfy both the above assumptions and achieve an expected revenue that is at most $\tilde{O}(\sqrt{m})$ additive factor away from the deterministic optimal revenue $\detV(T)$. Since the lower bound provided in this section is of the order $m^{2/3}$, this indicates that removing these assumptions from algorithm design is unlikely to provide an advantage significant enough to overcome the lower bound derived here.

%This is intuitive because the seller can afford to have a slower adoption rate in exchange for higher prices paid per customer, if she has enough time to sell the product.  \synew{In fact, if there is a constant upper bound on the price (independent of $m$ and $T$), then one can show that the problems becomes trivial when $T\geq \Omega(\log(m))$ (see Lemma~\ref{lem: const price ub and large T makes problem trivial} in Appendix~\ref{sec: auxiliary results}).} On the other hand, this is not a significant restriction on the pricing policy because as we showed in
% \sycomment{Do we need to justify this assumption more?}

Our main result in the section is the following
regret lower bound on any algorithm under the above assumptions.
\begin{restatable}[Regret lower bound]{theorem}{thmmainlowerbound}
    % Under Assumption~\ref{assum:price upper bound}, 
    Fix any $\alpha>0,
    \beta>0$, and $T=\frac{2(1+\sqrt{2})e}{\alpha+\beta}$. Then, given any pricing algorithm satisfying Assumption \ref{assum: const price in between arrivals} and \ref{assum:price upper bound}, there exist  Bass model parameters $(\alpha,\beta')$ with 
    $\beta' \in [\beta,
    \beta+\frac{(\alpha+\beta)^2}{\alpha}]$ such that the expected regret of the algorithm in this market has the following lower bound:
    $$\textstyle \bbE[\preg(T)]\geq
    \Omega\left(\left(\frac{\alpha}{\alpha+\beta}\right)^{4/3}m^{2/3}\right).$$
    Here the expectation is taken with respect to the stochasticity in the
    Bass model as well as any randomness in the algorithm.
    %and a
    %distribution over Bass model market parameters such that for any algorithm,
    %where the expectation is taken with respect to both the choice of market
    %parameters as well as the stochasticity in the Bass model.
    \label{thm:mainlowerbound}
\end{restatable}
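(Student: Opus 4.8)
The plan is to use a two-point (Le Cam style) argument. I fix the baseline instance $\mathcal{I}_0=(\alpha,\beta)$ and introduce a competitor $\mathcal{I}_1=(\alpha,\beta')$ with $\beta'=\beta+\Delta$, where $\Delta=\frac{(\alpha+\beta)^2}{\alpha}$ is the largest perturbation allowed by the stated range. The two instances share the innovation coefficient $\alpha$ and differ only in the imitation coefficient, concentrating all the hardness in learning $\beta$ at low adoption levels — precisely the bottleneck identified after Lemma~\ref{lem: lcb on opt price}. I will show that no algorithm respecting Assumptions~\ref{assum: const price in between arrivals}--\ref{assum:price upper bound} can price near-optimally on both instances, so that $\max\{\bbE_{\mathcal{I}_0}[\preg(T)],\bbE_{\mathcal{I}_1}[\preg(T)]\}=\Omega(n)$ for $n=\left(\tfrac{\alpha}{\alpha+\beta}\right)^{4/3}m^{2/3}$; since both $(\alpha,\beta)$ and $(\alpha,\beta')$ lie in the advertised range, this yields the claimed bound.

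For the indistinguishability threshold, the key observation is that by Assumption~\ref{assum: const price in between arrivals} the price is constant between arrivals, so each inter-arrival time is exponential, and the adoption level when the $d$-th customer arrives is deterministically $x_d=(d-1)/m$ in \emph{both} instances — only the timing differs. Hence the per-customer likelihood ratio depends only on the rate ratio $r_d=\frac{\alpha+\beta x_d}{\alpha+\beta' x_d}$, which is independent of the (history-dependent) price chosen by the algorithm. The KL divergence between the laws of the first $n$ arrival epochs is therefore the deterministic quantity $\sum_{d\le n}\bigl(r_d-1-\log r_d\bigr)$. Since $x_d\le n/m=O(m^{-1/3})$, each term is $\tfrac12\bigl(\tfrac{\Delta x_d}{\alpha}\bigr)^2(1+o(1))$, so the total KL is of order $\frac{\Delta^2 n^3}{\alpha^2 m^2}$. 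Choosing $n=\Theta\!\bigl((\alpha m/\Delta)^{2/3}\bigr)$ keeps this below a small constant, and substituting $\Delta=\frac{(\alpha+\beta)^2}{\alpha}$ gives exactly $n=\Theta\!\bigl((\tfrac{\alpha}{\alpha+\beta})^{4/3}m^{2/3}\bigr)$; Pinsker's inequality then bounds the total variation between the two observation laws over the first $n$ customers away from $1$.

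Next I quantify the optimal-price separation. Using \eqref{eq: intro XT_model_quadratic_sol1} to simplify the denominator of \eqref{eq: opt_p_model_in_opt_trajectory}, the optimal curve is $\pcurve(x,\alpha,\beta)=1+\log\frac{T(\alpha+\beta x)(1-x)}{e\,\Xopt_T}$, so for small $x$, $\pcurve(x,\alpha,\beta)-\pcurve(x,\alpha,\beta')=\log\frac{\Xopt_T(\beta')}{\Xopt_T(\beta)}+O(\tfrac{\Delta x}{\alpha})$. The choice $T=\frac{2(1+\sqrt2)e}{\alpha+\beta}$ pins $\Xopt_T(\beta)$ to a value bounded away from $0$ and $1$, and $\Delta$ is calibrated so that the induced gap in final adoption levels forces $\log\frac{\Xopt_T(\beta')}{\Xopt_T(\beta)}=\Theta(1)$; thus there is a constant $c_0=\Theta(1)$ with $\pcurve(x,\alpha,\beta)-\pcurve(x,\alpha,\beta')\ge c_0$ uniformly for $x\in[0,n/m]$ once $m$ is large (the $O(\Delta x/\alpha)$ term vanishes). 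Because the two observation laws are close in total variation, the algorithm behaves essentially the same (in distribution) on both instances over the first $n$ customers, yet the two optimal curves are separated by $c_0$. Comparing the algorithm's prices to the midpoint $\tfrac12\bigl(\pcurve(\cdot,\alpha,\beta)+\pcurve(\cdot,\alpha,\beta')\bigr)$, on whichever instance matches its typical behavior it must deviate from the true optimal price by at least $c_0/2$ on a constant fraction of these customers; the two-point lemma then transfers this to $\max\{\bbE_{\mathcal{I}_0}[\preg(T)],\bbE_{\mathcal{I}_1}[\preg(T)]\}=\Omega(n)$.

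The main obstacle is the last step: translating a constant (macroscopic) per-customer price deviation into a genuine $\Omega(n)$ revenue loss against the benchmark $\detV(T)$. The difficulty is that the Bass dynamics feed the price back into the future adoption trajectory — underpricing accelerates adoption and gains later customers, while overpricing delays it and loses them — so a naive per-customer accounting does not by itself lower bound the regret, and both signs of the deviation must be handled uniformly. I expect to control this by combining the optimality and curvature of the deterministic revenue as a functional of the price curve (so that a fixed-size deviation costs at least a fixed multiple of $c_0^2$ per mispriced customer, regardless of sign) with the concavity of $\detV(x,T)$ from Lemma~\ref{lem: concavity of deterministic value function} and Assumption~\ref{assum:price upper bound} (with $T$ constant, each gained or lost customer is worth only $O(1)$, bounding the compensating effect of the altered trajectory). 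Making this quantitative, while simultaneously keeping the argument inside the stochastic model and the change-of-measure bookkeeping of the Le Cam framework, is the technically delicate part of the proof.
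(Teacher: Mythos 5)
Your outline follows the same route as the paper's proof: a two-point argument between $(\alpha,\beta)$ and $(\alpha,\beta+\epsilon)$ with $\epsilon=\frac{(\alpha+\beta)^2}{\alpha}$, the same KL computation exploiting that under Assumption~\ref{assum: const price in between arrivals} the adoption level at the $d$-th arrival is deterministically $(d-1)/m$ so only exponential inter-arrival times carry information, the same horizon $n=\Theta\bigl((\tfrac{\alpha}{\alpha+\beta})^{4/3}m^{2/3}\bigr)$, and a $\Theta(1)$ separation of optimal prices at $T=\Theta(1/(\alpha+\beta))$ (the paper gets this via an explicit bound $\frac{\partial\popt}{\partial\beta}\le\frac{-\alpha e}{4(\alpha+\beta)^3T}$ in Lemma~\ref{lem: optimal price difference deterministic}, rather than via $\log\frac{\Xopt_T(\beta')}{\Xopt_T(\beta)}$, but the conclusion agrees). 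However, there is a genuine gap: the step you defer to the end --- converting a constant per-customer pricing deviation into an $\Omega(n)$ pseudo-regret despite the feedback of prices on the future trajectory --- is precisely the content of the paper's Step~1 (Lemma~\ref{lem: instantaneous impact of suboptimal price} and Lemma~\ref{lem: lower bound pseudo regret price difference}), which the authors themselves identify as the main technical novelty of the lower bound. You correctly guess the two ingredients (strong convexity of the instantaneous revenue loss in the price, giving a $\min(\tfrac14(\popt-p)^2,\tfrac1{10})$ cost per unit of adoption regardless of the sign of the error, and concavity of $\detV$ plus the price cap to control the value of gained/lost customers), but you do not actually establish the inequality
$$\bbE[\preg(T)]\;\ge\;\bbE\Bigl[m\int_0^{n/m}\min\bigl(\tfrac14(\popt_x-p_x)^2,\tfrac1{10}\bigr)\,dx\Bigr]-\tilde O(m^{1/3}),$$
which requires telescoping the Hamilton--Jacobi--Bellman ``disadvantage'' along the algorithm's induced deterministic trajectory and then coupling the stochastic arrival time $\tau_n$ to its deterministic counterpart $t^{\text{det}}_{n/m}$ (Lemma~\ref{lem: detarrival vs stocharrival}) to bound the residual value after customer $n$. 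Without this, the claim $\max_i\bbE_{\mathcal I_i}[\preg(T)]=\Omega(n)$ is asserted, not proved.

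A secondary issue: the price against which the algorithm's error must be measured is not the curve $\pcurve(x,\alpha,\beta)$ from the \emph{optimal} trajectory but $\popt(x,\alpha,\beta,T-t^{\text{det}}_x)$, the optimal price at the state the algorithm actually reaches, with less remaining time than the optimal trajectory would have. Your separation bound is derived for $\pcurve(x,\cdot)$; to make it apply you must show $t^{\text{det}}_{n/m}$ stays below a constant fraction of $T$ (the paper shows $t^{\text{det}}_{n/m}\le T/2$ using Assumption~\ref{assum:price upper bound}) so that the hypotheses $T'\ge\frac{(1+\sqrt2)e}{\alpha+\beta}$ and $x\le\frac{\alpha^2e}{4(\alpha+\beta)^3T'}$ of the separation lemma hold uniformly along the algorithm's path. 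This is a fixable omission, but it is part of the bookkeeping your proposal currently skips.
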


% Note that the theorem in particular says that for any algorithm, there exists
% a market where the expected regret is lower bounded by $\Omega(m^{2/3})$.
% Recall that the benchmark we used to define regret is the deterministic
% optimal revenue, which we also showed in Lemma~\ref{lem: deterministic value
% better than stochastic value} is at least as much as the optimal expected
% stochastic revenue. One might wonder if the lower bound is a result of the
% gap between $\detV$ and $\stochV$. But as shown before in
% Lemma~\ref{lem:smalloptimalgap}, this is not the case because the gap is of a
% lower order term $O(\sqrt{m})$. This means that the learning problem in our
% model is in fact where the main difficulty comes from.
%\synew{Recall in Lemma~\ref{lem: pseudo regret and regret} we showed that $\reg$ is at most $O(\sqrt{m})$ away from $\preg$. Since the $\preg$ lower bound here is a high order term, the same result holds for $\reg$ as defined in \eqref{eq: regret}. Furthermore, since a $O(\sqrt{m})$ upper bound on $\preg$ is achievable when the parameters are known (see Lemma~\ref{lem:smalloptimalgap}), this shows that the learning component of our problem is the main source of difficulty. In particular, the proof of Theorem~\ref{thm:mainlowerbound} sets the value of $T$ to  $\Theta(\frac{1}{\alpha+\beta})$. 
Note that given the relation between pseudo-regret and regret in Lemma \ref{lem: pseudo regret and regret}, the above theorem directly implies an $\Omega(m^{2/3})$ lower bound on $\bbE[\reg(T)]$.

\paragraph{\underline{Lower bound implications}} 
Theorem \ref{thm:mainlowerbound}  highlights the regime of horizon $T$ where this learning problem is most challenging. In this problem, we observe that if $T$ is large, the seller can simply offer a high price for the entire time horizon and still capture most of the market, making the problem trivial. Specifically, for $T\geq \Omega(\log(m))$, under an additional assumption that there is a constant upper bound on price, we can show that an  $O(\log(m))$ upper bound on regret can be trivially achieved by offering the maximum price at all times (see Lemma~\ref{lem: const price ub and large T makes problem trivial}).
%the optimal price \eqref{eq: opt_p_model_in_opt_trajectory} is completely dominated by the $\log\left(\frac{1}{1-\Xopt_T}\right)$ term, and the need to estimate $\alpha, \beta$ becomes less important.
On the other hand, when $T=o(1)$, then we can show that achieving a sublinear (in $m$) regret is trivial for any algorithm. This is because from \eqref{eq: intro XT_model_quadratic_sol1} we have that in this case the optimal number of adoptions $\Xopt_T\leq \frac{\alpha+\beta}{e}T = o(1)$. Furthermore, from Lemma~\ref{lem:priceupperbound} we know that all prices in the optimal curve can be bounded by $\pcurve_{max} = O(1)$ in this case. This means that the optimal revenue is at most $m\Xopt_T\pcurve_{max} = o(m)$, i.e., sublinear in $m$.
 %\sacomment{very large meaning how large? $\log(m)$? is there a reference to a lemma for this?}, making the problem uninteresting. On the other hand, if $T$ is smaller than a constant (i.e., decreases with $m$), then  even the optimal trajectory can capture only a sublinear market\sacomment{is there a lemma showing this? Or do we argue based on price upper bound?}, making any sublinear bounds vacuous.
 Intuitively, for such a small $T$, the word-out-mouth effect never comes into play (i.e., the $\alpha+\beta x$ term in the Bass model is dominated by $\alpha$), making the problem uninteresting. 
Our lower bound result therefore pinpoints the exact order of $T$, i.e., $T=\Theta(1/(\alpha+\beta))$, where the difficult and interesting instances of this problem come from.
% As a result, the need for dynamically adjusting the price in order to capture the word-of-mouth effect is diminished, and thus the problem also becomes less challenging. 

In the rest of this section, we describe the proof of Theorem \ref{thm:mainlowerbound}. We first provide an intuition for our lower bound and then go over the proof outline. 

\paragraph{\underline{Proof Intuition}} We start with showing that the pseudo-regret of any algorithm can be lower bounded in terms of its cumulative pricing errors. Therefore, in order to achieve low regret, any algorithm must be able to estimate optimal prices accurately.

Next, we observe that in this problem, the main difficulty for any algorithm comes from the difficulty in estimating the market parameter $\beta$. Since $\beta$'s observed influence on the arrival rate of customers is proportional to the current adoption level $x$, one cannot estimate
$\beta$ accurately when $x$ is small. This makes intuitive sense because when the
number of adopters is small, we do not expect to be able to measure the
word of mouth effect accurately. In fact, we demonstrate that for any $\varepsilon$, before
the adoption level exceeds
$\left(\frac{m}{\varepsilon}\right)^{2/3}$, no algorithm can 
distinguish two markets with Bass model parameters  $(\alpha, \beta)$ vs.
$(\alpha, \beta+\varepsilon)$. 
Further, we can show that in some problem instances (specifically for instances with $T=\Theta(1/(\alpha+\beta))$), 
the optimal prices for the first $m^{2/3}$ customers  are very sensitive to the value of $\beta$. 
%: depending on $\beta$ value is, the optimal price to offer for the first
%$m^{2/3}$ customers could be very different. 
Therefore, if an algorithm's 
estimation of $\beta$ is not accurate, it cannot accurately compute the optimal price for these customers. % is likely to offer a price that differs a lot from the optimal price. 
This presents an impossible
challenge for any algorithm: it needs an accurate estimate of $\beta$ in order to compute an accurate enough optimal price for the first $m^{2/3}$ customers, but
it cannot possibly obtain such an accurate estimate while the adoption level is that low; thus, it must incur pricing errors resulting in the given lower bound on regret. 

%To make the arguments formal we outline 
\paragraph{\underline{Proof Outline}}
A formal proof of Theorem \ref{thm:mainlowerbound} is obtained through the following four steps.
%to prove the lower bound in the remainder of this section.
All the missing proofs  from this section can be found in Appendix~\ref{sec: lower
bound proofs}. 

\paragraph{Step 1.} First we show that the
pseudo-regret of an algorithm  can be lower bounded in terms of cumulative
pricing errors made by the algorithm. Note that this result is not apriori obvious because as we have
discussed earlier, prices have long term effects on the adoption
curve: the immediate loss of revenue in the current round by offering too low
of a price might be offset by the fact that we saved some time for the future
rounds (lower price means faster arrival rate). On the other hand, if we offer a
price that is higher than the optimal price, the resulting delay in customer arrival (higher price means slower arrival rate) may lead to less remaining time and fewer adoptions, which could be more harmful than the immediate extra revenue. %, by the definition of optimal price. %Lemma~\ref{lem: det regret decomposition} is
%crucial in allowing us to quantify this tradeoff, without having to analyze it explicitly.
The result proven here 
%(specifically, Lemma \ref{lem: lower bound pseudo regret price difference}) 
is crucial for precisely quantifying these tradeoffs and lower bounding the regret in terms of pricing errors. 

To obtain this result, we first lower bound the impact of offering a suboptimal price at any time, on the remaining value in the deterministic Bass model. Given any current adoption level $x$ and remaining time $T'$, the impact or `disAdvantage' $A^{det}(x,T', p)$ of price $p$ is defined as the total decrease in value over the remaining time when $p$ is offered for an infinitesimal time and then optimal policy is followed. That is,
%Disadvantage $A^{det}(x,T', p)$ is given by the following expression:
$$ A^{det}(x, T', p) :=  \lim_{\delta \rightarrow 0} \frac{\detV(x, T')-p\lambda(p, x)\delta - \detV(x+\lambda(p, x)\delta/m, T'-\delta)   }{\delta}.$$
%we define the $Q$-value function  $\detQ(x, T', p, \delta)$ as the value obtained on offering price $p$ for time $\delta$ and then following the optimal policy in the deterministic Bass model. That is,
%$$\detQ(x, T', p, \delta) = p\lambda(p, x)\delta + \detV(x+\lambda(p, x)\delta/m, T'-\delta) + o(\delta).$$
%Then, the impact or "disadvantage" of offering a suboptimal price $p$ at adoption level $x$ and remaining time $T'$ is given by \mbox{$\lim_{\delta \rightarrow 0} \frac{\detV(x, T')-\detQ(x,T',p,\delta)  }{\delta} $}. 
We prove the following lemma lower bounding this quantity.
\begin{restatable}{lemma}{instantaneousimpactofsuboptimalprice}
%detregretdecomposition
\label{lem: instantaneous impact of suboptimal price}
At any adoption level $x$ and remaining time $T'$, the disadvantage of offering a suboptimal price $p$ %instead of the optimal price $\pcurve$ 
in the deterministic Bass model is lower bounded as:
%$$\lim_{\delta \rightarrow 0} \frac{\detV(x, T')-\detQ(x,T',p,\delta) }{\delta} \geq \lambda(p, x) \min\left( \frac{1}{4}(p-\pcurve)^2, \frac{1}{10}\right).$$
$$ A^{det}(x,T',p)\geq \lambda(p, x) \min\left( \frac{1}{4}(\popt(x, T')-p)^2, \frac{1}{10}\right).$$
where $\popt(x, T')=\arg \min_p A^{det}(x,T',p)$ denotes the optimal price at $x,T'$.
%where $\lambda(p,x)$ denotes the the adoption rate on offering price $p$ and current adoption level $x$, and $\pcurve$ is the optimal price at the current adoption level . 
\end{restatable}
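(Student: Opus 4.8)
The plan is to obtain an exact closed-form expression for $A^{det}(x,T',p)$ and then reduce the lemma to an elementary one-variable inequality. First I would expand the limit in the definition: writing $\lambda=\lambda(p,x)$ and Taylor-expanding $\detV(x+\lambda\delta/m,\,T'-\delta)$ to first order (using that $\detV$ is differentiable, which follows from the concavity in Lemma~\ref{lem: concavity of deterministic value function} together with the explicit characterization of the optimal curve in \eqref{eq: opt_p_model_in_opt_trajectory}), the order-$\delta^0$ terms cancel and one is left with
$$A^{det}(x,T',p) = \partial_{T'}\detV(x,T') - p\,\lambda(p,x) - \tfrac{1}{m}\lambda(p,x)\,\partial_x\detV(x,T').$$
By the dynamic programming (HJB) equation for the deterministic control problem, $\partial_{T'}\detV(x,T') = \max_{q}\big[q\,\lambda(q,x) + \tfrac1m\lambda(q,x)\,\partial_x\detV(x,T')\big]$, with the maximizer equal to $\popt(x,T')$. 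Hence $A^{det}(x,T',p)$ is exactly the Hamiltonian gap $H(\popt)-H(p)$, where $H(q):=\lambda(q,x)\big(q+\tfrac1m\partial_x\detV(x,T')\big)$; in particular it is nonnegative and vanishes at $p=\popt$, which confirms that $\popt=\arg\min_p A^{det}$.

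Next I would evaluate this gap in closed form. Writing $g(x):=m(\alpha+\beta x)(1-x)$ so that $\lambda(q,x)=e^{-q}g(x)$, and abbreviating $V_x:=\tfrac1m\partial_x\detV(x,T')$, we have $H(q)=e^{-q}g(x)(q+V_x)$. The first-order condition $H'(q)=0$ gives $\popt=1-V_x$, hence $V_x=1-\popt$ and $H(\popt)=e^{-\popt}g(x)=\lambda(\popt,x)$. Substituting $u:=p-\popt$ and simplifying yields
$$A^{det}(x,T',p) = g(x)e^{-\popt}\big(1-e^{-u}(1+u)\big) = \lambda(p,x)\big(e^{u}-1-u\big),$$
where the last equality uses $\lambda(p,x)=e^{-u}\lambda(\popt,x)$. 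Since $(\popt(x,T')-p)^2=u^2$, the lemma reduces to the scalar claim that $e^{u}-1-u\ge\min\big(\tfrac14 u^2,\tfrac1{10}\big)$ for all $u\in\mathbb{R}$.

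Finally I would verify this elementary inequality in two regimes. For $|u|\le\sqrt{0.4}$ (where the $\min$ equals $\tfrac14 u^2$) it suffices to show $e^{u}-1-u\ge\tfrac14 u^2$: for $u\ge 0$ this is immediate from the power series, since $e^{u}-1-u\ge\tfrac12 u^2$; for $u<0$ one sets $h(v):=e^{-v}-1+v-\tfrac14 v^2$ and checks via two derivatives that $h$ is increasing on $(0,\ln 2)\supset(0,\sqrt{0.4})$ with $h(0)=0$, so $h\ge 0$ there. For $|u|>\sqrt{0.4}$ (where the $\min$ equals $\tfrac1{10}$), monotonicity of $e^{u}-1-u$ away from the origin reduces the claim to evaluating it at $u=\pm\sqrt{0.4}$, where the value exceeds $0.16>\tfrac1{10}$. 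I expect the main obstacle to be the justification of the HJB step, namely establishing that $\detV(\cdot,\cdot)$ is differentiable enough for the Taylor expansion and that the first-order condition genuinely identifies the global maximizer $\popt$; this is where the concavity of $\detV$ (Lemma~\ref{lem: concavity of deterministic value function}) and the explicit optimal-price formula \eqref{eq: opt_p_model_in_opt_trajectory} do the real work, while the remaining algebra and the scalar inequality are routine.
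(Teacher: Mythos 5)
Your proposal is correct and follows essentially the same route as the paper: both identify $A^{det}(x,T',p)$ with the Hamiltonian gap $\detG(x,T',\popt)-\detG(x,T',p)$ via the Bellman/HJB equation, reduce it in closed form to $\lambda(p,x)\left(e^{p-\popt}-1-(p-\popt)\right)$, and finish with the elementary inequality $e^{u}-1-u\geq \min\left(\tfrac{1}{4}u^2,\tfrac{1}{10}\right)$. The only differences are cosmetic (you obtain the closed form by direct substitution of $\popt=1-\tfrac{1}{m}\partial_x \detV$ rather than integrating $\partial \detG/\partial p=(\popt-p)\lambda(p,x)$, and you split the scalar inequality at $|u|=\sqrt{0.4}$ instead of $u=-\tfrac12$), so no further comparison is needed.
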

\removedshipra{
Note that given \emph{any} adoption level $x$ and remaining time $T'$, the optimal policy will offer $\popt(x, T')$. This is distinct  from $\pcurve(x, \alpha, \beta)$ defined in \eqref{eq: opt_p_model_in_opt_trajectory} because $\pcurve(x, \alpha, \beta)$ is the price trajectory if the optimal policy $\popt$ is followed from $t=0, X_0=0$. We use $\popt(x, \alpha, \beta, T')$ to emphasize the optimal policy's dependence on the market parameters.  More details on this distinction between $\pcurve$ and $\popt$ can be found in Appendix~\ref{sec: det opt price properties}.}
%Here $\pcurve$ is shorthand notation of $\pcurve(x, \alpha, \beta, T')$ as defined in \eqref{eq:opt_p_model}. 
Now recall that pseudo-regret is defined as the total difference in revenue of the algorithm, which is potentially offering suboptimal prices,  compared to the deterministic optimal revenue $\detV(T)$. We use the above result to quantify the impact of offering suboptimal prices for the first $n\approx m^{2/3}$ %$n=\left(\frac{\alpha}{\alpha+\beta}\right)^{4/3} m^{2/3}$ 
customers, along with a bound on difference in stochastic vs. deterministic optimal revenue, to  obtain the following lower bound on the pseudo-regret:

$$\bbE\left[\preg(T)\right] \ge \bbE\left[m\int_{0}^{n/m}
    \min\left(\frac{1}{4}(\popt_x-p_x)^2,
    \frac{1}{10}\right) dx\right]  -\tilde{O}\left(m^{1/3}\right).$$
Here, $p_x$ denotes the price trajectory obtained on using the algorithm's prices in the deterministic Bass model, and $\popt_x$ denote the prices that would minimize the disadvantage at  each  point in this trajectory.  A more precise statement of this result is in Lemma \ref{lem: lower bound pseudo regret price difference} in Appendix \ref{sec: lower bound proofs}.

The remaining proof focuses on lower bounding the cumulative difference  in pricing errors $(\popt_x-p_x)^2$ that any algorithm must make for the first $n \approx m^{2/3}$ customers.

\removedshipra{Given a continuous price trajectory $p_x$ indexed by adoption level $x$ for $0\leq x \leq x'$, $t^{det}_{x}$ is the time it takes for the adoption level to reach $x$ in the continuous deterministic Bass model if $p_{x''}$ is
followed for $x''\leq x$.
Let $\detreg(p_x, 0\leq x\leq x')$ denote the difference
between the optimal deterministic revenue and the revenue we obtain by first
following the given $p_x$ for the first $x'$ fraction of the market and then
follow the optimal trajectory: 
\begin{equation}
    \detreg(p_x, 0\leq x\leq x') \coloneqq \detV(0,T) - m\int_0^{x'}p_x dx -
    \detV(x', T-t^{det}_{x'})
    \label{eq: definition of deterministic regret}
\end{equation}
To avoid clutter, we omit the $\alpha, \beta$ arguments from and abbreviate $\pcurve(x,\alpha, \beta, T)$ as $\pcurve(x,T)$ in the following lemma statement.
\begin{restatable}{lemma}{detregretdecomposition}
    \label{lem: det regret decomposition}
    Given $p_x$ for $0\leq x\leq x'$. Assume that $t^{det}_{x'}\leq T$, and $x'\leq \Xopt_T(0)$. Then
    $$\detreg(p_x, 0\leq x\leq x') \geq m\int_{0}^{x'}
    \min\left(\frac{1}{4}(\pcurve(x, T-t^{det}_{x})-p_x)^2,
    \frac{1}{10}\right) dx$$
\end{restatable}}

%\synew{Under Assumption~\ref{assum: const price in between arrivals}, the price sequence generated by any pricing algorithm can be described by $p_d$ for discrete values of $d$.  In the proof of Theorem~\ref{thm:mainlowerbound} we will define a continuous price sequence $p_x$ based on $p_d$. The idea is to show that this induced continuous price sequence $p_x$ incurs a large regret on the deterministic Bass model (using Lemma~\ref{lem: det regret decomposition}), and that the revenue $p_x$ generates on the deterministic Bass model is similar to the revenue $p_d$ generated on the stochastic Bass model. Since $\preg$ and $\detreg$ use the same benchmark, an lower bound on $\detreg$ implies a lower bound on $\preg$.
%}

\paragraph{Step 2.} Consider two markets with parameters $(\alpha, \beta)$,
and $(\alpha, \beta+\epsilon)$ for some constant $\epsilon$. We show that for the first $n \approx \left(\frac{m}{\epsilon}\right)^{2/3}$ customers,  any pricing
algorithm will be ``wrong'' with a constant probability. Here
by being wrong we mean that the algorithm will set a price that is closer to the optimal price of the other market.  
%the expected regret over a market randomly sampled from $(\alpha, \beta)$
% More specifically, we show that any
% algorithm will with constant probability make a pricing mistake, i.e., will offer a
% price that is closer to the optimal price of a different market than the
% optimal price in the actual market, when the number of observations is less
% than $\left(\frac{\alpha m}{\epsilon}\right)^{2/3}$.
Lemma~\ref{lem: decision rule limit} and Corollary \ref{cor:decision rule limit} formalize this idea. The proof is based on a standard  information theoretic analysis using KL-divergence.

\paragraph{Step 3.} Next we show that for $T=\Theta(\frac{1}{\alpha+\beta})$, the difference
between the optimal prices for the two markets ($(\alpha, \beta)$ and
$(\alpha, \beta+\varepsilon)$) is large ($\approx \frac{\alpha
\varepsilon}{(\alpha+\beta)^2}$). 
%The reason why $T$ needs to be in this range is that when $T$ is too small, the optimal number of adoptions is sublinear in $m$\sacomment{Reference? How do we know this?}, which means that the word of mouth effect never comes into play. When $T$ is very large\sacomment{how large? not sure what this means. Of course as T goes to infinity adoption level is 1}, the optimal final adoption levels are close to 1 for most of the market parameters. This means that the optimal adoption curves look very similar and therefore the optimal prices do not need to be very different.
We show this by proving a bound on the derivative of optimal price with respect to $\beta$. 
Lemma~\ref{lem: optimal price difference
deterministic} in Appendix \ref{sec: lower bound proofs} gives the precise statement.
%\sycomment{I removed this price derivative bound from the main body}

\paragraph{Step 4.} Finally, we put together the observations made in the previous three steps to prove Theorem \ref{thm:mainlowerbound}. We have shown that with constant probability any algorithm will make
a pricing mistake for the first $\approx (\frac{m}{\epsilon})^{2/3}$ customers [Step 2] and that this mistake will be large (on the order of $\epsilon$) [Step 3] under
the condition that $T=\Theta(1/(\alpha+\beta))$. We also have a lower bound on pseudo-regret in terms of total (square of) pricing errors made over the first $n \approx m^{2/3}$ customers [Step 1]. Combining these observations with an appropriately chosen $\epsilon$ gives us that the
%expected overall \emph{deterministic regret} as defined in \eqref{eq: definition of deterministic regret} will be 
pseudo-regret is lower bounded by 
$O(\epsilon^2 (\frac{m}{\epsilon})^{2/3}) \approx O(m^{2/3})$. 

All the missing details of this proof can be found in Appendix \ref{sec: lower bound proofs}.
%As mentioned before, here we are basically mapping the prices the algorithm offered in the stochastic Bass model to the deterministic model, and calculating the regret that those prices incur in the  deterministic Bass model. To arrive at a lower bound for the actual $\preg$, we bound the difference $|\bbE[\preg] -\bbE[\detreg]|$ by a lower order term. 
%The details of this last step is provided in Lemma~\ref{lem: detarrival vs stocharrival} in Appendix~\ref{sec: lower bound proofs}, and it is similar to the proof of Lemma~\ref{lem:smalloptimalgap} which we used to prove Lemma~\ref{lem: pseudo regret and regret}. 
%\sacomment{This step is incredibly difficult to parse - mainly because this idea of "deterministic regret" which is very vague and difficult to relate to either regret or pseudo-regret.}
%are in the proof of
%Theorem~\ref{thm:mainlowerbound} in Appendix~\ref{sec: lower bound proofs}.

%\sacomment{NEEDS A LOT OF WORK!!!}

%The only remaining detail is that Lemma~\ref{lem: det regret
%decomposition} only bounds the regret in the deterministic Bass model, so we
%need to show that if the same price sequence is followed in both the
%deterministic and the stochastic Bass model, then with high probability the
%final revenue will be close. The details of this last step are in the proof

\section{Conclusions}
%\label{sec: discussions}
In this paper we investigated a novel formulation of dynamic pricing and learning, with a non-stationary demand process  governed by an unknown stochastic Bass model. Deriving an optimal policy in this stochastic model is challenging even when the model parameters are known. Here, we presented an online algorithm that learns to price from past observations without a priori knowledge of the model parameters. A key insight that we derive and utilize in our algorithm design is the {\it concavity} of the optimal value in the deterministic Bass model setting. Using this concavity property, we can show that the optimal value in the deterministic Bass model is always higher than in the stochastic model, and therefore can be used as a stronger benchmark to compete with. Based on this insight, the main  algorithmic idea is to follow the well-known optimal price curve for the deterministic model but with estimated model parameters substituted in place of their true values. 

Our main technical result is an upper bound of $\tilde O(m^{2/3})$ on the regret of our algorithm in a market of size $m$, along with a matching lower bound of $\Omega(m^{2/3})$ under  mild restrictions on algorithm design. 
Thus, our algorithm has close to the best performance achievable by any algorithm for this problem. The derivation of our lower bound is especially involved, and requires deriving  novel dynamic-programming based inequalities. These  allow for lower bounding the loss in long-term revenue in terms of instantaneous pricing errors made by any non-anticipating algorithm. 

An interesting and perhaps surprising aspect of our upper and lower bounds is the role of the horizon $T$ vs. market size $m$. Our upper bound depends sublinearly on $m$ but only logarithmically on the horizon $T$. And in fact our lower bound indicates that for any fixed $\alpha, \beta$ the most interesting (and challenging) instances of this problem are characterized by $T$ which is of constant order, and large $m$. This insight highlights the distinct nature of pricing under stateful models like the Bass model when compared to the independent demand models and multi-armed bandit based formulations where asymptotics with respect to $T$ form the main focus of the analysis. Interesting directions for future research include investigation of other stateful demand models where the concavity property and other new dynamic programming based insights derived here may be useful.

%\nocite{whatever thing that you want to appear in reference but didn't cite}
\printbibliography

% Bibliography
% \bibliographystyle{ACM-Reference-Format}
% \bibliography{sources}

% Appendix
\newpage
\appendix
\section{A Concentration Result on Exponential Random Variables}
\label{sec: concentration result}
Before moving on to the next sections, we state a concentration
result that is used in many of the remaining proofs.
\begin{lemma}
    Let $[I_d, d=1,\ldots, n]$ be a sequence of random variables with filtration $\cF_{d}$ such that
    $I_d | \cF_{d-1}$ is an exponential random variable with rate $\lambda_d
    \in \cF_{d-1}$. Suppose that there
    exists $\underline{\lambda}>0$ such that $\lambda_d\geq
    \underline{\lambda}$ almost surely. Then for any $\epsilon\leq
    \frac{2n}{\underline{\lambda}}$,
    $$
    	\bbP\left(\sum\limits_{d=1}^{n} I_d - \frac{1}{\lambda_d} \leq -\epsilon \right)
    	\leq exp\left(\frac{-\epsilon^2\underline{\lambda}^2}{8n}\right)
    $$
    $$
        \bbP\left(\sum\limits_{d=1}^{n} I_d - \frac{1}{\lambda_d} \geq \epsilon \right)
    	\leq exp\left(\frac{-\epsilon^2\underline{\lambda}^2}{8n}\right)
    $$
	\label{lem: general exponential arrival time martingale concentration bound}
\end{lemma}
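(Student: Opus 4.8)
The plan is to treat $M_n := \sum_{d=1}^n (I_d - 1/\lambda_d)$ as a martingale with respect to $\{\cF_d\}$ and run a Chernoff-type argument, controlling the conditional moment generating function (MGF) of each centered increment. First I would verify that, since $I_d \mid \cF_{d-1}$ is exponential with rate $\lambda_d \in \cF_{d-1}$, we have $\bbE[I_d \mid \cF_{d-1}] = 1/\lambda_d$, so the increments $Y_d := I_d - 1/\lambda_d$ are centered, and the exponential MGF gives, for any $\theta < \lambda_d$,
$$\bbE\!\left[e^{\theta Y_d} \mid \cF_{d-1}\right] = e^{-\theta/\lambda_d}\,\frac{\lambda_d}{\lambda_d - \theta} = \exp\!\big(g(\theta/\lambda_d)\big), \qquad g(u) := -u - \log(1-u).$$

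Second, I would establish two one-sided bounds on $g$. For the lower tail I take $\theta < 0$, so $u = \theta/\lambda_d < 0$; writing $w = -u > 0$ and using $\log(1+w) \ge w - w^2/2$ gives $g(u) = w - \log(1+w) \le w^2/2 = \theta^2/(2\lambda_d^2) \le \theta^2/(2\underline{\lambda}^2)$, valid for all $\theta < 0$. For the upper tail I restrict to $0 < \theta \le \underline{\lambda}/2$, which forces $u = \theta/\lambda_d \in (0, 1/2]$ since $\lambda_d \ge \underline{\lambda}$; the expansion $g(u) = \sum_{k\ge 2} u^k/k \le \tfrac{1}{2}\sum_{k\ge2}u^k = \tfrac{u^2}{2(1-u)} \le u^2$ then yields $g(u) \le \theta^2/\lambda_d^2 \le \theta^2/\underline{\lambda}^2$. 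Iterating either bound through the tower property (peeling off $Y_n, Y_{n-1}, \dots$ one at a time, using that $\lambda_d$ is $\cF_{d-1}$-measurable) gives $\bbE[e^{\theta M_n}] \le \exp(n\theta^2/(2\underline{\lambda}^2))$ for $\theta < 0$ and $\bbE[e^{\theta M_n}] \le \exp(n\theta^2/\underline{\lambda}^2)$ for $0 < \theta \le \underline{\lambda}/2$.

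Third, I would apply Markov's inequality and optimize over $\theta$. The lower tail is the easy direction: $\bbP(M_n \le -\epsilon) \le e^{-s\epsilon}\bbE[e^{-sM_n}] \le \exp(-s\epsilon + ns^2/(2\underline{\lambda}^2))$, and the unconstrained optimum $s = \epsilon\underline{\lambda}^2/n$ (always admissible) gives $\exp(-\epsilon^2\underline{\lambda}^2/(2n))$, which is stronger than the claimed bound. The upper tail requires care because of the cap $\theta \le \underline{\lambda}/2$: from $\bbP(M_n \ge \epsilon) \le \exp(-\theta\epsilon + n\theta^2/\underline{\lambda}^2)$ the unconstrained minimizer is $\theta^\star = \epsilon\underline{\lambda}^2/(2n)$, which is admissible exactly when $\epsilon \le n/\underline{\lambda}$ and then gives $\exp(-\epsilon^2\underline{\lambda}^2/(4n))$. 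When $n/\underline{\lambda} < \epsilon \le 2n/\underline{\lambda}$ I would instead take the boundary value $\theta = \underline{\lambda}/2$, producing the exponent $-\underline{\lambda}\epsilon/2 + n/4$; writing $\epsilon = cn/\underline{\lambda}$ with $c \in (1,2]$, a direct check that $c^2 - 4c + 2 \le 0$ on this range shows $-\underline{\lambda}\epsilon/2 + n/4 \le -\epsilon^2\underline{\lambda}^2/(8n)$, the weaker uniform bound stated in the lemma. Combining the two regimes covers all $\epsilon \le 2n/\underline{\lambda}$.

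The main obstacle is the upper tail. Unlike the left tail of a sum of exponentials, which is sub-Gaussian and poses no difficulty, the right tail is only sub-exponential, so the MGF blows up as $\theta \uparrow \lambda_d$ and one cannot take $\theta$ arbitrarily large. The entire role of the hypothesis $\epsilon \le 2n/\underline{\lambda}$, and of the somewhat lossy constant $8$ in the exponent, is to accommodate the boundary regime where the optimal $\theta$ is clipped at $\underline{\lambda}/2$; getting the arithmetic of that clipped optimization to match the stated constant is the only delicate part, and the quadratic inequality $c^2 - 4c + 2 \le 0$ (whose relevant root is $2 - \sqrt{2} < 1$) is exactly what makes the two regimes join up.
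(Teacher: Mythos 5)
Your proof is correct and follows essentially the same route as the paper's: a Chernoff bound on the martingale $\sum_{d}\left(I_d-\tfrac{1}{\lambda_d}\right)$, peeling off increments via the tower property and controlling each conditional MGF on the range $|\theta|\le \underline{\lambda}/2$, with the hypothesis $\epsilon\le 2n/\underline{\lambda}$ serving precisely to keep the optimizing parameter in that admissible range. The only difference is organizational: the paper uses the single uniform bound $\tfrac{1}{1-u}\le e^{u+2u^2}$ for $|u|\le \tfrac12$ together with the one choice $s=\epsilon\underline{\lambda}^2/(4n)$, which is admissible exactly when $\epsilon\le 2n/\underline{\lambda}$ and yields the exponent $-\epsilon^2\underline{\lambda}^2/(8n)$ for both tails simultaneously, thereby avoiding your tail-specific MGF estimates and the two-regime clipped optimization for the upper tail.
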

\begin{proof}
    An exponential random variable with rate $\lambda$ satisifies 
    (see \cite{binomial_locally_subgaussian})
    \begin{equation}
        \label{eq: MGF of exponential}
        \bbE[e^{sX}] =\frac{1}{1-s/\lambda} \leq e^{\frac{s}{\lambda}+
        2\frac{s^2}{\lambda^2}} \quad \forall s\in [-\frac{\lambda}{2},
        \frac{\lambda}{2}]
    \end{equation}
    \begin{align*}
    	\bbP\left(\sum\limits_{d=1}^{n} I_d - \frac{1}{\lambda_d} \leq -\epsilon \right)
    	&=\bbP\left(e^{s\sum\limits_{d=1}^{n} \frac{1}{\lambda_d}-I_d} \geq e^{s\epsilon} \right)\\ 
    	\text{(Markov's Inequality)}&\leq\frac{1}{e^{s\epsilon}}\bbE\left[e^{s\left(\sum\limits_{d=1}^{n} \frac{1}{\lambda_d}-I_d\right)} \right]\\ 
    	&\leq\frac{1}{e^{s\epsilon}}\bbE\left[e^{s\left(\sum\limits_{d=1}^{n-1} \frac{1}{\lambda_d}-I_d\right)}\bbE\left[e^{s(\frac{1}{\lambda_n}-I_{n})} \big\vert \mathcal{F}_{n-1}\right]\right]\\ 
    	(\text{Using \eqref{eq: MGF of exponential} and  $\lambda_n\geq \underline{\lambda}$}) &\leq\frac{1}{e^{s\epsilon}}\bbE\left[e^{s\left(\sum\limits_{d=1}^{n-1} \frac{1}{\lambda_d}-I_d\right)} e^{2\frac{s^2}{\underline{\lambda}^2}}\right]\\ 
    	(\text{Repeating the above argument})&=e^{s^2\frac{2n}{\underline{\lambda}^2} - s\epsilon}\\
    	\text{(solve for optimal $s=\frac{\epsilon
    \underline{\lambda}^2}{4n}$)}&=exp\left(\frac{-\epsilon^2\underline{\lambda}^2}{8n}\right)
    \end{align*}
    % where in the last step we solved for the optimal $s= \frac{\epsilon
    % \underline{\lambda}^2}{4n}$ (to minimize the bound). 
    Note that to use \eqref{eq: MGF of exponential} in the last
    inequality we needed that $s = \frac{\epsilon \underline{\lambda}^2}{4n}
    \leq \frac{\lambda_d}{2}$ for every $d$, which is satisfied if $\epsilon\leq
    \frac{2n}{\underline{\lambda}}$.
    $\bbP\left(\sum\limits_{d=1}^{n} I_d - \frac{1}{\lambda_d} \geq \epsilon
    \right)$ can be bounded similarly. 
\end{proof}

\section{Some Properties of the Deterministic Optimal Price Curve}
\label{sec: det opt price properties}
\subsection{Optimal pricing policy expression}

The expressions in \eqref{eq: opt_p_model_in_opt_trajectory} \eqref{eq: intro XT_model_quadratic_sol1} and \eqref{eq: intro XT_model_quadratic_sol2} \newshipra{provided the expressions for the optimal price curve $\pcurve(x,\alpha, \beta), x\in [0, \Xopt_T]$ and for the total number of adoptions $\Xopt_T$  when optimal pricing policy is followed from an initial adoption level $X_0=0$ at $t=0$ to the end of the time horizon $T$. Here we derive a more general expression for the optimal pricing policy $\popt(x, \alpha, \beta, T)$ that will give the optimal price at any current adoption level $x$ and \emph{remaining} planning horizon $T$ (irrespective of what pricing policy was followed for how much time to reach the adoption level $x$). Also, we derive an expression for $\Xopt_{T}(x)$, the adoption level at the end of time $T$ if optimal pricing policy is followed for time $T$ starting from the adoption level $X_0=x$ at $t=0$. These expressions will be especially useful in our lower bound derivations.}

\removedshipra{time $T$ in the future in the optimal trajectory, when the current adoption level is $x$ and the remaining planning horizon is $T$. }
Note that under this expanded notation, $\Xopt_T = \Xopt_T(0)$. We sometimes also use the notation $\Xopt_T(x, \alpha, \beta)$ to emphasis the dependence $\Xopt_T$ has on the market parameters. Note that by this definition, $\Xopt_{T-t}(X_t) = \Xopt_T(X_0)$ \newshipra{if $X_t$ is the adoption level reached at time $t$ on following the optimal price trajectory from time $0$.}
% $\pcurve(X_t, \alpha, \beta) = \pcurve(X_t, \alpha, \beta, T-t)$
%In other words, $\pcurve(\Xopt_t, \alpha, \beta) = q^*(\Xopt_t, \alpha, \beta, T-t)$. 

The optimal price to offer at any given adoption level $x$ and remaining time $T$ can be derived using optimal control theory  (see
Equation (8) of \cite{experience_curve}).
and is given by the following pricing policy. 
Given adoption  $x\in [0, 1)$ and remaining time $T>0$, the optimal price is given by %\sycomment{I think we should just use $T$ instead of $T'$, otherwise it's not clear if \eqref{eq: general XT_model_quadratic_sol2} is what we plug in  \eqref{eq:opt_p_model}. I also think we should just use $x$ instead of $x_0$.}
\begin{equation}
    \label{eq:opt_p_model}
		\popt(x, \alpha, \beta, T) := 1
		+ \log\left( \frac{(\alpha+\beta x)(1-x)}{(\alpha+\beta \Xopt_{T}(x) )(1-\Xopt_{T}(x))}\right)
\end{equation}
\removedsy{where $\Xopt_{T}(x)$ is the adoption level at the end of horizon $T'$ when the optimal pricing policy is followed for time horizon $T'$ starting from adoption level $x$.}

%That is, $\Xopt_0(x)=x$, for any $x$, and 
%\begin{equation}
%    \label{eq:opt_adoption_levels}
%\Xopt_{T'}(x) :=x + \int_0^{T'} dX_t  = x + \int_{0}^{T'}   \exp(-\pcurve(X_t, \alpha, \beta, T'-t)) (\alpha+\beta X_t) (1-X_t)dt, 
%\end{equation}
%where we substituted the expression for $dX_t$ for deterministic Bass model from \eqref{eq:detrministicBass}.}
%\removedshipra{$\Xopt_{T'}(x)=x + \int_0^{T'} \frac{dX_t}{dt}(\pcurve(X_t, \alpha, \beta, T'-t)) \,dt$ 
%is the adoption level at the end of remaining time $T'$ assuming that the optimal pricing policy is followed in the remaining time. }
%(assuming that adoption level at time $t=0$ is $x_0$.
%i.e., $X_0 = x_0$) under the optimal price curve. In general, given a starting adoption level $x_0$, we use $\Xopt_t$ to denote the adoption level at time $t$ if the optimal curve is followed from the $t=0$.
When the value of $\alpha, \beta$ is clear from the context, we sometimes drop the dependence on $\alpha,\beta$, and use shorter notations $\pcurve(x)$ and $\popt(x, T)$ instead of $\pcurve(x,\alpha, \beta)$ and $\popt(x, \alpha, \beta, T)$ respectively.
%Note that this is distinct from $\pcurve(x, \alpha, \beta)$ defined in \eqref{eq: opt_p_model_in_opt_trajectory} because $\pcurve(x, \alpha, \beta)$ is the price trajectory if the optimal policy $\popt$ is followed from $t=0, X_0=0$. }

To see the connection (and distinction) between $\pcurve(x)$ and $\popt(x,T)$, note that $\pcurve(x)$ is the price trajectory if the optimal policy $\popt$ is followed from $t=0, X_0=0$ to the end of horizon $T$. 
%In other words, the optimal price curve $\pcurve(x,\alpha, \beta)$ in \eqref{eq: opt_p_model_in_opt_trajectory} is obtained by starting from an initial adoption level $0$ at time $t=0$, and then following the optimal pricing policy, i.e., at time $t$ and adoption level $X_t$, set price as $p_t=
That is, if $X_t, p_t$ denotes the adoption level and price at time $t \in [0,T]$ on following optimal pricing policy from $t=0, X_0=0$, then
$p_t=\pcurve(X_t,\alpha, \beta)=\popt(X_t, \alpha, \beta, T-t)$.

Now consider the adoption process on starting from an initial adoption level $x$ at time $t=0$, and then following the optimal pricing policy. Again $X_t$ denotes the adoption level at time $t$ in this process.
%\removedsy{, i.e., set  $p_t=\popt(X_t, \alpha, \beta, T-t)$ after observing $X_t$ for every $t\in [0,T]$}. %Then, $\Xopt_{t}(x_0), 0\le t\le T$ is the adoption curve under optimal policy. \sycomment{We are overloading the definition of $\Xopt_t$ here: $\Xopt_t$ and $\Xopt_T$ mean different things. I think we should change all the $\Xopt_t(\cdot)$ to $X_t(\cdot)$ and say that if the optimal price \eqref{eq:opt_p_model} is followed from $t=0$, then \eqref{XT_model} is what you get. Alternatively we can change $\Xopt_t(x_0)$ to something like $\Xopt_t(x_0, T)$} \sacomment{I don't think they mean different things, but their use was inconsistent earlier. Please see my changes above. For any $t,x$, $\Xopt_t(x)$ should always mean the level at $t$ if you start with level $x$ at time time $0$ and follow optimal policy. And the same is true for $\Xopt_T, \Xopt_{T'}$ etc. If they indeed mean different things, lot of notation needs to be fixed. } Note that by definition, for any time $t\in [0,T]$, $X^*_{T-t}(X^*_t(x_0))=X_T^*(x_0)$, so that $p_t=\pcurve(\Xopt_t(x_0), \alpha,\beta, T-t) = 1
	%	+ \log\left( \frac{(\alpha+\beta \Xopt_t(x_0))(1-\Xopt_t(x_0))}{(\alpha+\beta \Xopt_T(x_0) )(1-\Xopt_T(x_0))}\right)$.
%i.e., the adoptions int time $T$ $X_T^*(x_0)$ when starting from level $x_0$ at time $0$, is same as the adoption level until the end of time horizon $T$ when starting from level $X^*_t(x_0)$ at time $t$. 
%Substituting this observation in the expression for $\pcurve(\Xopt_t(x_0), \alpha,\beta, T-t)$, and 
Plugging $p_t=\popt(X_t, \alpha, \beta, T-t)$ back into \eqref{eq:detrministicBass}, it's easy to derive that %under the optimal pricing policy, the rate of change of
%adoption level $dX_t/dt$ is constant, and:
\begin{equation}
    \label{XT_model}
    \frac{d X_t}{d t} = \frac{1}{e}(\alpha+\beta \Xopt_{T}(x))(1-\Xopt_{T}(x)) 
    %\qquad X_t-x_0=\frac{1}{e}(\alpha+\beta \Xopt_T(x_0))(1-\Xopt_T(x_0))t
\end{equation}
This means that under the optimal policy, the rate of adoption is constant. 
We can integrate the above from $t=0$ to $T$, and also solve the resulting quadratic equation for $t=T$ to compute the final
adoption level $\Xopt_T(x)$ under the optimal pricing policy when starting from an initial adoption level $x$ at $t=0$:
\begin{align}
        \Xopt_T(x)-x = \frac{1}{e}(\alpha+\beta \Xopt_T(x))(1-\Xopt_T(x))T\label{eq: general XT_model_quadratic_sol1}\\
        \Xopt_T(x)=\frac{T(\beta-\alpha)-e+\sqrt{[T(\beta-\alpha)-e]^2+4\alpha\beta T^2 + 4e\beta x T}}{2\beta T}\label{eq: general XT_model_quadratic_sol2}
\end{align}
Note that on plugging $x=0$ in \eqref{eq: general XT_model_quadratic_sol2}, we obtain the expression for $\Xopt_T$ in \eqref{eq: intro XT_model_quadratic_sol2}.
%Plugging this into \eqref{} note that given a fixed $x_0$ and $T$, $\pcurve$
%For any fixed $x_0$ and $T$, if the optimal price is followed from the
%beginning, then the final adoption level $\Xopt_T(x_0)$ is fixed and given by
%\eqref{XT_model_quadratic_sol2}. Therefore, the denominator in
%\eqref{eq:opt_p_model} can be treated as a constant, and this trajectory of
%optimal prices can be indexed by the adoption level $x$. 

% It will also be useful to define
% $\pcurve(x,\alpha, \beta) $ 
% to denote the price at any given adoption level $x \in [0, X_T^*]$, when optimal pricing policy is followed in time $t\in [0,T]$ starting from adoption level $x_0=0$. That is, from \eqref{eq:opt_p_model} we have:

%In other words, since the optimal price curve is followed from the beginning, the time when  adoption level reaches $x$, denoted by $t^*_x$, can be inferred. To simplify notation, we use $\pcurve(x, \alpha, \beta)$ to denote the price offered in the optimal trajectory when adoption level is $x$. When $x_0$ is not specified, we assume that $x_0=0$.

\subsection{Price Lipschitz Bound}
We start with some Lipschitz bounds on how the optimal price offered at
adoption level $x$ can change with respect to $\alpha, \beta$. Note that here
we are assuming $X_0=0$ and that the entire optimal price curve from the beginning 
changes if we change $\alpha, \beta$ (i.e., we use \eqref{eq:
opt_p_model_in_opt_trajectory} not \eqref{eq:opt_p_model}). 
\newshipra{In the following lemma, $\Xopt_T$ denotes the adoption curve on following the optimal pricing policy for all times $t\in [0,T]$ starting from $0$ adoption level under the deterministic Bass model with parameters $(\alpha, \beta)$ as given in \eqref{eq: intro XT_model_quadratic_sol2}.} Using the expanded notation introduced in the previous subsection, it can also be called $\Xopt_T(0, \alpha, \beta)$.
\begin{lemma}
    \label{lem: price lipschitz helper result}
    $0\leq\frac{1}{1-\Xopt_T}\frac{\partial \Xopt_T}{\partial \alpha} \leq \frac{1}{\alpha}$, and 
    $0\leq\frac{1}{1-\Xopt_T}\frac{\partial \Xopt_T}{\partial \beta} \leq \frac{1}{\beta}$
\end{lemma}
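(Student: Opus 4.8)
The plan is to treat $\Xopt_T$ as an implicitly defined function of $(\alpha,\beta)$ through the fixed-point equation \eqref{eq: intro XT_model_quadratic_sol1}, namely $\Xopt_T = \tfrac{T}{e}(\alpha+\beta\Xopt_T)(1-\Xopt_T)$, and compute its partial derivatives by the implicit function theorem. Abbreviating $X := \Xopt_T$ and $c := T/e$, I would set $G(X,\alpha,\beta) := X - c(\alpha+\beta X)(1-X)$ and record the three partials $\partial_\alpha G = -c(1-X)$, $\partial_\beta G = -cX(1-X)$, and $\partial_X G = 1 - c(\beta-\alpha-2\beta X)$. This yields $\frac{\partial X}{\partial\alpha} = \frac{c(1-X)}{\partial_X G}$ and $\frac{\partial X}{\partial\beta} = \frac{cX(1-X)}{\partial_X G}$, which is where the convenient $(1-X)$ factor that appears in the lemma comes from.

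The crucial step, and the only place the Bass structure is really used, is to simplify $\partial_X G$ using the constraint itself. Rearranging the fixed-point equation gives $X\,(1 + c\alpha - c\beta + c\beta X) = c\alpha$, so that $1 + c\alpha - c\beta + c\beta X = c\alpha/X$ for $X>0$; adding one more $c\beta X$ produces the clean identity
$$\partial_X G \;=\; 1 + c\alpha - c\beta + 2c\beta X \;=\; \frac{c\alpha}{X} + c\beta X .$$
Since $c,\alpha,\beta>0$ and $0<X<1$ (the product $(\alpha+\beta X)(1-X)$ must be positive, which forces $X<1$), this shows $\partial_X G>0$. That simultaneously legitimizes the implicit differentiation and establishes the nonnegativity (lower bounds) of both quantities in the lemma.

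For the upper bounds I would cancel the $(1-X)$ factor and substitute the identity to obtain the explicit closed forms
$$\frac{1}{1-X}\frac{\partial X}{\partial\alpha} = \frac{c}{\partial_X G} = \frac{X}{\alpha+\beta X^2}, \qquad \frac{1}{1-X}\frac{\partial X}{\partial\beta} = \frac{cX}{\partial_X G} = \frac{X^2}{\alpha+\beta X^2}.$$
The first bound $\frac{X}{\alpha+\beta X^2}\le \frac{1}{\alpha}$ is equivalent to $\alpha(X-1)\le \beta X^2$, which holds because $X\le 1$ makes the left-hand side nonpositive while the right-hand side is nonnegative; the second bound $\frac{X^2}{\alpha+\beta X^2}\le\frac{1}{\beta}$ reduces to $0\le\alpha$, which is immediate.

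I expect the main obstacle to be purely algebraic: spotting that the fixed-point relation collapses $\partial_X G$ into the manifestly positive form $c\alpha/X + c\beta X$. Once that identity is in hand, every remaining claim follows from elementary inequalities using only $0<X<1$ and $\alpha,\beta>0$, with no need for case analysis on the two roots of the quadratic \eqref{eq: intro XT_model_quadratic_sol2} since $\Xopt_T\in(0,1)$ is the physically relevant branch. The degenerate case $T=0$ (hence $X=0$) makes all four quantities vanish and can be handled separately in one line.
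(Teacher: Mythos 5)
Your proof is correct, and while it shares the paper's overall strategy (implicit differentiation of the fixed-point equation \eqref{eq: intro XT_model_quadratic_sol1}), it diverges at the key step of controlling the denominator $\partial_X G = 1 + \tfrac{T}{e}(\alpha-\beta) + \tfrac{2T}{e}\beta X$. The paper substitutes the explicit quadratic-formula root \eqref{eq: intro XT_model_quadratic_sol2} to identify this denominator with $\tfrac{1}{e}\sqrt{4T^2\alpha\beta + (T(\beta-\alpha)-e)^2}$, and then has to lower-bound that square root in two different ways and take a minimum of $|\tfrac{1}{\alpha+\beta-e/T}|$ and $|\tfrac{1}{\alpha-\beta+e/T}|$ to extract the $1/\alpha$ bound; the $\beta$ case requires a further observation that a certain ratio is at most $1$. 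You instead feed the constraint back into the denominator to get the exact identity $\partial_X G = \tfrac{T}{e}\bigl(\tfrac{\alpha}{X}+\beta X\bigr)$, which is manifestly positive and collapses both quantities to the closed forms $\tfrac{X}{\alpha+\beta X^2}$ and $\tfrac{X^2}{\alpha+\beta X^2}$, from which the bounds $1/\alpha$ and $1/\beta$ drop out by one-line inequalities using only $0< X\le 1$ and $\alpha,\beta>0$. Your route avoids the square-root manipulations and the case analysis entirely, and as a bonus produces sharper intermediate expressions; the paper's route has the minor advantage of exhibiting the denominator as the discriminant term, which it reuses elsewhere (e.g., in the proof of Lemma \ref{lem: optimal price difference deterministic}). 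Both arguments are complete; your handling of $X>0$, $X<1$, and the degenerate $T=0$ case is adequate.
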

\begin{proof}
    Below, we use $X_T$ instead of $\Xopt_T$ to denote the adoption
    at time $T$, in the deterministic optimal
    trajectory starting at $X_0=0$. 
    Differentiating \eqref{eq: intro XT_model_quadratic_sol1} with respect to $\alpha$:
    \begin{align*}
        \beta T X_T^2 + eX_T-\beta TX_T + \alpha TX_T - \alpha T &= 0\\
        2\beta T \frac{\partial X_T}{\partial \alpha}X_T + e\frac{\partial X_T}{\partial \alpha} - \beta T\frac{\partial X_T}{\partial \alpha} + TX_T + \alpha T\frac{\partial X_T}{\partial \alpha} - T &= 0\\
        \frac{\partial X_T}{\partial \alpha} (2T X_T\beta + e -\beta T + \alpha T) &= T(1-X_T)
    \end{align*}
    Rearranging and substituting $X_T$ using (\ref{eq: intro XT_model_quadratic_sol2}):
    \begin{align*}
        \frac{1}{1-X_T}\frac{\partial X_T}{\partial \alpha} &= \frac{T}{2\beta TX_T + e + (\alpha-\beta)T}\\
        &=\frac{T}{T(\beta - \alpha) - e + \sqrt{4 T^{2} \alpha \beta + \left(T \left(\beta -\alpha\right) - e\right)^{2}} + (\alpha-\beta)T + e}\\ 
        &=\frac{T}{\sqrt{4 T^{2} \alpha\beta + \left(T(\beta-\alpha) - e\right)^{2}}} \ge 0.\\ 
    \end{align*}
    Note the denominator can be bounded in two ways:
    \begin{align*}
    \sqrt{4 T^{2} \alpha\beta + \left(T(\beta-\alpha) - e\right)^{2}} \geq \left|T(\beta-\alpha)-e\right| \text{ and,}\\
    \sqrt{4 T^{2} \alpha\beta + \left(T(\beta-\alpha) - e\right)^{2}} = \sqrt{T^2(\alpha+\beta)^2 + 2(\alpha-\beta)T + e^2} \geq \left|T(\alpha+\beta)-e\right|.
    \end{align*}
    So 
    $$\frac{1}{1-X_T}\frac{\partial X_T}{\partial \alpha} \leq 
    \min( |\frac{1}{\alpha+\beta-\frac{e}{T}}|, |\frac{1}{\alpha-\beta+\frac{e}{T}}|) =
    min( |\frac{1}{\alpha+(\beta-\frac{e}{T})}|, |\frac{1}{\alpha-(\beta-\frac{e}{T})}|) \leq
    \frac{1}{\alpha}$$

    %If $T \geq \frac{2e}{\alpha+\beta}$, then $\frac{1}{1-X_T}\frac{\partial
    %X_T}{\partial \alpha}\leq \frac{1}{|\alpha+\beta-e/T|} \leq \frac{2}{\alpha+\beta}$. If $T <
    %\frac{2e}{\alpha+\beta}$, then $\frac{1}{1-X_T}\frac{\partial
    %X_T}{\partial \alpha} \leq \frac{1}{\alpha-\beta+e/T} \leq \left|\frac{2}{3\alpha-\beta}\right| \leq \frac{2}{\alpha+\beta}$
    %where the last step uses $\beta\leq \alpha$. So
    %$$\frac{1}{1-X_T}\frac{\partial X_T}{\partial \alpha} \leq \frac{2}{\alpha+\beta} \quad \forall T$$
    %By analyzing the cases $T\leq \frac{e}{\alpha+\beta},
    %\frac{e}{\alpha+\beta}<T\leq \frac{e}{|\beta-\alpha|}, T\geq
    %\frac{e}{|\beta-\alpha|}$ and $\alpha\geq \beta, \beta>\alpha$, one can check that 
    %$$\frac{1}{1-X_T}\frac{\partial X_T}{\partial \alpha} \leq \frac{1}{\min(\alpha, \beta)}$$

    Following a similar procedure, 
    (differentiating \eqref{eq: intro XT_model_quadratic_sol1} with respect to $\beta$
    and using \eqref{eq: intro XT_model_quadratic_sol2}) we can also get the following bound:
    \begin{align*}
        %\beta T X_T^2 + eX_T-\beta TX_T + \alpha TX_T - \alpha T &= 0\\
        %TX_T^2 + 2\beta T \frac{\partial X_T}{\partial \beta}X_T + e\frac{\partial X_T}{\partial \beta} - TX_T - \beta T\frac{\partial X_T}{\partial \beta} + \alpha T\frac{\partial X_T}{\partial \beta} &= 0\\
        %\frac{\partial X_T}{\partial \beta} (2\beta T X_T + e -\beta T + \alpha T) &= TX_T(1-X_T)\\
        \frac{1}{1-X_T}\frac{\partial X_T}{\partial \beta} &= \frac{TX_T}{2\beta TX_T + e + (\alpha-\beta)T}\\
        &=\frac{T(\beta-\alpha)-e}{2\beta \sqrt{4 T^{2} \alpha\beta + \left(T \left(- \alpha + \beta\right) - e\right)^{2}}} + \frac{1}{2\beta}\\ 
        0\leq\frac{1}{1-X_T}\frac{\partial X_T}{\partial \beta}&\leq \frac{1}{\beta}
    \end{align*}
    where in the last step we used the fact that $
    \frac{|T(\beta-\alpha)-e|}{ \sqrt{4 T^{2} \alpha\beta + \left(T \left(-
    \alpha + \beta\right) - e\right)^{2}}} \leq 1$.
\end{proof}
In our proofs we will sometimes need  to compare the optimal final adoption level under two different market parameters. To derive the results comparing two different market parameters, instead of $\Xopt_T(x)$, we use the notation $\Xopt_T(x, \alpha, \beta)$ that makes the dependence on $\alpha, \beta$ explicit.
\begin{corollary}
$\Xopt_T(0,\alpha, \beta) \geq \Xopt_T(0, \alpha, 0) = \frac{\alpha T}{\alpha T+e}$.
\label{cor: beta improves final XT}
\end{corollary}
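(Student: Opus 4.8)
The plan is to split the claim into its two components: the closed-form identity $\Xopt_T(0,\alpha,0)=\frac{\alpha T}{\alpha T+e}$, and the monotonicity inequality $\Xopt_T(0,\alpha,\beta)\geq \Xopt_T(0,\alpha,0)$. The identity is the easier half. Setting $\beta=0$ in the defining fixed-point equation \eqref{eq: intro XT_model_quadratic_sol1} collapses it to the linear relation $\Xopt_T=\frac{\alpha T}{e}(1-\Xopt_T)$, whose unique root is $\frac{\alpha T}{\alpha T+e}$. I would use this route rather than plugging $\beta=0$ into \eqref{eq: intro XT_model_quadratic_sol2}, since $\beta$ appears in the denominator there and a limit would otherwise be required.

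For the inequality, my first approach is to read off monotonicity in $\beta$ directly from Lemma \ref{lem: price lipschitz helper result}. That lemma gives $\frac{1}{1-\Xopt_T}\frac{\partial \Xopt_T}{\partial \beta}\geq 0$, and since $\Xopt_T\in[0,1)$ forces $1-\Xopt_T>0$, we obtain $\frac{\partial \Xopt_T}{\partial\beta}\geq 0$ for every $\beta>0$. Hence $\beta\mapsto \Xopt_T(0,\alpha,\beta)$ is non-decreasing on $(0,\infty)$, and integrating this (or invoking continuity of the explicit formula \eqref{eq: intro XT_model_quadratic_sol2} in $\beta$) down to $\beta=0$ yields $\Xopt_T(0,\alpha,\beta)\geq \lim_{\beta'\to 0^+}\Xopt_T(0,\alpha,\beta')=\Xopt_T(0,\alpha,0)$, which together with the identity above completes the proof.

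The only delicate point — and the step I would watch most carefully — is the passage to $\beta=0$: the \emph{upper} bound $\frac{1}{\beta}$ in Lemma \ref{lem: price lipschitz helper result} degenerates as $\beta\to 0^+$, so I cannot invoke the lemma at $\beta=0$ itself. I only need its lower bound (nonnegativity of the derivative), valid for all $\beta>0$, together with continuity of $\Xopt_T$ in $\beta$ at $0$, which is transparent from \eqref{eq: intro XT_model_quadratic_sol2}. As a fully self-contained alternative that sidesteps derivatives entirely, I would argue at the level of the fixed-point map: writing $F_\beta(x)=x-\frac{T}{e}(\alpha+\beta x)(1-x)$, this is an upward-opening parabola with $F_\beta(0)=-\frac{\alpha T}{e}<0$, so $\Xopt_T(0,\alpha,\beta)$ is its unique positive root and $F_\beta$ is negative to the left of that root. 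Since $F_\beta(x_0)=F_0(x_0)-\frac{\beta T}{e}x_0(1-x_0)=-\frac{\beta T}{e}x_0(1-x_0)\leq 0$ at $x_0=\frac{\alpha T}{\alpha T+e}$, the point $x_0$ cannot lie strictly to the right of the positive root, giving $x_0\leq \Xopt_T(0,\alpha,\beta)$ directly.
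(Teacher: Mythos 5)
Your proof is correct and its primary route is exactly the paper's: the equality comes from solving the fixed-point equation \eqref{eq: intro XT_model_quadratic_sol1} at $\beta=0$, and the inequality from the non-negativity of $\frac{\partial \Xopt_T}{\partial \beta}$ established in Lemma~\ref{lem: price lipschitz helper result}. Your added care about passing to the limit $\beta\to 0^+$ and the self-contained fixed-point comparison $F_\beta(x_0)\le 0$ are both sound and slightly more rigorous than the paper's one-line argument, but they do not change the substance.
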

\begin{proof}
    The inequality follows from the non-negativity of $\frac{\partial \Xopt}{\partial \beta}$ proved in Lemma~\ref{lem: price lipschitz helper result}. The equality can be easily derived by solving for $\Xopt_T$ in \eqref{eq: intro XT_model_quadratic_sol1} after plugging in $\beta=0$.
\end{proof}

\begin{restatable}{lemma}{pricelipschitzboundcustomer}
    \label{lem: opt price lipschitz bound} For any $0\le x\le \Xopt_T$,
    $\left|\frac{\partial
    \pcurve(x, \alpha, \beta)}{\partial \alpha}\right|\leq
    \frac{2+\beta/\alpha}{\alpha, } , \left|\frac{\partial \pcurve(x,\alpha,
    \beta)}{\partial \beta}\right| \leq \frac{3}{\min(\alpha,\beta)}.$
\end{restatable}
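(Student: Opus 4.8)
The plan is to differentiate the closed form \eqref{eq: opt_p_model_in_opt_trajectory} directly, treating the horizon-end adoption level $\Xopt_T=\Xopt_T(0,\alpha,\beta)$ as an implicit function of the parameters and controlling its sensitivity through the helper result Lemma~\ref{lem: price lipschitz helper result}. First I would rewrite the optimal price as a sum of logarithms,
$$\pcurve(x,\alpha,\beta) = 1 + \log(\alpha+\beta x) + \log(1-x) - \log(\alpha+\beta \Xopt_T) - \log(1-\Xopt_T),$$
so that each partial derivative cleanly splits into an ``explicit'' contribution from the numerator factor $\alpha+\beta x$ (which does not involve $\Xopt_T$) and an ``implicit'' contribution from the two denominator terms (where $\Xopt_T$ varies with the parameters).

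Differentiating with respect to $\alpha$ and applying the chain rule to the terms containing $\Xopt_T$ gives
$$\frac{\partial \pcurve}{\partial \alpha} = \frac{1}{\alpha+\beta x} - \frac{1}{\alpha+\beta \Xopt_T} - \frac{\beta}{\alpha+\beta \Xopt_T}\frac{\partial \Xopt_T}{\partial \alpha} + \frac{1}{1-\Xopt_T}\frac{\partial \Xopt_T}{\partial \alpha}.$$
The crucial observation is that $\partial \Xopt_T/\partial\alpha$ enters only through the normalized combination $\frac{1}{1-\Xopt_T}\frac{\partial \Xopt_T}{\partial\alpha}$: directly in the last term, and in the third term after writing $\frac{\beta}{\alpha+\beta\Xopt_T}\frac{\partial\Xopt_T}{\partial\alpha}=\frac{\beta(1-\Xopt_T)}{\alpha+\beta\Xopt_T}\cdot\frac{1}{1-\Xopt_T}\frac{\partial\Xopt_T}{\partial\alpha}$. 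This is precisely the quantity confined to $[0,1/\alpha]$ by Lemma~\ref{lem: price lipschitz helper result}. I would then bound each of the four summands in magnitude: $\frac{1}{\alpha+\beta x}$ and $\frac{1}{\alpha+\beta\Xopt_T}$ are at most $\frac1\alpha$; the last term is at most $\frac1\alpha$; and the third, using $\frac{\beta(1-\Xopt_T)}{\alpha+\beta\Xopt_T}\le\frac{\beta}{\alpha}$ together with the helper bound, is at most $\frac{\beta}{\alpha^2}$. The four summands carry signs $+,-,-,+$ with each magnitude nonnegative (by the nonnegativity half of the helper lemma), so collecting the positive contributions (summands one and four) and the negative ones (summands two and three) yields $-\frac1\alpha-\frac{\beta}{\alpha^2}\le \frac{\partial\pcurve}{\partial\alpha}\le \frac2\alpha$, whence $\left|\partial_\alpha\pcurve\right|\le \frac{2+\beta/\alpha}{\alpha}$.

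For the $\beta$ derivative I would repeat the computation verbatim: the explicit term becomes $\frac{x}{\alpha+\beta x}$, an additional term $-\frac{\Xopt_T}{\alpha+\beta\Xopt_T}$ appears from differentiating the denominator, and the implicit terms now involve $\frac{1}{1-\Xopt_T}\frac{\partial \Xopt_T}{\partial\beta}\in[0,1/\beta]$ from the second half of Lemma~\ref{lem: price lipschitz helper result}. Bounding $\frac{x}{\alpha+\beta x}$ and $\frac{\Xopt_T}{\alpha+\beta\Xopt_T}$ each by $\min(1/\alpha,1/\beta)$, the $\frac{\beta}{\alpha+\beta\Xopt_T}\partial_\beta\Xopt_T$ term by $\frac1\alpha$ (exactly as above), and the remaining implicit term by $\frac1\beta$, the same sign bookkeeping gives $\left|\partial_\beta\pcurve\right|\le \frac{2}{\min(\alpha,\beta)}\le \frac{3}{\min(\alpha,\beta)}$. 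I expect no genuine obstacle: the entire difficulty — the implicit dependence of the endpoint adoption level on $(\alpha,\beta)$ — has already been isolated and quantified in Lemma~\ref{lem: price lipschitz helper result}, so the only care needed is to rearrange every occurrence of $\partial\Xopt_T/\partial(\cdot)$ into the normalized form $\frac{1}{1-\Xopt_T}\partial\Xopt_T/\partial(\cdot)$ before invoking that bound, and to track the signs so the positive and negative groups are bounded separately.
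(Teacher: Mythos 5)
Your proposal is correct and follows essentially the same route as the paper's proof: differentiate the closed form \eqref{eq: opt_p_model_in_opt_trajectory}, funnel every occurrence of $\partial \Xopt_T/\partial\alpha$ and $\partial \Xopt_T/\partial\beta$ through the normalized quantity $\frac{1}{1-\Xopt_T}\partial\Xopt_T/\partial(\cdot)$ bounded in Lemma~\ref{lem: price lipschitz helper result}, and then bound the positive and negative groups of terms separately. The only differences are cosmetic (the paper uses $|P-N|\le\max(P,N)$ and a slightly different grouping in the $\beta$ case, arriving at $\frac{2}{\alpha}+\frac{1}{\beta}$ rather than your $\frac{2}{\min(\alpha,\beta)}$, both of which are at most $\frac{3}{\min(\alpha,\beta)}$).
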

\begin{proof}
    Below, we use $X_T$ instead of $\Xopt_T$ to denote the optimal final adoption level, assuming that the initial adoption level is $0$.
    Taking the derivatives of \eqref{eq:
    opt_p_model_in_opt_trajectory}, and by using Lemma~\ref{lem: price
    lipschitz helper result}:
    \begin{align*}
        \left|\frac{\partial \pcurve(x, \alpha, \beta)}{\partial \alpha}\right| &= \left|\frac{\partial \log(\alpha+\beta x)}{\partial \alpha} + \frac{\partial \log(1-x)}{\partial \alpha} - \frac{\partial \log(\alpha + \beta X_T)}{\partial \alpha}    -\frac{\partial \log(1-X_T)}{\partial \alpha}\right| \\
        &= \left| \frac{1}{\alpha+\beta x}  - \frac{1}{\alpha+\beta X_T}(1+\beta\frac{\partial X_T}{\partial \alpha}) + \frac{1}{1-X_T}\frac{\partial X_T}{\partial \alpha} \right|\\
        &\leq \max\left(\left| \frac{1}{\alpha+\beta x} +\frac{1}{1-X_T}\frac{\partial X_T}{\partial \alpha} \right|, \left|\frac{1}{\alpha+\beta X_T}(1+\beta\frac{\partial X_T}{\partial \alpha})\right|\right)\\
        &\leq \max\left(\frac{2}{\alpha}, \frac{1}{\alpha}+\frac{\beta/\alpha(1-X_T)}{\alpha}\right)\\
        &\leq \frac{2+\beta/\alpha}{\alpha}
    \end{align*}
    
    \begin{align*}
        \left|\frac{\partial \pcurve(x, \alpha, \beta)}{\partial \beta}\right| &= \left|\frac{\partial \log(\alpha+\beta x)}{\partial \beta} - \frac{\partial \log(\alpha + \beta X_T)}{\partial \beta}    -\frac{\log(1-X_T)}{\partial \beta}\right| \\
        &= \left| \frac{x}{\alpha+\beta x} - \frac{1}{\alpha+\beta X_T}(X_T+\beta\frac{\partial X_T}{\partial \beta}) + \frac{1}{1-X_T}\frac{\partial X_T}{\partial \beta} \right|\\
        &\leq \left| \frac{x}{\alpha+\beta x} - \frac{1}{\alpha+\beta X_T}(X_T+\beta\frac{\partial X_T}{\partial \beta}) \right| + \left|\frac{1}{1-X_T}\frac{\partial X_T}{\partial \beta}\right|\\
        &\leq \frac{1}{\alpha}(1+\beta \frac{\partial X_T}{\partial \beta})+ \frac{1}{\beta}\\
        &\leq \frac{2}{\alpha} + \frac{1}{\beta}\\
        &\leq \frac{3}{\min(\alpha, \beta)}
    \end{align*}
\end{proof}

\removedsy{
\begin{proof}
    \sycomment{I changed the proof so that it assumes $x_0=0$ and only uses $\pcurve, \Xopt_T$ as defined in the main body.}
    \removedsy{
    From \eqref{XT_model_quadratic_sol2} we have:
    \begin{align}
        1-\Xopt_{T'}(x_0) &= \frac{(\alpha+\beta)T' + e - \sqrt{[T'(\beta-\alpha)-e]^2 + 4\alpha\beta T'^2 + 4e\beta x_0 T'}}{2\beta T'}\nonumber\\
        &= \frac{(\alpha+\beta)T' + e - \sqrt{[T'(\alpha+\beta)+e]^2 - 4\beta(1-x_0) T'e }}{2\beta T'}\nonumber\\
        &\geq \frac{1}{2\beta T'}\frac{4\beta(1-x_0)T'e}{(2T'(\alpha+\beta) + 2e)}\nonumber\\ 
        &= \frac{e(1-x_0)}{(\alpha+\beta)T'+e}\nonumber\\
        &\geq \min\left\{\frac{e(1-x_0)}{2(\alpha+\beta)T'}, \frac{1}{2}(1-x_0)\right\}\label{eq: lower bound on 1-XT}
    \end{align}
    where in the first inequality we used the fact that $a-\sqrt{a^2-b} \geq
    \frac{b}{2a}$ for any $a, b\geq0$. Now from \newshipra{\eqref{eq:opt_p_model}}:
    \begin{align*}
        \pcurve(x_0, \alpha, \beta, T') &\leq 1 + \log(\frac{(\alpha+\beta)(1-x_0)}{\alpha(1-\Xopt_{T'}(x_0))})\\ 
        &\leq {1+\log\left(\frac{\alpha+\beta}{\alpha}\right) + \max\left(\log(\frac{2(\alpha+\beta)T'}{e}), \log(2)\right)}\\
        &\leq {1+\log\left(\frac{2(\alpha+\beta)^2}{\alpha e}\right) + \log(T'+\frac{e}{\alpha+\beta})}
        \removedsy{\\
        &\leq 1+\log\left(\frac{2(\alpha+\beta)}{\alpha}\right) + \max\left(\log\left(\frac{4(\alpha+\beta)^2T'}{\alpha e}\right), 0\right)}
        % &\leq 1+\log\left(\frac{2(\alpha+\beta)}{\alpha}+\frac{4(\alpha+\beta)^2T'}{\alpha e}\right)
    \end{align*}
    }

    From \eqref{eq: intro XT_model_quadratic_sol2} we have:
    \begin{align}
        1-\Xopt_{T} &= \frac{(\alpha+\beta)T + e - \sqrt{[T(\beta-\alpha)-e]^2 + 4\alpha\beta T^2 }}{2\beta T}\nonumber\\
        &= \frac{(\alpha+\beta)T + e - \sqrt{[T(\alpha+\beta)+e]^2 - 4\beta Te }}{2\beta T}\nonumber\\
        &\geq \frac{1}{2\beta T}\frac{4\beta Te}{(2T(\alpha+\beta) + 2e)}\nonumber\\ 
        &= \frac{e}{(\alpha+\beta)T+e}\nonumber\\
        &\geq \min\left\{\frac{e}{2(\alpha+\beta)T}, \frac{1}{2}\right\}\label{eq: lower bound on 1-XT}
    \end{align}
    where in the first inequality we used the fact that $a-\sqrt{a^2-b} \geq
    \frac{b}{2a}$ for any $a, b\geq0$. Now from \newshipra{\eqref{eq: opt_p_model_in_opt_trajectory}}:
    \begin{align*}
        \pcurve(x, \alpha, \beta) &\leq 1 + \log(\frac{(\alpha+\beta)(1-x)}{\alpha(1-\Xopt_{T})})\\ 
        &\leq {1+\log\left(\frac{\alpha+\beta}{\alpha}\right) + \max\left(\log(\frac{2(\alpha+\beta)T}{e}), \log(2)\right)}\\
        &\leq {1+\log\left(\frac{2(\alpha+\beta)^2}{\alpha e}\right) + \log(T+\frac{e}{\alpha+\beta})}
        \removedsy{\\
        &\leq 1+\log\left(\frac{2(\alpha+\beta)}{\alpha}\right) + \max\left(\log\left(\frac{4(\alpha+\beta)^2T}{\alpha e}\right), 0\right)}
    \end{align*}

\end{proof}
}

\section{Proof of Lemma \ref{lem: concavity of deterministic value function} and Lemma \ref{lem: pseudo regret and regret}}
\label{sec: concavity related proofs}
%Let $\lambda(p, x) = e^{-p}(\alpha+\beta x)(1-x)m$ be the arrival rate of customers when the current adoption level is $x$ and the price is $p$.
We first prove the concavity property of the deterministic optimal value stated as Lemma~\ref{lem: concavity of deterministic value function}.

\concavity*
\begin{proof}
For simplicity of notation, in this proof we use $T$ to denote the remaining time.
    The optimal value function for the continuous deterministic Bass model
    when the remaining time is $T$ can be expressed using the following dynamic programming equation (for all $\delta \ge 0$),
    % \begin{align}
    %     \detV(x, T) &= \lim\limits_{\delta\rightarrow 0}\max\limits_p p\lambda(p, x)\delta + \detV(x+\lambda(p, x)\delta/m, T-\delta) %+ o(\delta)
    %     \label{eq: deterministic value function bellman equation}
    % \end{align}
    \begin{align}
        \detV(x, T) &= \max\limits_p p\lambda(p, x)\delta + \detV(x+\lambda(p, x)\delta/m, T-\delta) + o(\delta)
        \label{eq: deterministic value function bellman equation}
    \end{align}
    where $\lambda(p, x) = me^{-p}(\alpha+\beta x)(1-x)$. 
    
    % Then, we can derive an expression for $\frac{\partial \detV(x, T)}{\partial T}$ as follows. 

    % \begin{align}
    %     \frac{\partial \detV(x, T)}{\partial T} &=
    %     \frac{\detV(x, T)-
    %     \detV(x,T-\delta)}{\delta} + o(\delta)\nonumber\\
    %     \text{(Plugging in \eqref{eq: deterministic value function bellman equation})}&=\max_p p\lambda(p, x) + \frac{\detV(x+\lambda(p, x)\delta/m, T-\delta)+o(\delta) - \detV(x, T-\delta)}{\delta}+o(\delta)\nonumber\\
    %     \text{(Take $\delta\rightarrow 0$)}&=  \max_p p\lambda(p, x) + \frac{\lambda(p,x)}{m} \frac{\partial \detV(x, T)}{\partial x}\label{eq: rate of change of deterministic value wrt time}
    % \end{align}
    % Note that \eqref{eq: rate of change of deterministic value wrt time} is often referred to as the Hamilton-Jacobi-Bellman equation in the control literature. 
%    \sycomment{Removed the derivation of HBJ equation and citing the result directly:}
    
    Using the Hamilton-Jacobi-Bellman equation for the deterministic Bass model  (see equation (12.8) in \cite{doya2007bayesian}):
    \begin{equation}
        \label{eq: rate of change of deterministic value wrt time}
        \frac{\partial \detV(x, T)}{\partial T} = \max_p p\lambda(p, x) + \frac{\lambda(p,x)}{m} \frac{\partial \detV(x, T)}{\partial x}.
    \end{equation}
    And the optimal price is the price that achieves the maximum in the above expression (see equation (12.9) in \cite{doya2007bayesian}), i.e.,
    \begin{equation*}
        \popt(x, \alpha, \beta, T) = \argmax_p p\lambda(p, x) + \frac{\lambda(p,x)}{m} \frac{\partial \detV(x, T)}{\partial x}
    \end{equation*}

    Solving the above maximization problem gives us an expression for the optimal price at state $x$ with $T$ time left.
    \removedshipra{to check}
    \begin{equation}
        \label{eq: opt det price wrt value function}
        \popt(x,\alpha, \beta, T) = 1-\frac{1}{m}\frac{\partial \detV(x, T)}{\partial x}
    \end{equation}
    
    %However we also know of another expression for the optimal price from
    From \eqref{eq:opt_p_model}, the optimal price is also given by
    $$\popt(x,\alpha, \beta, T)=1+\log\left(\frac{(\alpha+\beta
    x)(1-x)}{(\alpha+\beta \Xopt_{T}(x))(1-\Xopt_{T}(x))}\right)\nonumber\\
    $$
    where (refer to \eqref{eq: general XT_model_quadratic_sol1}, \eqref{eq: general XT_model_quadratic_sol2})
    \begin{align*}
        \Xopt_T(x) = x+\frac{1}{e}(\alpha+\beta \Xopt_T(x))(1-\Xopt_T(x))T\\
        \Xopt_T(x)=\frac{T(\beta-\alpha)-e+\sqrt{[T(\beta-\alpha)-e]^2+4\alpha\beta T^2 + 4e\beta x T}}{2\beta T}
\end{align*}
    %. Setting these two equal to each other, we have that 
    Therefore, substituting,
    \begin{align}
    \frac{1}{m}\frac{\partial \detV(x, T)}{\partial x} &= -\log\left(\frac{(\alpha+\beta
    x)(1-x)}{(\alpha+\beta \Xopt_{T}(x))(1-\Xopt_{T}(x))}\right)\nonumber\\
    \frac{1}{m}\frac{\partial^2 \detV(x, T)}{\partial x^2} &=
    \frac{-\beta}{\alpha+\beta x} + \frac{1}{1-x} + \frac{\beta
    \frac{\partial \Xopt_{T}(x)}{\partial x}}{\alpha+\beta \Xopt_{T}(x)} -
    \frac{1}{1-\Xopt_{T}(x)}\frac{\partial \Xopt_{T}(x)}{\partial x}\label{hessian of
    deterministic value function}
    \end{align}

    Now we split \eqref{hessian of deterministic value function} to two parts and bound them by zero individually.
    First note that 
    differentiating $X_T^*(x)$ 
    with respect to $x$ (using \eqref{eq: general XT_model_quadratic_sol1}) gives us 
    \begin{equation}
    \frac{\partial \Xopt_{T}(x)}{\partial x} = \frac{e}{e + (\alpha-\beta)T + 2\beta T\Xopt_{T}(x)}
    \label{dXTdx0}
    \end{equation}
    Then, first we show that
    \begin{align*}
        &&\frac{1}{1-x} &- \frac{1}{1-\Xopt_{T}(x)}\frac{\partial \Xopt_{T}(x)}{\partial x} \leq 0, \\
    \end{align*}
    which is equivalent to showing that
    \begin{align*}
        &&1-\Xopt_{T}(x) &\leq \frac{\partial \Xopt_{T}(x)}{\partial x}(1-x) \\
    \text{Using \eqref{eq: general XT_model_quadratic_sol1}}    &\iff& 1-\Xopt_{T}(x) &\leq    \frac{\partial \Xopt_{T}(x)}{\partial x} ( 1-\Xopt_{T}(x)) + \frac{\partial \Xopt_{T}(x)}{\partial x}\frac{T}{e}(\alpha+\beta \Xopt_{T}(x))(1-\Xopt_{T}(x))\\ 
    \text{Using \eqref{dXTdx0}}   &\iff& 1-\Xopt_{T}(x) &\leq (1-\Xopt_{T}(x))(\frac{e}{e+(\alpha-\beta)T +2\beta T\Xopt_{T}(x)})(\frac{e+T(\alpha+\beta \Xopt_{T}(x))}{e})\\
        &\iff& 1-\Xopt_{T}(x) &\leq (1-\Xopt_{T}(x)) \cdot \frac{e+\alpha T + \beta T\Xopt_{T}(x)}{e+(\alpha-\beta)T +2\beta T\Xopt_{T}(x)}
    \end{align*}
   Using expression for $\Xopt_T(x)$ from \eqref{eq: general XT_model_quadratic_sol2}, we have \begin{center}
       $e+(\alpha-\beta)T+2\beta T
    X^*_T(x)=\sqrt{[T(\beta-\alpha)-e]^2+4\alpha\beta T^2 + 4e\beta x
    T}\geq 0$.
    \end{center}
    Then, since $\beta\geq 0, T\geq 0, 0\le \Xopt_T(x)\leq 1$, we have that the fraction in the last inequality is at least $1$, and therefore the last inequality holds. 

 Now we bound the remaining terms in the RHS of \eqref{hessian of deterministic value function} by $0$. This requires showing that
    \begin{align*}
    &&\frac{-\beta}{\alpha+\beta x} + \frac{\beta \frac{\partial \Xopt_{T}(x)}{\partial x}}{\alpha+\beta \Xopt_{T}(x)} &\leq 0\\ 
    &\iff& (\alpha+\beta x)  \frac{\partial \Xopt_{T}(x)}{\partial x} &\leq \alpha+\beta \Xopt_{T}(x)\\ 
    \text{Using \eqref{eq: general XT_model_quadratic_sol1}}&\iff& \frac{\partial \Xopt_{T}(x)}{\partial x}\left(\alpha+\beta \Xopt_{T}(x) - \beta\frac{T}{e}(\alpha+\beta \Xopt_{T}(x))(1-\Xopt_{T}(x))\right) &\leq \alpha+\beta \Xopt_{T}(x)\\ 
    &\iff& (\alpha+\beta \Xopt_{T}(x))\frac{\partial \Xopt_{T}(x)}{\partial x} (1-\frac{\beta T}{e}(1-\Xopt_{T}(x)))&\leq \alpha+\beta \Xopt_{T}(x)\\ 
    \text{Using \eqref{dXTdx0}}&\iff& \frac{e-\beta T + \beta \Xopt_{T}(x) T}{e+(\alpha-\beta)T +2\beta \Xopt_{T}(x)T} &\leq 1
    \end{align*}
    Since $\alpha, \beta\geq 0, T\geq 0, \Xopt_T(x)\geq 0$, the last inequality holds. 
    
    Therefore the sum of all the terms in the right hand side of \eqref{hessian of deterministic value function} is bounded by $0$. This proves the lemma statement.
\end{proof}

%The proofs for Lemma~\ref{lem: deterministic value better than stochastic value}, Lemma~\ref{lem:smalloptimalgap}, and Lemma~\ref{lem: concavity of deterministic value function} can be found in Appendix~\ref{sec: concavity related proofs}.
Now we use the above concavity property to show that for any starting point and remaining time, the optimal
revenue in the deterministic model is at least the optimal expected revenue in the stochastic model. Let $\stochV(d, T)$ be the optimal expected revenue one can achieve in the stochastic Bass model with $d$ current adopters and $T$ time remaining.
\begin{restatable}{lemma}{detupperbound}
    \label{lem: deterministic value better than stochastic value} 
    For any $d\in\{0,\ldots, m\}$, $x=\frac{d}{m}$, and any $T\geq 0$:
    $$\detV(x, T) \geq \stochV(d, T)$$
\end{restatable}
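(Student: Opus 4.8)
The plan is to compare the two value functions through their Hamilton--Jacobi--Bellman (HJB) equations, using the concavity of $\detV$ (Lemma~\ref{lem: concavity of deterministic value function}) to show that $\detV(d/m,T)$ is a \emph{supersolution} of the stochastic HJB, and then closing the comparison by a backward induction on $d$ together with a Gr\"onwall-type argument. First I would write down the stochastic HJB. Since arrivals form a non-homogeneous Poisson process, over an infinitesimal interval $\delta$ a single arrival (paying price $p$ and moving the state from $d$ to $d+1$) occurs with probability $\lambda(p,d/m)\delta + o(\delta)$, which after dividing the Bellman recursion by $\delta$ and letting $\delta\to 0$ yields
\begin{equation*}
\frac{\partial \stochV(d,T)}{\partial T} = \max_p \lambda(p, d/m)\Big(p + \stochV(d+1,T) - \stochV(d,T)\Big),
\end{equation*}
with boundary condition $\stochV(d,0)=0$. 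This is the discrete-state analogue of the deterministic HJB \eqref{eq: rate of change of deterministic value wrt time}.

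The crux is the supersolution property. Because $\detV(\cdot, T)$ is concave in $x$ (Lemma~\ref{lem: concavity of deterministic value function}), the secant from $d/m$ to $(d+1)/m$ is dominated by the derivative at $d/m$:
\begin{equation*}
\detV\big(\tfrac{d+1}{m}, T\big) - \detV\big(\tfrac{d}{m}, T\big) \le \frac{1}{m}\frac{\partial \detV(x,T)}{\partial x}\Big|_{x = d/m}.
\end{equation*}
Substituting this into the bracket and taking the maximum over $p$ (using $\lambda \ge 0$), the deterministic HJB \eqref{eq: rate of change of deterministic value wrt time} gives
\begin{equation*}
\max_p \lambda(p, d/m)\Big(p + \detV\big(\tfrac{d+1}{m},T\big) - \detV\big(\tfrac{d}{m},T\big)\Big) \le \frac{\partial \detV(d/m, T)}{\partial T},
\end{equation*}
so $\detV(d/m,T)$ satisfies the stochastic HJB with $\ge$ in place of $=$.

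To convert this into the desired inequality I would induct on $d$ from $m$ down to $0$. The base case $d=m$ is immediate, since $\lambda(\cdot,1)\equiv 0$ forces $\detV(1,T)=\stochV(m,T)=0$. For the inductive step, assuming $\detV((d+1)/m, T)\ge \stochV(d+1,T)$ for all $T$, set $g(T) := \detV(d/m,T) - \stochV(d,T)$, so $g(0)=0$. Evaluating the stochastic HJB for $\stochV$ at its maximizer $p^*$ and using the supersolution inequality together with the inductive hypothesis, one obtains a differential inequality of the form $g'(T) \ge -\lambda(p^*, d/m)\, g(T)$; since $\lambda \ge 0$ and $g(0)=0$, multiplying by the integrating factor $\exp\big(\int_0^T \lambda(p^*(s), d/m)\,ds\big)$ shows $g(T)\ge 0$ for all $T$, completing the induction and hence the lemma.

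The main obstacle is making the HJB comparison fully rigorous: this requires that both value functions be differentiable in $T$ (for $\stochV$ this follows from its solving a finite system of ODEs in $T$, and for $\detV$ from the regularity already invoked in the concavity proof), and that the maximizing price in the stochastic HJB be attained and sufficiently regular so the integrating-factor step is valid. A secondary technical point is the careful handling of the $o(\delta)$ terms when passing from the Bellman recursion to the HJB, which must be done so that the supersolution inequality holds with the correct sign in the limit.
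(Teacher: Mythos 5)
Your proof is correct, but it takes a genuinely different route from the paper's. The paper works directly with the $\delta$-step Bellman recursions for $\stochV(d,T)$ and $\detV(x,T)$, inducts \emph{backward over time} in steps of $\delta$, and invokes concavity through Jensen's inequality applied to the full random increment $\Delta(p,d)\in\{0,1,2,\dots\}$, i.e.\ $\detV(x+\bbE[\Delta]/m,T-\delta)\ge \bbE[\detV(x+\Delta/m,T-\delta)]$, before letting $\delta\to 0$. You instead induct \emph{over the discrete state} $d$ from $m$ down to $0$, use concavity only through the secant-versus-derivative inequality between the adjacent grid points $d/m$ and $(d+1)/m$ (which suffices because multiple jumps contribute only $o(\delta)$ to the generator), and close each step with an integrating-factor/Gr\"onwall argument in $T$. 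The trade-off: the paper's argument is shorter and never needs differentiability of $\stochV$ in $T$ or attainment/regularity of the maximizing price, but its time-induction silently accumulates $o(\delta)$ errors over $T/\delta$ steps and so implicitly requires uniformity of those terms; your version makes the analytic prerequisites explicit (the HJB characterizations, differentiability, and a measurable selection of $p^*$) and, once those are granted, gives a fully rigorous comparison-principle proof with a cleaner limit argument. Both uses of concavity are logically equivalent for this purpose, since only single-jump transitions survive in the HJB limit.
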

% Further more, the gap between $\detV, \stochV$ can also be bounded:
% \begin{restatable}{lemma}{smalloptimalgap}
%     \label{lem:smalloptimalgap}
%     Fix any $d_0\in\{0,\ldots, m\}$, $x_0=\frac{d}{m}$, and any $T\geq 0$.
%     Assume that $m(\Xopt_T(x_0)-x_0)\geq c\log(m)$ for some constant $c$.
%     Then $$\detV(x_0, T) - \stochV(d_0, T)\leq 
%     O\left(\newshipra{\log(m)\log(\frac{T(\alpha+\beta)}{\alpha})}\sqrt{m(\Xopt_T(x_0)-x_0)}\right)$$
% \end{restatable}
%Now we prove Lemma~\ref{lem: deterministic value better than stochastic value}.
%\detupperbound*
\begin{proof}
    Given any $\delta \ge 0$, let $\Delta(p, d)$ be the random number of adoptions that take place in the next $\delta$ time in the discrete stochastic Bass model when the
    current price is $p$ and current number of adopters is $d$. Let
    $x=\frac{d}{m}$, then $\bbE[\Delta(p, x)] = \lambda(p, x)\delta +
    o(\delta)$. Using dynamic programming, we have:
	\begin{align*}
		\stochV(d, T) &= \max\limits_p \bbE\left[ p\Delta(p, d) + \stochV(d+\Delta(p, d), T-\delta)\right] +o(\delta)\\
		&= \max\limits_p  p\bbE[\Delta(p, d)] + \bbE\left[\stochV(d+\Delta(p, d), T-\delta)\right] +o(\delta).
	\end{align*}
	Also,
	\begin{align*}
		\detV(x, T) &= \max\limits_p p\bbE[\Delta(p, d)] + \detV(x+\bbE[\Delta(p, d)]/m, T-\delta) + o(\delta)
	\end{align*}

	We use induction to prove the lemma by working from the end of the
	planning horizon (no time remaining). We know $\detV(x, 0) =
	\stochV(d, 0)=0$ for all $d$, $x=\frac{d}{m}$. Suppose the inequality holds
	for $T-\delta$, and any $d$, $x=\frac{d}{m}$. Then,
	\begin{align*}
		\detV(x, T) &= \max\limits_p p\bbE[\Delta(p, d)] + \detV(x+\bbE[\Delta(p, d)]/m, T-\delta) +o(\delta)\\
		\text{(using concavity from Lemma~\ref{lem: concavity of deterministic value function})}&\geq \max\limits_p p\bbE[\Delta(p, d)]m + \bbE\left[\detV(x+\Delta(p, d)/m, T-\delta)\right] + o(\delta)\\
		\text{(inductive assumption on $T-\delta$)}&\geq \max\limits_p p\bbE[\Delta(p, d)]m + \bbE\left[\stochV(d+\Delta(p, d), T-\delta)\right]+o(\delta)\\
		&= \stochV(d, T)+o(\delta)
    \end{align*}
    Then, taking $\delta \rightarrow 0$, we obtain the lemma statement.
\end{proof}
Finally, we prove the following lemma that will allow us to establish an upper bound on the deterministic optimal revenue compared to the stochastic optimal revenue.
\begin{restatable}{lemma}{smalloptimalgap}
    \label{lem:smalloptimalgap}
    \removedsy{Fix any $d_0\in\{0,\ldots, m\}$, $x_0=\frac{d_0}{m}$, $T\geq 0$ such that 
    \removedsy{$m-\lfloor m\Xopt_T(x_0)\rfloor \geq 2$ and }
    $$m(\Xopt_T(x_0)-x_0)\geq 8\log^2\left(4m +4m\log\left(\frac{2(\alpha+\beta)^2T}{e\alpha}+\frac{2(\alpha+\beta)}{\alpha}\right)\right)+32$$
    then $$\detV(x_0, T) - \stochV(d_0, T)\leq 
    O\left(\log\left(\frac{\alpha+\beta}{\alpha}T\right)\sqrt{m(\Xopt_T(x_0)-x_0)\log(m)}\right)$$}
    %\sycomment{Changed this result and the proof to specifically for $x_0=0$ so that notations are consistent.}
    Fix any $\alpha, \beta, T$ such that 
    $$m\Xopt_T\geq 8\log^2\left(4m\log(e+(\alpha+\beta)T)\right)+32$$
    then $$\detV(0, T) - \stochV(0, T)\leq 
    O\left(\log\left((\alpha+\beta)T\right)\sqrt{m\Xopt_T\log(m)}\right)$$
\end{restatable}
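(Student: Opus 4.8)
The plan is to lower bound the stochastic optimal revenue $\stochV(0,T)$ by exhibiting a single explicit stochastic pricing policy whose expected revenue is within the claimed additive gap of the deterministic optimum $\detV(0,T)$; since $\stochV(0,T)$ is the supremum of expected revenue over all policies, this immediately yields the upper bound on $\detV(0,T)-\stochV(0,T)$ (the opposite direction $\detV(0,T)\ge\stochV(0,T)$ is already Lemma~\ref{lem: deterministic value better than stochastic value}). The policy I would use, call it the \emph{mimic} policy, posts the deterministic optimal price $\pcurve(\frac{d}{m},\alpha,\beta)$ whenever the current number of adopters is $d$; this policy is allowed to depend on $\alpha,\beta$, which is fine since we only need it as a comparator that certifies a large optimal expected revenue.

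The key computation, and the crux of the argument, is that the mimic policy induces a \emph{constant} arrival rate. Substituting the closed form \eqref{eq: opt_p_model_in_opt_trajectory} for $\pcurve$ into $\lambda(p,x)=me^{-p}(\alpha+\beta x)(1-x)$, the factors $(\alpha+\beta x)(1-x)$ cancel and one is left with $\lambda(\pcurve(x),x)=\frac{m}{e}(\alpha+\beta\Xopt_T)(1-\Xopt_T)$, which by \eqref{eq: intro XT_model_quadratic_sol1} equals $m\Xopt_T/T$, independent of the adoption level $x$. Hence, up to the moment the adoption count reaches $m\Xopt_T$, the arrivals form a homogeneous Poisson process with rate $\lambda:=m\Xopt_T/T$; the inter-arrival times are i.i.d.\ $\mathrm{Exp}(\lambda)$ and the expected number of arrivals over $[0,T]$ is exactly $\lambda T=m\Xopt_T$, matching the deterministic final adoption level.

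Next I would control the shortfall in the number of adoptions. Applying the exponential concentration bound of Lemma~\ref{lem: general exponential arrival time martingale concentration bound} to these i.i.d.\ inter-arrival times (exactly as in Lemma~\ref{lem:finalstochXclosetooptimal}), the number of adoptions $N$ by time $T$ satisfies $N\ge m\Xopt_T-\sqrt{8m\Xopt_T\log(1/\delta)}$ with probability at least $1-\delta$. Here the hypothesis $m\Xopt_T\ge 8\log^2(4m\log(e+(\alpha+\beta)T))+32$ is precisely what guarantees the admissibility condition $\epsilon\le 2n/\underline\lambda$ of that concentration lemma (with $n\approx m\Xopt_T$ and $\underline\lambda=\lambda$) for the relevant choice of $\delta$. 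I would then compare revenues by writing $\detV(0,T)=m\int_0^{\Xopt_T}\pcurve(x)\,dx$ and splitting $\detV(0,T)-\bbE[\rev_{\mathrm{mimic}}]$ into (i) a \emph{discretization} error between $m\int_0^{\Xopt_T}\pcurve$ and the left-endpoint sum $\sum_{d=1}^{m\Xopt_T}\pcurve(\frac{d-1}{m})$, which is $O(\pu)$ because $\pcurve$ has bounded total variation (a single local extremum and range bounded by $\pu$ via Lemma~\ref{lem:priceupperbound}); and (ii) the \emph{missing-customer} term $\bbE\big[\sum_{d=N+1}^{m\Xopt_T}\pcurve(\frac{d-1}{m})\big]\le \pu\,\bbE[(m\Xopt_T-N)^+]$. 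Using the high-probability bound on $N$ on the good event and bounding the loss on its complement trivially by the total revenue $m\Xopt_T\,\pu$, then choosing $\delta=1/\mathrm{poly}(m)$ so that $\log(1/\delta)=O(\log m)$ while $\delta\,m\Xopt_T\le 1$, term (ii) becomes $O(\pu\sqrt{m\Xopt_T\log m})$. Combining (i) and (ii) yields $\detV(0,T)-\stochV(0,T)\le \detV(0,T)-\bbE[\rev_{\mathrm{mimic}}]=O(\log((\alpha+\beta)T)\sqrt{m\Xopt_T\log m})$, as claimed.

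I expect the main obstacle to be the constant-rate identity, which is what makes the stochastic trajectory shadow the deterministic one with only $O(\sqrt{m})$-scale fluctuations; once that is established, the remainder is routine Poisson concentration together with a benign discretization estimate. A secondary care point is keeping the comparator admissible (prices in $[0,\pu]$ and constant between arrivals, matching Assumptions~\ref{assum: const price in between arrivals}--\ref{assum:price upper bound}) and handling adoptions beyond $m\Xopt_T$; this is harmless, since capping the posted price at $\pu$ once $m\Xopt_T$ adopters have arrived preserves admissibility, and any extra adoptions only raise revenue and hence only strengthen the lower bound on $\stochV(0,T)$.
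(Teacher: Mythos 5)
Your proposal is correct and follows essentially the same route as the paper's proof: both exhibit the policy that posts $\pcurve(\frac{d-1}{m},\alpha,\beta)$ held constant between arrivals, exploit the resulting constant arrival rate $m\Xopt_T/T$ to apply the exponential concentration bound of Lemma~\ref{lem: general exponential arrival time martingale concentration bound}, decompose the gap into a discretization term plus a missed-customers term bounded by $\pcurve_{max}\sqrt{m\Xopt_T\log(1/\delta)}$, handle the bad event by the trivial bound $\pcurve_{max}m\Xopt_T$, and choose $\delta$ polynomially small so that the hypothesis on $m\Xopt_T$ yields the admissibility condition of the concentration lemma. The only cosmetic differences are your tighter bounded-variation argument for the discretization error (the paper's Lemma~\ref{lem: discretization is good enough} instead integrates a derivative bound, picking up a $\log m$ term that is dominated anyway) and the precise choice of $\delta$.
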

\begin{proof}
    The proof constructs a fixed price sequence  such that the expected revenue in the stochastic
    Bass model under these prices is within $O(\sqrt{m})$ of the optimal deterministic revenue. Then, since the optimal expected
    stochastic revenue is at least as much as that obtained under the given price sequence, we will obtain the lemma statement. 

    Consider the following pricing scheme: for all time instances after arrival of $(d-1)^{th}$ customer, and until arrival of $d^{th}$ customer, post price $\pcurve_d$ given by:
    %we offer
    $\pcurve_d\coloneqq \pcurve\left(\frac{d-1}{m}, \alpha,
    \beta\right)$
    %for customer $d$ in the stochastic Bass model 
    for
    $d=d_0+1,\ldots, \lfloor m\Xopt_T\rfloor$. Here $\pcurve(\cdot, \alpha, \beta)$ and $\Xopt_T$ are given by the optimal price curve and adoption levels for the deterministic Bass model, as defined in  \eqref{eq: opt_p_model_in_opt_trajectory} and \eqref{eq: intro XT_model_quadratic_sol2}.

    Now, since in our price sequence, the prices are fixed between two arrivals,
    we know that (in the stochastic Bass model) the inter-arrival time between customer $d-1$ and $d$ is an
    exponential random variable $I_d \sim Exp\left(\lambda(\pcurve_d,
    \frac{d-1}{m})\right) $, where $\lambda(p, x) = e^{-p}(\alpha+\beta
    x)(1-x)m$. Note that from \eqref{XT_model} and
    \eqref{eq: general XT_model_quadratic_sol1} we have that $\lambda(\pcurve_d,
    \frac{d-1}{m}) = \frac{m\Xopt_T}{T}$ , and that
    $\sum\limits_{d=1}^{n} \frac{1}{\lambda(\pcurve_d,
    \frac{d-1}{m})} = \frac{nT}{m\Xopt_T}$ for any $n \le \lfloor m \Xopt_T \rfloor$.

    Let $\tau_n$ denotes the time of arrival of the $n^{th}$ customer in the stochastic model.
    Set $n=\lfloor m\Xopt_T - \sqrt{8m\Xopt_T\log(\frac{2}{\delta})}\rfloor$,
    $\underline{\lambda}=\frac{m\Xopt_T}{T}$, 
    $\epsilon = \sqrt{\frac{8n}{\underline{\lambda}^2}log(\frac{2}{\delta})}$,
    for some $\delta \ge 0$ to be specified later. Assume $n\geq
    2\log(2/\delta)$ for now, then by Lemma~\ref{lem: general
    exponential arrival time martingale concentration bound}, we have with
    probability $1-\delta$: \removedshipra{comment to myself:to check}
    $$\left|\tau_{n} -
    \left(1-\sqrt{\frac{8\log(\frac{2}{\delta})}{m\Xopt_T}}\right)T\right| \leq
    T\sqrt{\frac{8}{\Xopt_T m }\log(\frac{2}{\delta})}$$
    Therefore, with probability at least $1-\delta$, $\tau_n \leq T$, which means that
    the total number of adoptions $d_T$ observed in the stochastic Bass model in time $T$ is at
    least $n$. Let $\pcurve_x = \pcurve(x, \alpha, \beta)$,
    $\pcurve_{max}=\max\limits_x \pcurve(x, \alpha, \beta)$. Then, 

    \begin{align*}
        \detV(0, T) - \stochV(0, T) \le 
        & m\int_{0}^{\Xopt_T} \pcurve_x dx - \sum\limits_{d=1}^{d_T}\pcurve_{\frac{d-1}{m}}\\
        \text{(Corollary~\ref{cor: discretization is good enough})}\leq& \sum\limits_{1}^{\lfloor m\Xopt_T \rfloor} \pcurve_{\frac{d-1}{m}}  - \sum\limits_{d=1}^{d_T}\pcurve_{\frac{d-1}{m}} + \frac{\beta}{2\alpha}+\frac{1}{2}\log\left(m\right)+3\pcurve_{max}\\
        \leq& \sum\limits_{n+1}^{\lfloor m\Xopt_T \rfloor} \pcurve_{\frac{d-1}{m}} + \frac{\beta}{2\alpha}+\frac{1}{2}\log\left(m\right)+3\pcurve_{max} \quad\text{w.p. }1-\delta\\
        \text{(Lemma~\ref{lem:priceupperbound})}= & O\left(\log\left((\alpha+\beta)T\right) \sqrt{m\Xopt_T\log(\frac{1}{\delta})}\right)\quad\text{w.p. }1-\delta
    \end{align*}
    where we borrowed Corollary~\ref{cor: discretization is good enough} from Appendix~\ref{sec: per epoch and overall regret bounds}, and used the price upper bound $\pcurve_{max} \le O\left(\log\left((\alpha+\beta)T\right)\right)$ from Lemma \ref{lem:priceupperbound}.

    Note that the third step holds with probability $1-\delta$. When it does not hold (with probability at most $\delta$), we can bound the 
    gap between deterministic and stochastic revenue by a trivial upper bound of $\pcurve_{max}m\Xopt_T$ on the deterministic revenue.
    We set $\delta =\frac{1}{\sqrt{m\Xopt_T \pcurve_{max}}}$ to get that \begin{align*}
        &\detV(0, T) - \stochV(0, T)\\ \leq 
    &  (1-\delta) O\left(\log\left((\alpha+\beta)T\right) \sqrt{m\Xopt_T\log(\frac{1}{\delta})}\removedsy{+\frac{\beta}{\alpha}}\right)  + \delta \pcurve_{max}m\Xopt_T\\
    \le &  
    O\left(\log\left((\alpha+\beta)T\right)\sqrt{m\Xopt_T\log\left(m\right)} \right).
    \end{align*}
    Finally, one can verify that the condition on $m\Xopt_T$ implies that $n\geq 2\log(2/\delta)$. Let $M = m\Xopt_T$, and assume that $\log(2\sqrt{M\pcurve_{max}})\geq 1$:
    \begin{align*}
        n\geq 2\log(2/\delta) \iff & M \geq  2\sqrt{2M\log(2\sqrt{M\pcurve_{max}})} +2\log(2\sqrt{M\pcurve_{max}})\\ 
        \impliedby & M\geq \left(2\sqrt{2M}+2\right)\log(2\sqrt{M\pcurve_{max}})\\
        \impliedby &\sqrt{M}\geq 2\sqrt{2}\log(4M\pcurve_{max})\\
        \text{(Lemma~\ref{lem:priceupperbound})}\impliedby &\sqrt{M}\geq 2\sqrt{2}\log\left(4M\log(e+(\alpha+\beta)T)\right)
    \end{align*}
    If $\log(2\sqrt{M\pcurve_{max}})< 1$, then the first line is implied by $M\geq 2\sqrt{2M} +2$, which is satisfied by $M\geq 32$.

\end{proof}

The proof of Lemma \ref{lem: pseudo regret and regret} can now be completed using the upper and lower bounds on deterministic optimal revenue compared to the stochastic optimal revenue proven above.

%\paragraph{Proof of Lemma \ref{lem: pseudo regret and regret}.}

\begin{proof}[Proof of Lemma \ref{lem: pseudo regret and regret}]
The first part of Lemma~\ref{lem: pseudo regret and regret} follows directly from Lemma~\ref{lem: deterministic value better than stochastic value} because 
$$\preg(T) \geq \reg(T) \iff \detV(0, T)\geq \stochV(0,T).$$
Similarly, the second part follows from Lemma~\ref{lem:smalloptimalgap}. Note that if the condition on $m\Xopt_T$ does not hold, then the gap $\detV(0, T)-\stochV(0,T)$ can be trivially bounded by $O\left(\log T\log^2\left(m\log T\right)\right)$:
\begin{align*}
    \detV(0,T)-\stochV(0, T) \leq &m\Xopt_T\pcurve_{max} - 0\\ 
    \text{(Lemma~\ref{lem:priceupperbound})}\leq & \left[8\log^2\left(4m\log(e+(\alpha+\beta)T)\right)+32\right] O(\log((\alpha+\beta)T))\\ 
    =& O\left(\log((\alpha+\beta)T)\log^2\left(m\log((\alpha+\beta)T)\right)\right)
\end{align*}

\end{proof}

\section{Upper Bound Proofs}
\label{sec: upper bound proofs}

\subsection{Step 1: Bounding the  estimation errors
%\subsection{Estimation Bounds 
(Proof of Lemma~\ref{lem:alphahatbetahatbound}, Lemma~\ref{lem: lcb on opt price})}
\label{sec: estimation bounds}
We prove the estimation bound on $\hat \alpha$ and $\hat \beta_i$ separately in Lemma~\ref{lem:alphahatbound} and Lemma~\ref{lem:betahatbound} respectively. Lemma~\ref{lem:alphahatbetahatbound} follows directly from these two results.
\begin{restatable}{lemma}{alphahatbound}
    \label{lem:alphahatbound}
    Assuming that $m^{1/3} \geq \frac{16
    (\alpha+\beta)}{\alpha}\sqrt{8\log(\frac{2}{\delta})}$,
    then with probability $1-\delta$, $|\alpha-\hat \alpha|
    %\leq
    %\frac{8(\alpha+\beta)}{(1-\gamma)^2}\sqrt{8\log(\frac{2}{\delta})}m^{-1/3}
    \leq A$.
\end{restatable}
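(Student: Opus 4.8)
The plan is to treat $\hat\alpha$ as a smooth (reciprocal) function of the single observable $\tau_{\gm}$, and to bound its deviation from $\alpha$ by combining (i) a concentration bound on $\tau_{\gm}$ around its mean with (ii) a deterministic ``bias'' bound that accounts for the fact that $\hat\alpha$ is defined via \eqref{eq:alphahat} as if the imitation term $\beta x$ and the $(1-x)$ factor were absent. Since $\pe=0$, solving \eqref{eq:alphahat} gives $\hat\alpha = \lfloor\gm\rfloor/(m\tau_{\gm})$, where $\tau_{\gm}=\sum_{d=1}^{\lfloor\gm\rfloor} I_d$ is the sum of inter-arrival times. Because the algorithm holds the price fixed at $\pe$ while the first $\lfloor\gm\rfloor$ customers arrive, each $I_d$ is, conditionally on the past, an exponential random variable with rate $\lambda_d=\lambda(\pe,\tfrac{d-1}{m})=m(\alpha+\beta\tfrac{d-1}{m})(1-\tfrac{d-1}{m})$. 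Throughout this phase the adoption level $\tfrac{d-1}{m}$ stays below $\gamma=m^{-1/3}$, so every rate is pinched between $\underline\lambda:=m\alpha(1-\gamma)$ and $m(\alpha+\beta\gamma)$.

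First I would apply Lemma~\ref{lem: general exponential arrival time martingale concentration bound} with $n=\lfloor\gm\rfloor$ and this $\underline\lambda$ to obtain, with probability $1-\delta$, a two-sided bound $|\tau_{\gm}-\mu|\le\epsilon$, where $\mu:=\bbE[\tau_{\gm}]=\sum_{d}1/\lambda_d$ and $\epsilon=\sqrt{8n\log(2/\delta)}/\underline\lambda$; the hypothesis $\epsilon\le 2n/\underline\lambda$ of that lemma reduces to $n\ge 2\log(2/\delta)$, which the assumed lower bound on $m^{1/3}$ comfortably implies. A purely deterministic step then bounds the bias: summing the reciprocal-rate inequalities over $d$ yields $\alpha(1-\gamma)\le\tilde\alpha\le\alpha+\beta\gamma$ for the ``mean estimate'' $\tilde\alpha:=\lfloor\gm\rfloor/(m\mu)$, hence $|\tilde\alpha-\alpha|\le(\alpha+\beta)\gamma\le\phi m^{-1/3}$.

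The crux is the third step, converting additive control of $\tau_{\gm}$ into control of the reciprocal $\hat\alpha$. Here I would exploit the identity $\hat\alpha/\tilde\alpha=\mu/\tau_{\gm}$ (both products equal $\lfloor\gm\rfloor/m$), giving $|\hat\alpha-\tilde\alpha|\le\tilde\alpha\,\epsilon/\tau_{\gm}\le\tilde\alpha\,r/(1-r)$ with $r:=\epsilon/\mu$. Lower bounding $\mu\ge\lfloor\gm\rfloor/(m(\alpha+\beta\gamma))$ yields $r\le\tfrac{\alpha+\beta\gamma}{\alpha(1-\gamma)}\sqrt{8\log(2/\delta)}/\sqrt n$, and the assumption $m^{1/3}\ge 16\tfrac{\alpha+\beta}{\alpha}\sqrt{8\log(2/\delta)}$ is precisely what forces $r$ to be a small constant ($r\le\tfrac12$), which both legitimizes dividing by $\tau_{\gm}$ and lets me replace $r/(1-r)$ by $2r$. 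The main obstacle is exactly this nonlinearity: the denominator $\tau_{\gm}$ must be kept bounded away from zero with high probability, and the constants must be tracked so that the final quantity lands inside $A$ rather than merely being of the same order.

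Finally, combining the two contributions via $|\hat\alpha-\alpha|\le|\hat\alpha-\tilde\alpha|+|\tilde\alpha-\alpha|$, and simplifying using $n=\lfloor\gm\rfloor\ge\tfrac12 m^{2/3}$, $\tilde\alpha\le\alpha+\beta\gamma$, and the bound $\tfrac{\alpha+\beta\gamma}{\alpha}\le\tfrac{17}{16}$ (again extracted from the $m$-assumption), would give $|\hat\alpha-\alpha|=O\!\big(\tfrac{\phi}{1-\gamma}\sqrt{\log(1/\delta)}\,m^{-1/3}\big)$. Since $\tfrac{1}{1-\gamma}\ge 1$, this is dominated by $A=\tfrac{8\phi}{(1-\gamma)^2}\sqrt{8\log(2/\delta)}\,m^{-1/3}$, which closes the argument. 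The deliberate slack between my constant ($\approx 4$) and the $8$ appearing in $A$ reflects the looseness I would tolerate in Steps 2 and 3 to keep the bookkeeping clean.
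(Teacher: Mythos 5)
Your proposal is correct and follows essentially the same route as the paper: both apply the exponential inter-arrival concentration bound (Lemma~\ref{lem: general exponential arrival time martingale concentration bound}) to $\tau_{\gm}$ with the same $\underline{\lambda}=m\alpha(1-\gamma)$, both separately control the bias coming from the $\beta x$ and $(1-x)$ terms by pinching the rates over the first $\gm$ arrivals, and both handle the reciprocal nonlinearity by using the $m$-assumption to keep the denominator within a constant factor of its mean (your multiplicative identity $\hat\alpha/\tilde\alpha=\mu/\tau_{\gm}$ is just a reorganization of the paper's difference-of-reciprocals bound). The constants you track do land inside $A$, so the argument closes as claimed.
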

\begin{proof}
	From
	\eqref{eq:alphahat} we have the following expression for the estimator
	error of $\hat \alpha$:
    $$|\alpha-\hat \alpha| = \gamma e^{\pe} |\frac{1}{\gamma m \bbE[\tau_1]} - \frac{1}{\tau_{\gamma m}}|$$
    Recall that $\tau_d$ denotes the (stochastic) time of arrival of $d^{th}$ customer in the stochastic Bass model under the pricing decisions made by the algorithm. Note that in our algorithm prices do not change between customer arrivals. Therefore, the interarrival time $I_d=\tau_d-\tau_{d-1}$ between $d-1$ and $d$ customer  follows an exponential distribution. %Note that 
Specifically, since the prices were fixed as $\pe$ for the first $\gm$ customers, we have $I_d\sim  Exp(\lambda(\pe, \frac{d-1}{m}))$ for $d\in \{1,\ldots,\gamma m\}$ where $\lambda(p,x) = me^{-p}(\alpha+\beta x)(1-x)$ and
    $\lambda(\pe, \frac{d-1}{m}) \geq \underline{\lambda} \coloneqq e^{-\pe}\alpha(1-\gamma)m$ for $d\in \{1,\ldots,\gamma m\}$.

    Therefore, $\bbE[\tau_1] = \bbE[I_1] = \frac{1}{e^{-\pe}\alpha m}$, and $\bbE[\tau_{\gamma m}] = \bbE[\sum_{d=1}^{\gm} I_d]$.  
    Using Lemma~\ref{lem: general exponential arrival time martingale concentration bound} we have:
    \begin{align*}
        \bbP\left( \left|\tau_{\gamma m} - \sum\limits_{d=1}^{\gamma m} \bbE[I_d]\right| \geq \epsilon\right)
        %= &\bbP\left( \left|\sum\limits_{d=1}^{\gamma m} \left(I_d(p_0) - \bbE[I_d(p_0)]\right)\right| \geq \epsilon\right)\\
		\leq  & 2 \exp(-\frac{\epsilon^2\underline{\lambda}^2}{8\gamma m}),\\
	\text{so that } \left|\tau_{\gamma m} - \sum\limits_{d=1}^{\gamma m} \frac{1}{\lambda(p_0, d)} \right| 
	\leq &\frac{e^{p_0}}{\alpha(1-\gamma)}\sqrt{8\log(\frac{2}{\delta})\frac{\gamma}{m}} \quad \text{with probability } 1-\delta,
    \end{align*}
    where we set $\epsilon = \sqrt{\frac{8\log(\frac{2}{\delta})\gamma m}{\underline{\lambda}^2}}$.
    Since $m^{1/3}\geq \sqrt{2\log(\frac{2}{\delta})} \implies
    \epsilon \leq \frac{2\gamma m}{\underline{\lambda}}$, the condition for
    Lemma~\ref{lem: general exponential arrival time martingale concentration
    bound} is satisfied. To bound the estimation error of $\alpha$:
    \begin{align*}
        \left|\tau_{\gamma m} - \gamma m\bbE[\tau_1]\right| &= \left|\tau_{\gamma m} - \sum\limits_{d=1}^{\gamma m } \frac{1}{\lambda(\pe, d)} \right| + \left|\sum\limits_{d=1}^{\gamma m } \frac{1}{\lambda(\pe, \frac{d-1}{m})} - \gamma m\bbE[\tau_1]\right|\\
        \text{(with probability $1-\delta$)}
		&\leq \frac{e^{\pe}}{\alpha(1-\gamma)}\sqrt{8\log(\frac{2}{\delta})\frac{\gamma}{m}} + \left|\sum\limits_{d=1}^{\gamma m} (\frac{1}{\lambda(\pe, \frac{d-1}{m})} -\frac{1}{e^{-\pe}\alpha m})\right|\\
		&\leq \frac{e^{\pe}}{\alpha(1-\gamma)}\sqrt{8\log(\frac{2}{\delta})\frac{\gamma}{m}} + \left|\sum\limits_{d=1}^{\gamma m} \frac{|e^{-\pe}\alpha m - e^{-\pe}(\alpha+\beta\frac{d-1}{m})(m-d+1)|}{(e^{-\pe}\alpha(1-\gamma)m)^2} \right|\\
		&\leq \frac{e^{\pe}}{\alpha(1-\gamma)}\sqrt{8\log(\frac{2}{\delta})\frac{\gamma}{m}} + \left|\sum\limits_{d=1}^{\gamma m} \frac{\max(\alpha, \beta)d }{e^{-\pe}\alpha^2(1-\gamma)^2m^2} \right|\\
		&\leq  \frac{e^{\pe}}{\alpha(1-\gamma)}\sqrt{8\log(\frac{2}{\delta})\frac{\gamma}{m}} +  \frac{e^{\pe}\max(\alpha,\beta)\gamma^2 }{\alpha^2(1-\gamma)^2} \\
		&\leq \frac{2e^{\pe}(\alpha+\beta)}{\alpha^2(1-\gamma)^2}\sqrt{8\log(\frac{2}{\delta})}m^{-2/3}
	\end{align*}
	\newcommand{\tgammambound}{\frac{2e^{\pe}(\alpha+\beta)}{\alpha^2(1-\gamma)^2}\sqrt{8\log(\frac{2}{\delta})}m^{-2/3}}
	
	Denote the above bound by $\cB_\alpha$. Plug this and $\pe=0$ into the $|\hat
	\alpha-\alpha|$ expression above we have with probability $1-\delta$:
	\begin{align*}
		|\hat \alpha-\alpha| &\leq \gamma \frac{\cB_\alpha}{\left(\gamma m\bbE[\tau_1] - \cB_\alpha\right)^2}\\
		&\leq \gamma \frac{4\cB_\alpha}{\gamma^2 m^2\bbE[\tau_1]^2 }\\
		&=\frac{8(\alpha+\beta)}{(1-\gamma)^2}\sqrt{8\log(\frac{2}{\delta})}m^{-1/3}
    \end{align*}
    where in the second inequality we used the fact that $m^{1/3} \geq
    \frac{16(\alpha+\beta)}{\alpha}\sqrt{8\log(\frac{2}{\delta})} \implies
    \cB_\alpha \leq \frac{1}{2}\gamma m\bbE[\tau_1]$.

\end{proof}

\begin{restatable}{lemma}{betahatbound}
    \label{lem:betahatbound}
    Assuming that $m^{1/3} \geq
    64\frac{(\alpha+\beta)^2}{\alpha^2}\sqrt{8\log(\frac{2}{\delta})}$, then for any $i=1,\ldots, K$, 
    with probability $1-\delta$, $|\beta-\hat \beta_i|
    %\leq
    %\frac{16(\alpha+\beta)(1-\gamma_i)}{\gamma_i(1/3-\gamma)^2}\sqrt{8\log(\frac{2}{\delta})}m^{-1/3}
    \leq B_i$.
\end{restatable}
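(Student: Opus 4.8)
The plan is to mirror the proof of Lemma~\ref{lem:alphahatbound}, accounting for two features that make $\beta$ harder to estimate: $\hat\beta_i$ is extracted from the \emph{combined} quantity $\alpha+\beta\gamma_i$ observed at the start of epoch $i$, and it inherits the error of $\hat\alpha$. Solving \eqref{eq:betahat} for the combined parameter gives $\hat\alpha+\hat\beta_i\gamma_i=\frac{e^{\pe}\lfloor\gamma m\rfloor}{(1-\gamma_i)m\,(\tau_{\gim+\gm}-\tau_{\gim})}$, whose true counterpart is $\alpha+\beta\gamma_i=\frac{e^{\pe}}{(1-\gamma_i)m\,\mu}$ with $\mu:=1/\lambda(\pe,\gamma_i)$ the true mean inter-arrival time at adoption level $\gamma_i$. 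Writing $\bar I:=(\tau_{\gim+\gm}-\tau_{\gim})/\lfloor\gamma m\rfloor$ for the observed average inter-arrival time over the exploration block, the triangle inequality gives
\begin{equation*}
|\beta-\hat\beta_i|\le\frac{1}{\gamma_i}\Big(\big|(\hat\alpha+\hat\beta_i\gamma_i)-(\alpha+\beta\gamma_i)\big|+|\hat\alpha-\alpha|\Big),
\end{equation*}
so it suffices to bound the combined-parameter error and then invoke $|\hat\alpha-\alpha|\le A$ from Lemma~\ref{lem:alphahatbound}. The explicit $1/\gamma_i$ here is exactly the extra factor carried by $B_i$ relative to $A$ in \eqref{eq:ABdef}.

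Next I would reduce the combined-parameter error to concentration of inter-arrival times. Since prices are frozen at $\pe$ for the $\lfloor\gamma m\rfloor$ exploration customers of epoch $i$, these inter-arrival times are independent exponentials with rates $\lambda(\pe,\frac{d-1}{m})$, where the adoption level $x=\frac{d-1}{m}$ lies in $[\gamma_i,\gamma_i+\gamma]$. Because any epoch carrying a fresh estimate has $\gamma_i<\frac23$, we have $1-x>\frac13-\gamma$ and $\alpha+\beta x\ge\alpha+\beta\gamma_i$, so every rate is at least $\underline\lambda:=e^{-\pe}(\alpha+\beta\gamma_i)(\frac13-\gamma)m$; this is the origin of the $(\frac13-\gamma)$ in \eqref{eq:ABdef}. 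Applying Lemma~\ref{lem: general exponential arrival time martingale concentration bound} with $n=\lfloor\gamma m\rfloor$ and this $\underline\lambda$ controls the stochastic deviation of $\tau_{\gim+\gm}-\tau_{\gim}$ from its mean $\sum_d 1/\lambda(\pe,\frac{d-1}{m})$, while a mean value estimate on $g(x)=(\alpha+\beta x)(1-x)$ (using $|g'|\le3(\alpha+\beta)$) bounds the discretization gap between that mean and the idealized value $\lfloor\gamma m\rfloor\mu$ used by the estimator.

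The crucial step converts this additive error on $\bar I$ into a relative error on its reciprocal: one has $\frac{|(\hat\alpha+\hat\beta_i\gamma_i)-(\alpha+\beta\gamma_i)|}{\alpha+\beta\gamma_i}=\frac{|\mu-\bar I|}{\bar I}\le\frac{2|\mu-\bar I|}{\mu}$ provided $\bar I\ge\mu/2$. The stochastic and discretization contributions to $|\mu-\bar I|/\mu$ scale like $\frac{1}{\frac13-\gamma}\sqrt{8\log(\frac2\delta)}\,m^{-1/3}$ and $\frac{\alpha+\beta}{\alpha(\frac13-\gamma)}\,m^{-1/3}$; the hypothesis $m^{1/3}\ge64\frac{(\alpha+\beta)^2}{\alpha^2}\sqrt{8\log(\frac2\delta)}$ makes each a small constant, which simultaneously certifies $\bar I\ge\mu/2$ and keeps the conversion factor at $2$. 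Multiplying the relative error back by $\alpha+\beta\gamma_i$ cancels the $(\alpha+\beta\gamma_i)$ built into $\underline\lambda$, leaving a combined-parameter error of order $\frac{\alpha+\beta}{\frac13-\gamma}\sqrt{8\log(\frac2\delta)}\,m^{-1/3}$ with no residual $1/\alpha$. Routine constant-chasing ($\frac43+8<16$, $\alpha+\beta\le\phi$, $(1-\gamma)^{-2}\le(\frac13-\gamma)^{-2}$) then combines this with $A$ to yield exactly $B_i$.

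The main obstacle, and the one place the hypothesis on $m$ is indispensable, is this reciprocal conversion: the estimator inverts the observed average inter-arrival time, so one must certify that $\bar I$ does not collapse to a small fraction of $\mu$. The subtle bookkeeping is the choice of rate lower bound $\underline\lambda=(\alpha+\beta\gamma_i)(\frac13-\gamma)m\,e^{-\pe}$ rather than the cruder $\alpha(\frac13-\gamma)m\,e^{-\pe}$: only the former produces the cancellation that removes an otherwise fatal extra factor $(\alpha+\beta)/\alpha$, and lets the final bound depend on the market parameters only through $\phi\ge\alpha+\beta$.
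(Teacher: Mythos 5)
Your proposal is correct and follows essentially the same route as the paper's proof: the same triangle-inequality decomposition isolating the $\hat\alpha$ error with the explicit $1/\gamma_i$ factor, the same rate lower bound $\underline\lambda=e^{-\pe}(\alpha+\beta\gamma_i)(\tfrac13-\gamma)m$ fed into Lemma~\ref{lem: general exponential arrival time martingale concentration bound}, the same discretization estimate, and the same use of the hypothesis on $m$ to certify that the observed average inter-arrival time stays within a factor of two of its mean before inverting. Your "relative error" phrasing of the reciprocal-conversion step is a cosmetic repackaging of the paper's bound $\frac{\cB_\beta}{(\gamma m\,\bbE[I]-\cB_\beta)^2}\le\frac{4\cB_\beta}{\gamma^2m^2\bbE[I]^2}$, and the cancellation of $(\alpha+\beta\gamma_i)$ you highlight is exactly what happens there.
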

\begin{proof}
	Note that $\bbE[I_{\gim+1}] = \bbE[\tau_{\gamma_i m+1} - \tau_{\gamma_i m}] = \frac{1}{\lambda(p_0, \gamma_i)}=\frac{1}{e^{-p_0}(\alpha+\beta\gamma_i)(1-\gamma_i)m}$. From \eqref{eq:betahat} we can bound the estimation error of $\hat \beta$ as follows. We have
	\begin{align*}
	\hat \alpha +\hat \beta_i\gamma_i & = \frac{\gamma m}{e^{-\pe}(1-\gamma_i)m (\tau_{(\gamma_i+\gamma)m} - \tau_{\gamma_i m})},\\
	\alpha + \beta\gamma_i & = \frac{\gamma m}{e^{-\pe}(1-\gamma_i)m \bbE[\tau_{\gamma_i m+1} - \tau_{\gamma_i m}] \gamma m}.
	\end{align*}
	Therefore,
	$$|\beta -\hat \beta_i| \leq  \frac{|\hat \alpha - \alpha|}{\gamma_i} + \frac{\gamma m}{\gamma_ie^{-\pe}(1-\gamma_i)m}\left|\frac{1}{\tau_{(\gamma_i+\gamma )m} - \tau_{\gamma_i m}} - \frac{1}{\gamma m \bbE[\tau_{\gamma_i m+1} -\tau_{\gamma_i}]}\right|.$$
    Similar to the estimation bound of $\hat\alpha$, the main step is to bound the arrival times.
    Note that $\lambda(p_0, \frac{d-1}{m}) \geq 
    e^{-p_0}(\alpha+\beta\gamma_i)(1-\gamma_i-\gamma)m \geq \underline{\lambda} \coloneqq
    e^{-p_0}(\alpha+\beta\gamma_i)(1/3-\gamma)m$ for $d\in \{\gamma_im+1,\ldots,(\gamma_i+\gamma)m\}$, where we
    used the fact that by the construction of Algorithm~\ref{alg:main},
    $\gamma_i \leq 2/3$ for all $i=1,\ldots, K$.
    Using Lemma~\ref{lem: general exponential arrival time martingale
    concentration bound} we have:
    \begin{align*}
        \bbP\left( \left|\tau_{\gamma m} - \sum\limits_{d=1}^{\gamma m} \bbE[I_d]\right| \geq \epsilon\right)
        %= &\bbP\left( \left|\sum\limits_{d=1}^{\gamma m} \left(I_d(p_0) - \bbE[I_d(p_0)]\right)\right| \geq \epsilon\right)\\
		\leq &2exp(-\frac{\epsilon^2\underline{\lambda}^2}{8\gamma m})\\
	 \text{ so that } \left|\tau_{\gamma m} - \sum\limits_{d=1}^{\gamma m} \frac{1}{\lambda(p_0, d)} \right| 
	\leq &\frac{e^{p_0}}{(\alpha+\beta\gamma_i)(1/3-\gamma)}\sqrt{8log(\frac{2}{\delta})\frac{\gamma}{m}} \quad \text{with probability } 1-\delta
    \end{align*}
    where we set $\epsilon = \sqrt{\frac{8\log(\frac{2}{\delta})\gamma m}{\underline{\lambda}^2}}$.
    Since $m^{1/3}\geq \sqrt{2\log(\frac{2}{\delta})} \implies
    \epsilon \leq \frac{2\gamma m}{\underline{\lambda}}$, the condition for
    Lemma~\ref{lem: general exponential arrival time martingale concentration
    bound} is satisfied. To bound the estimation error of $\beta$:

    \begin{align*}
		     &\left|\left(\tau_{(\gamma_i+\gamma) m} - \tau_{\gamma_i m}\right) - \gamma m\bbE[\tau_{\gamma_i m+1}-\tau_{\gamma_i m}]\right| 
		=     \left|\sum\limits_{d=\gamma_i m +1}^{(\gamma_i+\gamma) m} I_d - \frac{\gamma m}{\lambda(\pe, \gamma_i)}\right|\\
        \leq &\left|\sum\limits_{d=\gamma_i m +1}^{(\gamma_i+\gamma) m} \left(I_d - \frac{1}{\lambda(\pe, \frac{d-1}{m})}\right) \right| + \left|\sum\limits_{d=\gamma_i m +1}^{(\gamma_i+\gamma) m}\frac{1}{\lambda(\pe, \frac{d-1}{m})} - \frac{\gamma m}{\lambda(\pe, \gamma_i)}\right|\\
        \text{(w.p. $1-\delta$)}
		\leq &\frac{e^{\pe}}{(1/3-\gamma)(\alpha+\beta\gamma_i)}\sqrt{8\log(\frac{2}{\delta})\frac{\gamma}{m}} + \left|\sum\limits_{d=\gamma_im+1}^{\gamma_im+\gamma m} (\frac{1}{\lambda(\pe, d)} -\frac{1}{\lambda(\pe, \gamma_i)})\right|\\
		\leq &\frac{e^{\pe}}{(1/3-\gamma)(\alpha+\beta\gamma_i)}\sqrt{8\log(\frac{2}{\delta})\frac{\gamma}{m}}\\
		&		      + \left|\sum\limits_{d=\gamma_im+1}^{\gamma_im+\gamma m }
		      \frac{|e^{-\pe}(\alpha+\beta\frac{d-1}{m})(m-d+1) -
		      e^{-\pe}(\alpha+\beta\gamma_i)(1-\gamma_i)m|}{(e^{-\pe}(\alpha+\beta\gamma_i)(1-\gamma_i-\gamma)m)^2}\right|\\
		\leq &\frac{e^{\pe}}{(1/3-\gamma)(\alpha+\beta\gamma_i)}\sqrt{8\log(\frac{2}{\delta})\frac{\gamma}{m}} + \left|\sum\limits_{d=\gamma_im+1}^{\gamma_im+\gamma m } (\frac{(\alpha+\beta)\gamma }{e^{-\pe}(\alpha+\beta\gamma_i)^2(1-\gamma_i-\gamma)^2m} )\right|\\
		\leq &\frac{e^{\pe}}{(1/3-\gamma)(\alpha+\beta\gamma_i)}\sqrt{8\log(\frac{2}{\delta})\frac{\gamma}{m}} +  \frac{e^{\pe}(\alpha+\beta)\gamma^2 }{(\alpha+\beta\gamma_i)^2(1/3-\gamma)^2} \\
		\leq &\frac{2e^{\pe}(\alpha+\beta)}{(\alpha+\beta\gamma_i)^2(1/3-\gamma)^2}\sqrt{8\log(\frac{2}{\delta}) }m^{-2/3}
	\end{align*}

	Let $\cB_\beta$ denote this bound. Plug this result back into the bound on $|\beta-\hat \beta_i|$:
	\begin{align*}
	    |\beta -\hat \beta_i| &\leq \frac{|\hat \alpha - \alpha|}{\gamma_i} +
	    \frac{\gamma
	    m}{\gamma_ie^{-\pe}(1-\gamma_i)m}\left(\frac{1}{\tau_{(\gamma_i+\gamma
	    )m} - \tau_{\gamma_i m}} - \frac{1}{\gamma m \bbE[\tau_{\gamma_i m+1}
	    -\tau_{\gamma_i}]}\right)\\
	    &\leq \frac{|\hat \alpha - \alpha|}{\gamma_i} + \frac{e^{\pe}\gamma
	    }{\gamma_i(1-\gamma_i)}\left(
            \frac{\cB_\beta}
                 {(\gamma m \bbE[I_{\gamma_i m+1}] - \cB_\beta)^2}
        \right)\\
        \text{(*)}
	    &\leq \frac{|\hat \alpha - \alpha|}{\gamma_i} + \frac{e^{\pe}\gamma
	    }{\gamma_i(1-\gamma_i)}\left(
            \frac{4\cB_\beta}
                 {\gamma^2 m^2 \bbE[I_{\gamma_i m+1}]^2}
        \right)\\
        &\leq \frac{|\hat \alpha - \alpha|}{\gamma_i} + 
		\frac{8(\alpha+\beta)(1-\gamma_i)}{\gamma_i(1/3-\gamma)^2}\sqrt{8\log(\frac{2}{\delta})}m^{-1/3}\\
        &\leq
        \frac{16(\alpha+\beta)(1-\gamma_i)}{\gamma_i(1/3-\gamma)^2}\sqrt{8\log(\frac{2}{\delta})}m^{-1/3}
    \end{align*}
    where for the (*) step we used the fact that $m^{1/3} \geq
    64\frac{(\alpha+\beta)^2}{\alpha^2}\sqrt{8\log(\frac{2}{\delta})}
    \implies \cB_\beta\leq \frac{1}{2}\gamma m \bbE[I_{\gamma_i m +1}]$.

\end{proof}

\lcbonoptprice*
\begin{proof}
    Clearly, $\hat\alpha - A \leq \alpha$, $\hat \beta_k - B_k \leq \beta$, and $\hat \beta_k + B_k \geq \beta$. 
    Then using Lemma~\ref{lem: opt price lipschitz bound} we have
    \begin{align*}
        |\pcurve(x, \hat\alpha, \hat\beta_k) - \pcurve(x, \alpha, \beta)|
        \leq &\max\limits_{\alpha' \in [\alpha, \hat\alpha] \text{ or } \newshipra{\alpha'}\in[\hat\alpha, \alpha]} \left\{
             \left|\frac{\partial \pcurve(x, \alpha, \beta)}{\partial
             \alpha}\right|_{\alpha=\alpha'}\right\}|\alpha-\hat\alpha|\\
             &+\max\limits_{\beta' \in [\beta, \hat\beta] \text{ or } \newshipra{\beta'}\in[\hat\beta, \beta]} \left\{
             \left|\frac{\partial \pcurve(x, \alpha, \beta)}{\partial
             \beta}\right|_{\beta=\beta'}\right\}|\beta-\hat\beta_k|\\
        \leq &\alphaL A + \betaL{i} B_k
    \end{align*}
\end{proof}

\subsection{Step 2: Proof of Lemma \ref{lem:epochbound}}
\label{sec: per epoch and overall regret bounds}

Since the prices that we offer in the stochastic Bass model is based
on a discretized version of the continuous price curve in the deterministic
Bass model, we first need to prove a result that says that this discretization does not
introduce a lot of error. Lemma~\ref{lem: discretization is good enough} below
shows that it only introduces a logarithmic (in $m$) amount of error,  for any fixed $\alpha, \beta, T$.

\begin{lemma}
    \label{lem: discretization is good enough}
    % For any fixed $T, \alpha, \beta, d_0, x_0=\frac{d_0}{m}$ and $n =1,\ldots,m\Xopt_T-d_0$,
    For any fixed $T, \alpha, \beta$ and $n =1,\ldots,m\Xopt_T$,
%\sacomment{don't you need to limit $n$ to $m\Xopt_T$. The quantity $\pcurve(x,\alpha, \beta)$ is only defined for $x\le \Xopt_T$ I believe.}\sycomment{I changed this lemma so that it always assumes $x_0=0$. Also technically $\pcurve(x, \alpha, \beta)$ defined for any $x<1$. It might be negative if you go above $\Xopt_T$}
    \removedsy{
    % $$\left|\sum\limits_{d=d_0+1}^{d_0+n} \pcurve\left(\frac{d-1}{m}, \alpha,
    % \beta\right) - m\int_{x_0}^{x_0+n/m} \pcurve(x, \alpha, \beta) dx\right| \leq
    % \frac{n}{2m}\frac{\beta}{\alpha}+\frac{1}{2}\log\left(m\right) +2\pcurve_{max}$$
    }
    $$\left|\sum\limits_{d=1}^{n} \pcurve\left(\frac{d-1}{m}, \alpha,
    \beta\right) - m\int_{0}^{n/m} \pcurve(x, \alpha, \beta) dx\right| \leq
    \frac{n}{2m}\frac{\beta}{\alpha}+\frac{1}{2}\log\left(m\right) +2\pcurve_{max}$$
    where $\pcurve_{max}$ denotes an upper bound on the optimal prices $\pcurve(x, \alpha, \beta)$ for all $x$.
\end{lemma}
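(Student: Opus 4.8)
The plan is to treat the left sum as a left-endpoint Riemann sum (step $1/m$) for the integral and exploit the fact that, by \eqref{eq: opt_p_model_in_opt_trajectory}, $\pcurve(x,\alpha,\beta)$ decomposes into a constant plus two \emph{monotone} pieces. Concretely, since the denominator $(\alpha+\beta\Xopt_T)(1-\Xopt_T)$ does not depend on $x$,
\[
\pcurve(x,\alpha,\beta) = \underbrace{1-\log\!\big((\alpha+\beta\Xopt_T)(1-\Xopt_T)\big)}_{\text{constant in }x} + \underbrace{\log(\alpha+\beta x)}_{\text{increasing}} + \underbrace{\log(1-x)}_{\text{decreasing}}.
\]
The constant term contributes exactly zero to the discrepancy, since summing a constant over $n$ indices and $m$ times its integral over $[0,n/m]$ both equal $n$ times the constant. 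Thus it suffices to bound the Riemann-sum error for $f_1(x)=\log(\alpha+\beta x)$ and $f_2(x)=\log(1-x)$ separately and add.

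The key tool I would use is the exact per-interval identity obtained from one integration by parts: for absolutely continuous $f$ on $[\tfrac{d-1}{m},\tfrac dm]$,
\[
\tfrac1m f(\tfrac{d-1}{m}) - \int_{(d-1)/m}^{d/m} f(x)\,dx = -\int_{(d-1)/m}^{d/m} f'(t)\big(\tfrac dm - t\big)\,dt,
\]
so that $\sum_{d=1}^n f(\tfrac{d-1}m) - m\int_0^{n/m} f = -m\sum_{d=1}^n \int_{(d-1)/m}^{d/m} f'(t)(\tfrac dm-t)\,dt$. For $f_1$ we have $0\le f_1'(t)=\tfrac{\beta}{\alpha+\beta t}\le\tfrac\beta\alpha$ and $\tfrac dm - t\ge 0$, so each interval integral is at most $\tfrac\beta\alpha\cdot\tfrac1{2m^2}$; summing the $n$ intervals and multiplying by $m$ gives a contribution of magnitude at most $\tfrac{n}{2m}\tfrac\beta\alpha$, which is exactly the first term of the claimed bound.

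The delicate piece, and the main obstacle, is $f_2$, whose derivative $|f_2'(t)|=\tfrac1{1-t}$ blows up as $x\to1$. Here I would use the hypothesis $n\le m\Xopt_T<m$, which guarantees $\tfrac dm\le\Xopt_T<1$ for every $d\le n$, so each interval stays strictly below $1$ and $f_2$ is smooth on it. On the $d$-th interval $\tfrac1{1-t}\le\tfrac1{1-d/m}=\tfrac{m}{m-d}$, giving a per-interval bound $\tfrac{m}{m-d}\cdot\tfrac1{2m^2}$; multiplying by $m$ and summing (then reindexing $\sum_{d=1}^n\tfrac1{m-d}=\sum_{k=m-n}^{m-1}\tfrac1k\le\sum_{k=1}^{m-1}\tfrac1k$) yields a contribution of at most $\tfrac12\sum_{k=1}^{m-1}\tfrac1k\le\tfrac12\log m+\tfrac12$. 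Combining the two pieces by the triangle inequality gives $\big|\sum_{d=1}^n\pcurve(\tfrac{d-1}m,\alpha,\beta)-m\int_0^{n/m}\pcurve\big|\le \tfrac{n}{2m}\tfrac\beta\alpha+\tfrac12\log m+\tfrac12$. Since $\pcurve_{\max}\ge\pcurve(\Xopt_T,\alpha,\beta)=1$, the residual constant $\tfrac12$ is comfortably absorbed into $2\pcurve_{\max}$, yielding the stated inequality (in fact with room to spare). I note that a cruder variant — applying the standard monotone Riemann-sum bound to each of the two pieces and charging the single interval adjacent to the endpoint directly to $\pcurve_{\max}$ — produces the $2\pcurve_{\max}$ term more directly and is likely closest to the intended argument.
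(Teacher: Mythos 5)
Your proof is correct and follows essentially the same route as the paper's: a first-order per-interval Riemann-sum error bound driven by the derivative estimate $|\partial_x \pcurve|\le \tfrac{\beta}{\alpha}+\tfrac{1}{1-x}$ (your splitting of $\pcurve$ into a constant plus the two monotone logarithms is just this bound applied termwise), followed by comparison of the resulting harmonic-type sum to $\tfrac12\log m$. The only cosmetic difference is at the endpoint: the paper discards the last two intervals and charges them to $2\pcurve_{max}$, whereas you keep all $n$ intervals and absorb a residual $\tfrac12$ into $2\pcurve_{max}$ via $\pcurve_{max}\ge\pcurve(\Xopt_T,\alpha,\beta)=1$, which in fact yields a slightly sharper constant.
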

\begin{proof}
    Using \eqref{eq: opt_p_model_in_opt_trajectory}, 
    % \begin{align*}
    %     \left|\frac{\partial \pcurve(x, \alpha, \beta)}{\partial x} \right|&\leq \frac{\beta}{\alpha}+\frac{1}{1-\Xopt_T}\\
    %     \left|\sum\limits_{d=1}^n \pcurve\left(\frac{d-1}{m}, \alpha, \beta\right) -
    %         m\int_{0}^{n/m} \pcurve(x,\alpha, \beta) dx\right| &\leq 
    %         \left|\sum\limits_{d=1}^n \left[\pcurve_{\frac{d-1}{m}} -
    %         m\int_{\frac{d-1}{m}}^{d/m} \pcurve_x dx\right]\right| \\
    %     &\leq \sum\limits_{d=1}^n\left[
    %     m\int_{\frac{d-1}{m}}^{d/m}\left(\frac{\beta}{\alpha} +
    %     \frac{1}{1-\Xopt_T}\right)(x-\frac{d-1}{m})dx \right] \\
    %     &= \frac{n}{2m}\left(\frac{\beta}{\alpha} +\frac{1}{1-\Xopt_T}\right)
    % \end{align*}
    $$\left|\frac{\partial \pcurve(x, \alpha, \beta)}{\partial x} \right|\leq \frac{\beta}{\alpha}+\frac{1}{1-x}.$$
In below we abbreviate $\pcurve(x,\alpha, \beta)$ as $\pcurve_x$.
    \removedsy{
    % \begin{align*}
    %     &\left|\sum\limits_{d=d_0}^{d_0+n} \pcurve\left(\frac{d-1}{m}, \alpha, \beta\right) -
    %         m\int_{x_0}^{x_0+n/m} \pcurve(x,\alpha, \beta) dx\right| \\
    %         \leq & 
    %     \sum\limits_{d=d_0+1}^{d_0+n-2} \left|\pcurve_{\frac{d-1}{m}} -
    %         m\int_{\frac{d-1}{m}}^{d/m} \pcurve_x dx\right| + 2\pcurve_{max}\\
    %     \removedsy{\text{\sacomment{remove this step?}}\leq &\sum\limits_{d=d_0+1}^{d_0+n-2}\left[
    %     m\int_{\frac{d-1}{m}}^{d/m}\left(\frac{\beta}{\alpha} +
    %     \frac{1}{1-x}\right)(x-\frac{d-1}{m})dx \right] + 2\pcurve_{max}\\}
    %     \leq &\sum\limits_{d=d_0+1}^{d_0+n-2}\left[
    %     m\int_{\frac{d-1}{m}}^{d/m}\left(\frac{\beta}{\alpha} +
    %     \frac{1}{1-d/m}\right)(x-\frac{d-1}{m})dx \right] + 2\pcurve_{max}\\
    %     =& \frac{n}{2m}\frac{\beta}{\alpha} +\sum\limits_{d=d_0+1}^{d_0+n-2} \frac{1}{2(1-d/m)m}+ 2\pcurve_{max}\\
    %     \leq& \frac{n}{2m}\frac{\beta}{\alpha} +\frac{1}{2m} \int_{d_0+1}^{d_0+n-1}\frac{1}{(1-g/m)}dg+ 2\pcurve_{max}\\
    %     \leq& \frac{n}{2m}\frac{\beta}{\alpha}+\frac{1}{2}\log\left(\frac{1}{1-\frac{d_0+n-1}{m}}\right)+ 2\pcurve_{max}\\
    %     \leq& \frac{n}{2m}\frac{\beta}{\alpha}+\frac{1}{2}\log\left(m\right)+ 2\pcurve_{max}
    % \end{align*}
    }
    \begin{align*}
        &\left|\sum\limits_{d=1}^{n} \pcurve\left(\frac{d-1}{m}, \alpha, \beta\right) -
            m\int_{0}^{n/m} \pcurve(x,\alpha, \beta) dx\right| \\
            \leq & 
        \sum\limits_{d=1}^{n-2} \left|\pcurve_{\frac{d-1}{m}} -
            m\int_{\frac{d-1}{m}}^{d/m} \pcurve_x dx\right| + 2\pcurve_{max}\\
        \removedsy{\text{\sacomment{remove this step?}}\leq &\sum\limits_{d=1}^{n-2}\left[
        m\int_{\frac{d-1}{m}}^{d/m}\left(\frac{\beta}{\alpha} +
        \frac{1}{1-x}\right)(x-\frac{d-1}{m})dx \right] + 2\pcurve_{max}\\}
        \leq &\sum\limits_{d=1}^{n-2}\left[
        m\int_{\frac{d-1}{m}}^{d/m}\left(\frac{\beta}{\alpha} +
        \frac{1}{1-d/m}\right)(x-\frac{d-1}{m})dx \right] + 2\pcurve_{max}\\
        =& \frac{n}{2m}\frac{\beta}{\alpha} +\sum\limits_{d=1}^{n-2} \frac{1}{2(1-d/m)m}+ 2\pcurve_{max}\\
        \leq& \frac{n}{2m}\frac{\beta}{\alpha} +\frac{1}{2m} \int_{1}^{n-1}\frac{1}{(1-g/m)}dg+ 2\pcurve_{max}\\
        \leq& \frac{n}{2m}\frac{\beta}{\alpha}+\frac{1}{2}\log\left(\frac{1}{1-\frac{n-1}{m}}\right)+ 2\pcurve_{max}\\
        \leq& \frac{n}{2m}\frac{\beta}{\alpha}+\frac{1}{2}\log\left(m\right)+ 2\pcurve_{max}
    \end{align*}
    The first inequality follows because we know $\pcurve(x, \alpha, \beta)$ is bounded below and above by $0$ and $\pcurve_{max}$. Therefore the difference between $\pcurve$ evaluated at two different $x$ values is at most $\pcurve_{max}$.
    % Plugging in \eqref{eq: lower bound on 1-XT} finishes the proof.
\end{proof}
\begin{corollary}
\label{cor: discretization is good enough}
    For any fixed $T, \alpha, \beta$, 
    \removedsy{assume that $m-\lfloor m\Xopt_T(x_0)\rfloor \geq 2$, then}
    % $\left|\sum\limits_{d=1}^{} \pcurve\left(\frac{d-1}{m}, \alpha,
    % \beta\right) - m\int_{0}^{n/m} \pcurve(x, \alpha, \beta) dx\right| \leq
    % \frac{n}{m}\left(\frac{2(1+\frac{1}{e}(\alpha+\beta)T)}{1-x_0}\right)$.
    $$\left|\sum\limits_{d=1}^{\lfloor m\Xopt_T\rfloor} \pcurve\left(\frac{d-1}{m}, \alpha,
    \beta\right) - m\int_{0}^{\Xopt_T} \pcurve(x, \alpha, \beta) dx\right| \leq
    \frac{\beta}{2\alpha}+\frac{1}{2}\log\left(m\right)+3\pcurve_{max}$$
\end{corollary}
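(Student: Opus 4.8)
The plan is to derive this directly from Lemma~\ref{lem: discretization is good enough} by setting $n = \lfloor m\Xopt_T\rfloor$, and then accounting for two small discrepancies: the floor appearing in the upper limit of the sum, and the mismatch between integrating up to $n/m$ versus up to $\Xopt_T$. Since $n = \lfloor m\Xopt_T\rfloor \le m\Xopt_T$, the hypothesis $n \in \{1,\ldots,m\Xopt_T\}$ of the lemma is satisfied, so I would first invoke it to obtain
$$\left|\sum_{d=1}^{n}\pcurve\left(\tfrac{d-1}{m}, \alpha,\beta\right) - m\int_0^{n/m}\pcurve(x,\alpha,\beta)\,dx\right| \le \frac{n}{2m}\frac{\beta}{\alpha} + \tfrac12\log m + 2\pcurve_{max}.$$
Because $\Xopt_T \le 1$ forces $n \le m\Xopt_T \le m$, we have $\frac{n}{2m}\le \tfrac12$, and hence the first term on the right is bounded by $\frac{\beta}{2\alpha}$, matching the corresponding term in the claimed bound.

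Next I would bound the gap introduced by replacing the integral limit $n/m$ with $\Xopt_T$. By the triangle inequality,
$$\left|\sum_{d=1}^{n}\pcurve\left(\tfrac{d-1}{m},\alpha,\beta\right) - m\int_0^{\Xopt_T}\pcurve(x,\alpha,\beta)\,dx\right| \le \left|\sum_{d=1}^{n}\pcurve\left(\tfrac{d-1}{m},\alpha,\beta\right) - m\int_0^{n/m}\pcurve(x,\alpha,\beta)\,dx\right| + m\int_{n/m}^{\Xopt_T}\pcurve(x,\alpha,\beta)\,dx.$$
The key observation for the last term is that $n = \lfloor m\Xopt_T\rfloor$ implies $0 \le \Xopt_T - n/m < 1/m$; since $\pcurve$ is bounded above by $\pcurve_{max}$ on this sliver, the residual integral is at most $m \cdot \tfrac{1}{m} \cdot \pcurve_{max} = \pcurve_{max}$.

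Combining the two displays yields the bound $\frac{\beta}{2\alpha} + \tfrac12\log m + 2\pcurve_{max} + \pcurve_{max} = \frac{\beta}{2\alpha} + \tfrac12\log m + 3\pcurve_{max}$, as claimed. I do not expect any genuine obstacle here: the corollary is a routine consequence of Lemma~\ref{lem: discretization is good enough}, and the only care required is the bookkeeping around the floor operation — specifically the observations that $n/m \le \Xopt_T \le 1$ controls the $\frac{\beta}{\alpha}$ term and that the leftover integral over an interval of length less than $1/m$ contributes only a single extra $\pcurve_{max}$.
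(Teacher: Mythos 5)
Your proposal is correct and matches the paper's own argument: the paper likewise applies Lemma~\ref{lem: discretization is good enough} with $n=\lfloor m\Xopt_T\rfloor$ and notes that the rounding of $m\Xopt_T$ costs at most one additional $\pcurve_{max}$. Your version just spells out the two bookkeeping steps (bounding $\frac{n}{2m}\frac{\beta}{\alpha}$ by $\frac{\beta}{2\alpha}$ and the sliver integral over $[n/m,\Xopt_T]$ by $\pcurve_{max}$) that the paper leaves implicit.
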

\begin{proof}
    \removedsy{
    Let $n= \lfloor m\Xopt_T\rfloor$, note that 
    \begin{align*}
        \log\left(\frac{1}{1-\frac{n+1}{m}}\right) = &\log\left(\frac{m}{m- \lfloor m\Xopt_T(x_0)\rfloor -1}\right)\\
        \leq &\log(m)
    \end{align*}
    Since rounding $m\Xopt_T$ introduces at most $\pcurve_{max}$ difference in revenue, the result then immediately follows from Lemma~\ref{lem: discretization is good enough}. }
    Let $n= \lfloor m\Xopt_T(x_0)\rfloor - d_0$. Since rounding $m\Xopt_T$ introduces at most at additional $\pcurve_{max}$ difference in revenue, the result then immediately follows from Lemma~\ref{lem: discretization is good enough}.
\end{proof}

 For the lemma below, we define 
    % $$\revopt_i \coloneqq m\int_{\gamma_i}^{2\gamma_i \wedge \Xopt_T} \pcurve(x, \alpha,\beta)\,dx$$
    % $$\rev_i    \coloneqq  \sum_{d=\gamma_i m+1}^{2\gamma_i m \wedge \Xopt_T m} \lcb{\pcurve_i}\left(\frac{d-1}{m}\right)$$
    $$\detV_i(T) \coloneqq m\int_{\gamma_i}^{2\gamma_i \wedge \Xopt_T} \pcurve(x, \alpha,\beta)\,dx$$
    $$\rev_i    \coloneqq  \sum_{d=\lfloor\gamma_i m\rfloor+1}^{\lfloor2\gamma_i m\rfloor \wedge \lfloor\Xopt_T m\rfloor} p_d$$
    %We show the following per epoch high probability regret bound.

\epochbound*
\begin{proof}
	Let $A, B_i$ be the bound on the estimation error stated in
	Lemma~\ref{lem:alphahatbound}, Lemma~\ref{lem:betahatbound}. Recall that
	first $\gamma m$ customers in every epoch are offered an exploration
	price $p_0=0$. Let $p_d$ be the price paid by customer $d$ as
	specified in Algorithm~\ref{alg:main} and \eqref{eq: lcb on opt price}.
	Recall also that $\pcurve(x, \alpha, \beta)$ is the optimal price curve
	as specified in \eqref{eq: opt_p_model_in_opt_trajectory}. We use the
	short hand notations $\pcurve_x\coloneqq \pcurve(x, \alpha, \beta),
	\pcurve_d\coloneqq \pcurve(\frac{d-1}{m}, \alpha, \beta), \pcurve_{max} \coloneqq
	\max\limits_{x\in[0,1)} \pcurve_x$ in the following calculations.
    \begin{align*}
		\detV_i-\rev_i &= m\int_{\gamma_i}^{2\gamma_i \wedge \Xopt_T}\pcurve_x\,dx - \sum\limits_{d=\gamma_i m+1}^{2\gamma_im \wedge \lfloor\Xopt_T m\rfloor}p_d\\ 
        &= \sum\limits_{d=\gamma_i m+1}^{2\gamma_im \wedge \lfloor \Xopt_T m\rfloor}
        \left[p^*_d-p_d\right] + \left|\sum\limits_{d=\gamma_i m +1}^{2\gamma_i m\wedge \lfloor\Xopt_T m\rfloor} \pcurve_d -
        m\int_{\gamma_i}^{2\gamma_i \wedge \Xopt_T} \pcurve_x dx\right| \\
        \text{(Corollary~\ref{cor: discretization is good enough})}&\leq \sum\limits_{d=\gamma_i m+1}^{2\gamma_im \wedge \lfloor\Xopt_T m\rfloor}
        \left[\pcurve_d-p_d\right] + \frac{\beta}{2\alpha}+\frac{1}{2}\log\left(m\right)+3\pcurve_{max}\\
        &\leq (\gamma m+3) \pcurve_{max} + \sum\limits_{d=(\gamma_i+\gamma) m+1}^{2\gamma_im \wedge \lfloor \Xopt_T m\rfloor}
        \left[p^*_d-p_d\right] + \frac{\beta}{2\alpha}+\frac{1}{2}\log\left(m\right)\\
        \text{(Lemma~\ref{lem: lcb on opt price},
        Lemma~\ref{lem:alphahatbound}, Lemma~\ref{lem:betahatbound})}&\leq
        (\gamma m+3) \pcurve_{max} + 2\gamma_i m(\alphaL A+\betaL{i} B_i) +
        \frac{\beta}{2\alpha}+\frac{1}{2}\log\left(m\right) \quad w.p. 1-2\delta\\
        \text{(Lemma~\ref{lem:priceupperbound})}&\leq
        O\left(m^{2/3}\log\left((\alpha+\beta)T\right)\right) + 2\gamma_i m(\alphaL A+\betaL{i} B_i)
    \end{align*}
    where $\alphaL, \betaL{i}$ are defined in \eqref{eq: lcb on opt
    price}. 
    
    % During the application of Lemma~\ref{lem: discretization is good enough}
    % we also used the fact that $m-\lfloor\Xopt_Tm\rfloor\geq 2 \implies \frac{m}{m- (2\gamma_im\wedge \lfloor\Xopt_Tm\rfloor)-1} \leq m$
    
    The second to last step follows because we know that
    $|\lcb{\pcurve_i}(\frac{d-1}{m})-\pcurve_d| \leq 2(\alphaL A+\betaL{i}
    B_i)$ using the definition of $\lcb{\pcurve_i}$ in \eqref{eq: lcb on opt
    price} and Lemma~\ref{lem: lcb on opt price}, and that from
    Lemma~\ref{lem:alphahatbound} and Lemma~\ref{lem:betahatbound} we know
    that the error bounds $A$, $B_i$ hold with probability $1-2\delta$.

    Assuming that $m^{1/3} \geq \frac{64\phi}{\alpha}
    \sqrt{8\log(\frac{2}{\delta})}$, and $\gamma_im^{1/3}\geq
    \frac{640\phi}{\beta} \sqrt{8\log(\frac{2}{\delta})}$, one can check this
    implies that $A\leq \frac{\alpha}{4}$ and $B_i\leq \frac{\beta}{4}$,
    which in turn implies that $\hat\alpha - A \geq \frac{\alpha}{2}$ and
    $\hat\beta-B_i\geq \frac{\beta}{2}$. Applying these bounds to \eqref{eq: lcb on opt price} we have:
    $$\alphaL \leq \frac{4}{\alpha}+\frac{6\beta}{\alpha^2}\qquad \betaL{i}
    \leq 6\left(\frac{1}{\alpha}+\frac{1}{\beta}\right)$$

    Plug in the expressions of $A$ and $B_i$, as well as the above bounds on
    $\alphaL, \betaL{i}$, we have with probability $1-\delta$,
    % $$\rev^*_i - \rev_i \leq O\left(m^{2/3}\log(\frac{(\alpha+\beta)^3}{\alpha^2}T) +
    % \phi\left(\frac{1}{\alpha}+\frac{1}{\beta}+\frac{\beta}{\alpha^2}\right)\sqrt{\log(\frac{1}{\delta})}m^{2/3}\right)$$
    $$\detV_i(T) - \rev_i \leq O\left(\epochboundorder\right)$$

    When 
    $m^{1/3} \leq \frac{64\phi}{\alpha}
    \sqrt{8\log(\frac{2}{\delta})}$, or $\gamma_im^{1/3}\leq
    \frac{640\phi}{\beta} \sqrt{8\log(\frac{2}{\delta})}$, the regret can be
    trivially bounded by $\pcurve_{max} \gamma_i m\leq
    O\left(\log((\alpha+\beta)T)\left(\frac{\phi}{\alpha}
    \sqrt{\log(\frac{1}{\delta})}\right)^3 + 
    \log((\alpha+\beta)T)\frac{\phi}{\beta} \sqrt{\log(\frac{1}{\delta})}m^{2/3}\right)$,
    where we used Lemma~\ref{lem:priceupperbound} to bound $\pcurve_{max}$.
    Since the first component is a constant with respect to $m$ and the
    second is no larger than the expression from before, we are done.

\end{proof}

\subsection{Step 3: Proof of Lemma \ref{lem:finalstochXclosetooptimal} and Lemma \ref{lem:priceupperbound}}

\finalstochXclosetooptimal*
\begin{proof}
    Let $p_d$ be the prices that the algorithm offers for customer $d$. Since	Algorithm~\ref{alg:main} offers either $p_0=0$ or the lower confidence bound price defined in \eqref{eq: lcb on opt price}, we know from
	Corollary~\ref{cor: lcb on opt price}, as well as Lemma~\ref{lem:alphahatbound}, Lemma~\ref{lem:betahatbound}, that with probability $1-\delta K$,  $p_d\leq
	p^*(\frac{d-1}{m}, \alpha, \beta) \forall d \leq d_T\wedge m\Xopt_T$, where $K$ is the total number of epochs. 
	This means that $\lambda(p_d, \frac{d-1}{m})
    \geq \lambda(\pcurve(\frac{d-1}{m},\alpha, \beta), \frac{d-1}{m}) = \frac{1}{e}(\alpha+\beta X^*_T)(1-X^*_T)m$.
    Let $\underline{\lambda} \coloneqq \frac{1}{e}(\alpha+\beta
    \Xopt_T)(1-\Xopt_T)m$, which
    by \eqref{eq: intro XT_model_quadratic_sol1} is equal to $ \frac{\Xopt_T m}{T}$.
        
    Set $n=\lfloor m\Xopt_T - \sqrt{8m\Xopt_T\log(\frac{2}{\delta})}\rfloor,
    \underline{\lambda}=\frac{m\Xopt_T}{T}, \epsilon =
    \sqrt{\frac{8n}{\underline{\lambda}^2}log(\frac{2}{\delta})}$, then by
    Lemma~\ref{lem: general exponential arrival time martingale concentration
    bound}, we have with probability $1-\delta$:
    $$\left|\tau_{n} -
    \left(1-\sqrt{\frac{8\log(\frac{2}{\delta})}{m\Xopt_T}}\right)T\right|
    \leq T\sqrt{\frac{8}{\Xopt_T m }log(\frac{2}{\delta})}$$
    This means that with probability $1-\delta (K+1)$,$\tau_n \leq T$, which means that
    the total number of adoptions observed in the stochastic Bass model is at
    least $n$. The result follows by observing that there can be at most $\log(m)$ epochs. 
\end{proof}

%A proof of Lemma \ref{lem:priceupperbound} was provided earlier in Section 

\priceupperbound*
\begin{proof}
Here we use the expanded notation of $\Xopt_T(x, \alpha, \beta)$ introduced in Appendix~\ref{sec: det opt price properties}.
Using \eqref{eq: opt_p_model_in_opt_trajectory}, we have for any $x\leq \Xopt_T(0, \alpha, \beta)$:
\begin{align*}
    \pcurve(x, \alpha, \beta)\leq& 1+\log\left(\frac{1-x}{1-\Xopt_T(0, \alpha, \beta)}\right)\\
    \leq& 1+\log\left(\frac{1}{1-\Xopt_T(0, \alpha, \beta)}\right)\\
    \leq&\log\left(e+(\alpha+\beta) T\right)
\end{align*}
The last step follows from the following upper bound on $\Xopt_T$:
$$
\Xopt_T(0, \alpha, \beta) \leq \Xopt_T(0, \alpha+\beta, 0)=\frac{(\alpha+\beta) T}{(\alpha+\beta) T+e}
$$
The first inequality is easy to see from \eqref{eq: intro XT_model_quadratic_sol1}: by replacing $\alpha+\beta \Xopt_T$ with $\alpha+\beta$, we can see that $\Xopt_T/(1-\Xopt_T)$ increases, and since this quantity is strictly monotone in $\Xopt_T$, $\Xopt_T$ must be larger. And the last equality follows from solving \eqref{eq: intro XT_model_quadratic_sol1} after replacing $\alpha+\beta \Xopt_T$ with $\alpha+\beta$.
\end{proof}

%\mainupperbound*
\subsection{Step 4:  Putting it all together for proof of Theorem \ref{thm:main}}
\begin{proof}[Proof of Theorem \ref{thm:main}]
    \begin{align*}
        \preg &= \sum_{i=1}^K \left[\detV_i - \rev_i\right] + \sum\limits_{d=d_T+1}^{m\Xopt_T}p_d\\
        &\leq \log(m)O\left(\epochboundorder\right)  \quad \text{w.p. $1-(K+1)\delta $}\\
        &\quad + O\left(\log\left((\alpha+\beta)T\right)\sqrt{m}\right)\\
        &=    O\left(\regretboundorder\right)
    \end{align*}
    where the per epoch regrets are bounded using Lemma~\ref{lem:epochbound} and the ``uncaptured revenue'' term is bounded using Lemma~\ref{lem:finalstochXclosetooptimal} and \ref{lem:priceupperbound}.

    The inequality holds with probability $1-(K+1)\delta$, and 
    $K$ is the total number of epochs, which is bounded by $\log(m)$
    since the epoch length is defined with respect to the number of customers
    and increases geometrically. 
\end{proof}
\section{Lower Bound Proofs}
\label{sec: lower bound proofs}
%\subsection{Proof of Lemma \ref{lem: instantaneous impact of suboptimal price} and bounding regret by cumulative pricing errors}
\subsection{Step 1: missing lemmas and proofs}

\instantaneousimpactofsuboptimalprice*
\begin{proof}
    % Recall that $\pcurve(x,\alpha, \beta, T') = 1-\frac{1}{m}\frac{\partial \detV(x, T')}{\partial
    % x}$ from \eqref{eq: opt det price wrt value function}. 
    % We use the short hand notation $\pcurve$ here since the arguments are fixed.
    Let $\popt$ denote $\popt(x, T')$ for the remainder of this proof.
    Using \eqref{eq: deterministic value function bellman equation}, we can rewrite the left hand side of the lemma:
    % Using the fact that $\detQ(x, T', \popt, \delta) = \detV(x, T')+o(\delta)$, we can rewrite the left hand side of the lemma:
    % Similar to \eqref{eq: deterministic value function bellman equation}, the
    % state-action value function of the continuous deterministic Bass model
    % can be written as:
    % \begin{align}
    %     \detQ(x, T', p, \delta) &= p\lambda(p, x)\delta + \detV(x+\lambda(p, x)\delta/m, T'-\delta) + o(\delta)\nonumber\\
    %     \detG(x, T', p) &\coloneqq \lim\limits_{\delta\rightarrow 0} \frac{\detQ(x, T', p, \delta)-
    %     \detV(x,T'-\delta)}{\delta} \nonumber\\ 
    %     &=\lim\limits_{\delta\rightarrow 0}
    %     p\lambda(p, x) + \frac{\detV(x+\lambda(p, x)\delta/m, T'-\delta) -
    %     \detV(x, T'-\delta)}{\delta}\nonumber\\
    %     &= p\lambda(p, x) + \frac{\lambda(p,x)}{m} \frac{\partial \detV(x, T')}{\partial x} \label{eq: det G function}
    % \end{align}
    \begin{align}
        & A^{det}(x, T', p) \nonumber\\
        = &\lim_{\delta \rightarrow 0} \frac{\detV(x, T')-p\lambda(p, x)\delta - \detV(x+\lambda(p, x)\delta/m, T'-\delta)   }{\delta}\nonumber\\
         = &\lim_{\delta \rightarrow 0}\frac{\popt \lambda(\popt,x)\delta + \detV(x+\lambda(\popt,x) \delta/m, T'-\delta)-p\lambda(p, x)\delta - \detV(x+\lambda(p, x)\delta/m, T'-\delta)   }{\delta}\nonumber\\
        %=&\lim_{\delta \rightarrow 0} \frac{\detV(x, T')-\detQ(x,T',p,\delta) }{\delta}\nonumber\\ 
        %=&\lim_{\delta \rightarrow 0} \frac{\detQ(x,T',\popt,\delta)-\detV(x, T'-\delta) }{\delta}-\frac{\detQ(x,T',p,\delta)-\detV(x, T')}{\delta}\nonumber\\
        %=&\lim_{\delta \rightarrow 0} \frac{\detV(x, T')-\detQ(x,T',p,\delta) }{\delta}\nonumber\\ 
        %=&\lim_{\delta \rightarrow 0} \frac{\detQ(x,T',\popt,\delta)-\detV(x, T'-\delta) }{\delta}-\frac{\detQ(x,T',p,\delta)-\detV(x, T')}{\delta}\nonumber\\
        %=&\lim_{\delta \rightarrow 0} \frac{\detQ(x,T',\popt,\delta)-\detV(x, T'-\delta) }{\delta}-\lim_{\delta \rightarrow 0}\frac{\detQ(x,T',p,\delta)-\detV(x, T'-\delta)}{\delta}\nonumber\\
        = &\detG(x, T', \popt) - \detG(x, T', p)\label{eq: instantaneous value loss as G}
    \end{align}
    where we define 
    \begin{align}
    \detG(x, T', p) 
    %& \coloneqq \lim\limits_{\delta\rightarrow 0} %\frac{\detQ(x, T', p, \delta)- \detV(x,T'-\delta)}{\delta} \nonumber\\ 
        &=\lim\limits_{\delta\rightarrow 0}
        p\lambda(p, x) + \frac{\detV(x+\lambda(p, x)\delta/m, T'-\delta) -
        \detV(x, T'-\delta)}{\delta}\nonumber\\
        &= p\lambda(p, x) + \frac{\lambda(p,x)}{m} \frac{\partial \detV(x, T')}{\partial x} \label{eq: det G function}
    \end{align}
        
    The distribution of limits is valid since both limits exist. The new quantity $\detG(x, T', p)$ will help us quantify the \emph{instantaneous} impact, or the (dis)Advantage, of offering a suboptimal price. 
    % Note that $\detG(x, T', p^*(x, T')) = \frac{\partial \detV(x,
    % T')}{\partial T'}$ by \eqref{eq: rate of change of deterministic value wrt
    % time}.
    % \begin{align}
    %     \lim\limits_{\delta\rightarrow 0} \frac{\detV(x, T')- \detQ(x, T', p, \delta)}{\delta} &= \lim\limits_{\delta\rightarrow 0} \frac{\detV(x, T')-\detV(x, T'-\delta)}{\delta} - \frac{\detQ(x, T', p, \delta)- \detV(x, T'-\delta)}{\delta}\nonumber\\ 
    %     &= \frac{\partial \detV(x, T')}{\partial T'} - \detG(x, T', p)\nonumber\\ 
    %     &= \detG(x, T', p^*) - \detG(x, T', p)\label{eq: instantaneous value loss as G}
    % \end{align}

    From above, note that $\pi^*(x,T') := \argmin_p A^{det}(x,T',p) = \arg \max_p G^{det}(x,T', p)$. We derived the expression for $\pi^*(x,T')$ earlier in the proof of Lemma \ref{lem: concavity of deterministic value function} (equation \eqref{eq: opt det price wrt value function}) as 
    $$\popt(x,T') = 1-\frac{1}{m}\frac{\partial \detV(x, T')}{\partial x}.$$

% Note that 
% \begin{align*}
%     A^{det}(x, T', p) :=  &\lim_{\delta \rightarrow 0} \frac{\detV(x, T')-p\lambda(p, x)\delta - \detV(x+\lambda(p, x)\delta/m, T'-\delta)   }{\delta}\\
%     =&-\left(p\lambda(p, x) + \frac{\lambda(p,x)}{m}\frac{\partial \detV(x, T)}{\partial x}\right)
% \end{align*}
% This means that our expression for $\popt(x, T)$ derived in \eqref{eq: opt det price wrt value function} is consistent with the $\argmin_p A^{det}(x, T, p)$ definition used in Lemma~\ref{lem: instantaneous impact of suboptimal price}.
% In the following proof, we define the $Q$-value function  $\detQ(x, T', p, \delta)$ as the value obtained on offering price $p$ for time $\delta$ and then following the optimal policy in the deterministic Bass model. That is,
% $$\detQ(x, T', p, \delta) = p\lambda(p, x)\delta + \detV(x+\lambda(p, x)\delta/m, T'-\delta) + o(\delta).$$

    %Recall that $\popt = 1-\frac{1}{m}\frac{\partial \detV(x, T')}{\partial x}$ from \eqref{eq: opt det price wrt value function}. 
    We can now bound \eqref{eq: instantaneous value loss as G} using the derivative of $\detG$:
    \begin{align*}
    \frac{\partial \detG(x, T', p)}{\partial p}&= \lambda(p, x) - p\lambda(p, x) -\frac{\lambda(p,x)}{m} \frac{\partial \detV(x, T')}{\partial x} \\ 
    &=(\popt-p)\lambda(p, x)\\
    \detG(x, T', \popt) - \detG(x, T', p)
    &= \int_{p}^{\popt} (\popt-\rho) \lambda(\rho, x)d\rho\\ 
    &= \lambda(p, x)(\popt-p) + \lambda(\popt, x) - \lambda(p, x)\\ 
    &= \lambda(p, x)(e^{p-\popt} - 1-(p-\popt))\\ 
    &\geq \lambda(p, x) \min\left(\frac{1}{10}, \frac{1}{4}(p-\popt)^2\right)
    \end{align*}
    To see the last step, consider $f(x) = e^x-1-x$ which is a convex
    function minimized at $x=0$. For $x\geq -\frac{1}{2}$, it's easy to show
    that $f''(x)\geq e^{-1/2} \geq \frac{1}{2}$. So by standard strong
    convexity argument $f(x)\geq \frac{1}{4} x^2$. For $x\leq -\frac{1}{2}$
    we have $f(x)\geq f(-\frac{1}{2})\geq \frac{1}{10}$.
\end{proof}

Before we translate the above result into a regret bound in terms of cumulative pricing error, we explain the proof idea with some more details. 
\removedsy{
% Given any arbitrary pricing algorithm, let \begin{center}
%     $[(p_1, I_1), \ldots, (p_{d_T}, I_{d_T})]$
% \end{center} be the observations (tuples of price $p_d$ and inter-arrival times $I_d$ between customer $d-1$ and $d$) in the stochastic Bass
% model, on following the algorithm's pricing policy. Here, $d_T$ is the total number of customer arrivals in time $T$.  %We take the first $n=\left(\frac{\alpha}{\alpha+\beta}\right)^{4/3} m^{2/3}$ of these tuples and  
% We use these observations to define a continuous  price trajectory $p_x, 0\le x \le n/m$ as follows: set $p_x=p_d,  \forall\, x\in[\frac{d-1}{m}, \frac{d}{m})$ and $p_x=0$ for $\min\{n, d_T\}/m \leq x \leq n/m$. We call $p_x$ the induced price trajectory of $p_d$. Let $t^{det}_{n/m}$ denote the time when the adoption level
% hits $x$ in the \emph{deterministic} Bass model following this induced pricing trajectory $p_{x}$. 
}
Given any arbitrary pricing algorithm, let \begin{center}
    $[(p_1, I_1), \ldots, (p_{n}, I_{n})]$
\end{center} be the first $n$ observations (tuples of price $p_d$ and inter-arrival times $I_d$ between customer $d-1$ and $d$) in the stochastic Bass
model, on following the algorithm's pricing policy. Here we assume that the algorithm is allowed to continue running even after the planning horizon $T$ has passed. If the algorithm is undefined after time $T$, we assume that it offers $0$ price.
We use these observations to define a continuous price trajectory $p_x, 0\le x \le n/m$ as follows: set $p_x=p_d,  \forall\, x\in[\frac{d-1}{m}, \frac{d}{m}), d=1,\ldots, n$. We call $p_x$ the induced price trajectory of $p_d$. Let $t^{det}_{n/m}$ denote the time when the adoption level
hits $x$ in the \emph{deterministic} Bass model following this induced pricing trajectory $p_{x}$.
In other words, $t^{det}_{x} = \int_0^{x}\frac{m}{\lambda(x', p_{x'})}dx'$.
Note that $t^{det}_{n/m}$ is a
stochastic quantity because it depends on stochastic trajectory $p_x$, which in turn depends on the prices $p_d$ offered to the first $n$ customers in the stochastic model. 
%Put simply, we are mapping the (truncated) stochastic prices onto the deterministic Bass model. 
% The proof idea is to show that this induced pricing trajectory incurs high regret in the deterministic model, and that the resulting revenue in the deterministic model is close to the original revenue in the stochastic model. Since the deterministic regret and the pseudo-regret uses the same benchmark $\detV(T)$, this will give us a lower bound on the $\preg(T)$ as well.

Recall that $\tau_{n}=\sum\limits_{d=1}^{n}I_d$ denotes  
the arrival time of customer $n$ in the  stochastic model. First we show that the total time for $n$ customer arrivals in the deterministic vs. stochastic model (i.e., $t^{det}_{n/m}$ vs. $\tau_n$) under the two price trajectories ($p_x$ vs. $p_d$) is roughly the same.

\begin{restatable}{lemma}{detarrivalvsstocharrival}
    \label{lem: detarrival vs stocharrival}
    Given $n \le m,\delta \in (0,1)$ such that $n \geq 2\log(\frac{2}{\delta})$. Then for any algorithm satisfying Assumption \ref{assum: const price in between arrivals}, with probability at least
    $1-\delta$,
    $$|t^{det}_{n/m} - \tau_{n}| \leq \frac{e^{p_{max}}}{\alpha (m-n)}\sqrt{8n\log(\frac{2}{\delta})} + \frac{e^{p_{max}}(\alpha+\beta)n}{2\alpha^2(m-n-1)^2m}$$
    where $p_{max}$ is an upper bound on the prices $p_d, 1\le d \le d_T$ offered by  algorithm.
\end{restatable}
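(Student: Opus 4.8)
The plan is to route the comparison of the deterministic hitting time $t^{det}_{n/m}$ and the stochastic arrival time $\tau_n$ through their common conditional-mean skeleton $\sum_{d=1}^n 1/\lambda(p_d,\tfrac{d-1}{m})$. Assumption~\ref{assum: const price in between arrivals} is exactly what makes this work: since the price is held fixed at $p_d$ between the $(d-1)$th and $d$th arrivals, the inter-arrival time is conditionally exponential, $I_d\mid\mathcal F_{d-1}\sim\mathrm{Exp}\bigl(\lambda(p_d,\tfrac{d-1}{m})\bigr)$, so $\bbE[I_d\mid\mathcal F_{d-1}]=1/\lambda(p_d,\tfrac{d-1}{m})$. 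At the same time, because the induced trajectory $p_{x}$ is piecewise constant with $p_x=p_d$ on $[\tfrac{d-1}{m},\tfrac dm)$, the definition $t^{det}_{n/m}=\int_0^{n/m}\tfrac{m}{\lambda(p_{x'},x')}\,dx'$ factors as $\sum_{d=1}^n e^{p_d}\int_{(d-1)/m}^{d/m} g(x)\,dx$ with $g(x):=\tfrac{1}{(\alpha+\beta x)(1-x)}$, while $1/\lambda(p_d,\tfrac{d-1}{m})=\tfrac{e^{p_d}}{m}g(\tfrac{d-1}{m})$. I would then split by the triangle inequality,
\[
|t^{det}_{n/m}-\tau_n|\;\le\;\Bigl|\tau_n-\sum_{d=1}^n\tfrac{1}{\lambda(p_d,\frac{d-1}{m})}\Bigr|\;+\;\Bigl|\sum_{d=1}^n\tfrac{1}{\lambda(p_d,\frac{d-1}{m})}-t^{det}_{n/m}\Bigr|,
\]
and treat the first (stochastic fluctuation) term by concentration and the second (discretization bias) term by a deterministic calculus estimate.

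For the fluctuation term I would apply Lemma~\ref{lem: general exponential arrival time martingale concentration bound} with filtration $\mathcal F_{d-1}$ and rates $\lambda_d=\lambda(p_d,\tfrac{d-1}{m})$. The required uniform lower bound comes from the price cap and $d\le n$: using $e^{-p_d}\ge e^{-p_{max}}$, $\alpha+\beta\tfrac{d-1}{m}\ge\alpha$, and $1-\tfrac{d-1}{m}\ge\tfrac{m-n}{m}$, we get $\lambda_d\ge\underline{\lambda}:=e^{-p_{max}}\alpha(m-n)$. Choosing $\epsilon=\sqrt{8n\log(2/\delta)}/\underline{\lambda}$ makes each tail exponent equal $-\log(2/\delta)$, so a union over the two tails gives, with probability $1-\delta$, the bound $\epsilon=\tfrac{e^{p_{max}}}{\alpha(m-n)}\sqrt{8n\log(2/\delta)}$, which is precisely the first claimed term. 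The lemma's validity hypothesis $\epsilon\le 2n/\underline{\lambda}$ reduces exactly to $\sqrt{8n\log(2/\delta)}\le 2n$, i.e. $n\ge 2\log(2/\delta)$, which explains the assumption in the statement.

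For the discretization-bias term, each summand is the error of approximating $e^{p_d}\int_{(d-1)/m}^{d/m}g$ by its left-endpoint rectangle $\tfrac{e^{p_d}}{m}g(\tfrac{d-1}{m})$. A first-order Taylor estimate bounds this by $e^{p_{max}}\tfrac{1}{2m^2}\sup_{[\frac{d-1}{m},\frac dm]}|g'|$, and differentiating gives $|g'(x)|=\tfrac{|(\beta-\alpha)-2\beta x|}{[(\alpha+\beta x)(1-x)]^2}\le\tfrac{\alpha+\beta}{\alpha^2(1-x)^2}$, which for $x\le\tfrac nm$ is at most $\tfrac{(\alpha+\beta)m^2}{\alpha^2(m-d)^2}$. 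Summing over the $n$ intervals yields a deterministic bound of the form $\tfrac{e^{p_{max}}(\alpha+\beta)}{2\alpha^2}\sum_{d=1}^n(m-d)^{-2}$; bounding the sum (crudely by $\tfrac{n}{(m-n)^2}$, or by an integral comparison $\sum_{d\le n}(m-d)^{-2}\le\tfrac{n}{(m-n-1)(m-1)}$ to sharpen the $m-n$ dependence) produces the second term. Combining the two pieces under the single high-probability event from the concentration step completes the argument.

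The main obstacle is the fluctuation term, and specifically the bookkeeping that makes every factor land: pinning down the sharp uniform rate floor $\underline{\lambda}$ (this is exactly where $e^{p_{max}}$ and $(m-n)$, and hence the conceptual role of the price cap $p_{max}$, enter), and verifying that the applicability condition $\epsilon\le 2n/\underline{\lambda}$ coincides with the hypothesis $n\ge2\log(2/\delta)$. The bias term is routine, the only subtlety being the $(1-x)^{-2}$ blow-up near $x=n/m$, which is what forces the $m-n$ in the denominator; since in the intended regime $n\approx m^{2/3}\ll m$ the bias is lower order, the fluctuation term governs the bound.
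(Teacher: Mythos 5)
Your proposal is correct and follows essentially the same route as the paper's proof: the triangle-inequality split through the conditional-mean skeleton $\sum_{d=1}^{n}1/\lambda(p_d,\tfrac{d-1}{m})$, the application of Lemma~\ref{lem: general exponential arrival time martingale concentration bound} with the rate floor $\underline{\lambda}=e^{-p_{max}}\alpha(m-n)$ and the check that $\epsilon\le 2n/\underline{\lambda}$ reduces to $n\ge 2\log(2/\delta)$, and the first-order bound on $\partial_x\,\lambda(p,x)^{-1}$ for the discretization bias are all exactly the paper's steps. Your honest caveat about the summation of $(m-d)^{-2}$ is well placed: neither the crude bound $n/(m-n)^2$ nor the integral comparison reproduces the precise constant $(m-n-1)^{-2}m^{-1}$ appearing in the lemma's second term, but that is a (harmless, lower-order) discrepancy already present in the paper's own final line rather than a gap in your argument.
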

\begin{proof}
    Since algorithm's prices are fixed between arrival of two customers (by Assumption \ref{assum: const price in between arrivals}), we have that inter-arrival times follow an exponential distribution. That is, for any $d$, given $p_{d+1}$, the price paid by $(d+1)^{th}$ customer, $\tau_{d+1}-\tau_d$ follows the exponential distribution $   Exp(\lambda(p_{d+1}, \frac{d}{m}))$, where $\lambda(p,
    x)=e^{-p}(\alpha+\beta x)(1-x)m$. Note that $ \lambda(p_{d+1},\frac{d}{m})\ge \underline{\lambda} \coloneqq
    e^{-p_{max}}\alpha(m-n)$ for all $d\leq n$.
    Set $\epsilon = \sqrt{\frac{8n }{\underline{\lambda}^2}\log(\frac{2}{\delta})}$, then
    Lemma~\ref{lem: general exponential arrival time martingale concentration
    bound} in Appendix~\ref{sec: concentration result} provides that with probability $1-\delta$:
    $$|\tau_n - \sum\limits_{d=1}^{n}\bbE[\tau_d-\tau_{d-1}|\cF_{d-1}]| \leq \sqrt{\frac{8n}{\underline{\lambda}^2}\log(\frac{2}{\delta})}$$
    Note that the condition on $\epsilon$ in Lemma~\ref{lem: general
    exponential arrival time martingale concentration bound} is 
    satisfied since
    $\sqrt{\frac{8n}{\underline{\lambda}^2}\log(\frac{2}{\delta})}
    \leq \frac{2n}{\underline{\lambda}} \iff n\geq
    2\log(\frac{2}{\delta})$.
    
    On the other hand, for any $d\leq n$,
    $t^{det}_{\frac{d+1}{m}} - t^{det}_{\frac{d}{m}} = m
    \int_{d/m}^{\frac{d+1}{m}} \frac{1}{\lambda(p_{d+1}, x)} dx$. 
    It's easy to check that $|\frac{\partial}{\partial x}\frac{1}{\lambda(p, x)}| = |\frac{1}{m}\frac{e^p[\beta(1-x)-(\alpha+\beta x)]}{(\alpha+\beta x)^2(1-x)^2}| \leq \frac{e^p(\alpha+\beta)}{\alpha^2(1-x)^2m}$.
    So 
    \begin{align*}
	\left|t^{det}_{\frac{d+1}{m}} - t^{det}_{d/m} - \bbE[\tau_{d+1}-\tau_d | \cF_{d}]\right| =&
	\left|m \int_{d/m}^{\frac{d+1}{m}} \frac{1}{\lambda(p_{d+1}, x)} dx -
    \frac{1}{\lambda(p_{d+1}, d/m)}\right|\\
	\leq &\left|m \int_{d/m}^{\frac{d+1}{m}} \frac{1}{\lambda(p_{d+1}, \frac{d}{m})} + \frac{e^{p_{d+1}}(\alpha+\beta)}{\alpha^2(1-\frac{d+1}{m})^2m}(x-\frac{d}{m})dx -
    \frac{1}{\lambda(p_{d+1}, d/m)}\right|\\
	\leq &\left|\frac{e^{p_{d+1}}(\alpha+\beta)}{\alpha^2(1-\frac{d+1}{m})^2} \int_{d/m}^{\frac{d+1}{m}}
	(x-\frac{d}{m})dx\right|\\
	\leq& \frac{e^{p_{d+1}}(\alpha+\beta)}{2\alpha^2(1-\frac{d+1}{m})^2m^2}
    \end{align*}
    In the second to last step we canceled the first and third term from the previous step.
    Combining this with above, we have with probability $1-\delta$
    $$|t^{det}_{n/m} - \tau_{n}|\leq \frac{e^{p_{max}}}{\alpha
    (m-n)}\sqrt{8n \log(\frac{2}{\delta})} + 
    \frac{e^{p_{max}}(\alpha+\beta)n}{2\alpha^2(m-n-1)^2m}$$
     
\end{proof}
Using $t^{det}_x$ and the induced pricing trajectory $p_x$ as defined right before Lemma~\ref{lem: detarrival vs stocharrival}, we can now obtain the following result.
\begin{restatable}{lemma}{lowerboundpseudoregretpricedifference}
\label{lem: lower bound pseudo regret price difference}
Fix any $\alpha, \beta$. Then for any $m, T$  such that $m\Xopt_T\geq n:=\left\lfloor \left(\frac{\alpha}{\alpha+\beta}\right)^{4/3} m^{2/3}\right\rfloor$, and any algorithm satisfying Assumption \ref{assum: const price in between arrivals} and \ref{assum:price upper bound},
$$\bbE\left[\preg(T)\right] \ge \bbE\left[m\int_{0}^{n/m}
    \min\left(\frac{1}{4}(\popt_x-p_x)^2,
    \frac{1}{10}\right) dx\right]  -O\left({\frac{(\alpha+\beta)^{1/3}T e^C}{\alpha^{1/3}}}m^{1/3}\sqrt{\log(m)}\right)$$
where $\popt_x:=\popt(x, \alpha, \beta, T-t^{det}_x)$ denotes the deterministic optimal price at adoption level $x$, $p_x:=p_d$ denotes the price offered by the algorithm for customer  $d={\lfloor mx \rfloor +1}$, and $t^{det}_x$ is the time at which adoption level reaches $x$ on following the price trajectory $p_{x'}, \forall x'\le x$ in the deterministic Bass model.
\end{restatable}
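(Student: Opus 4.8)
The plan is to reduce the stochastic pseudo-regret to a purely \emph{deterministic} regret of the algorithm's induced price trajectory, plus a continuation error that I control with the arrival-time concentration bound. The core ingredient is a performance-difference identity for the deterministic model. Fix the realized prices $p_1,\dots,p_n$ and their induced trajectory $p_x$, and let $X_s$ be the adoption fraction at time $s$ when the deterministic model is driven by $p_x$. Setting $W(s):=\detV(X_s,T-s)$ and differentiating, the chain rule together with the HJB equation \eqref{eq: rate of change of deterministic value wrt time} and the definition \eqref{eq: det G function} of $\detG$ gives
$$\frac{dW}{ds} = -A^{det}(X_s,\,T-s,\,p_{X_s}) - p_{X_s}\lambda(p_{X_s},X_s),$$
where I used that $A^{det}=\detG(\cdot,\popt)-\detG(\cdot,p)$ by \eqref{eq: instantaneous value loss as G}. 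Integrating from $0$ to $t^{det}_{n/m}$ and using $\int_0^{t^{det}_{n/m}}p_{X_s}\lambda(p_{X_s},X_s)\,ds = m\int_0^{n/m}p_x\,dx$ (change of variables $\lambda\,ds=m\,dx$) yields the exact identity
$$\detV(0,T) - m\!\int_0^{n/m}\!p_x\,dx - \detV\!\left(\tfrac{n}{m},\,T-t^{det}_{n/m}\right) = \int_0^{t^{det}_{n/m}}\!A^{det}(X_s,T-s,p_{X_s})\,ds .$$
Applying Lemma~\ref{lem: instantaneous impact of suboptimal price} to the integrand and changing variables back to $x$ bounds the right-hand side below by $m\int_0^{n/m}\min(\tfrac14(\popt_x-p_x)^2,\tfrac1{10})\,dx$; this is exactly the deterministic regret decomposition $\detreg(p_x,0\le x\le n/m)$.

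Next I connect this to pseudo-regret. Since prices are held constant between arrivals (Assumption~\ref{assum: const price in between arrivals}), $m\int_0^{n/m}p_x\,dx=\sum_{d=1}^n p_d$ is precisely the stochastic revenue from the first $n$ customers. Hence on the event $d_T\ge n$,
$$\preg(T)=\detV(0,T)-\rev(T)=\underbrace{\detreg(p_x,0\le x\le n/m)}_{\ge\, m\int_0^{n/m}\min(\cdots)\,dx}\;+\;\Big[\detV\!\left(\tfrac{n}{m},T-t^{det}_{n/m}\right)-\sum_{d=n+1}^{d_T}p_d\Big].$$
When $d_T<n$ the omitted prices are nonnegative and the bracket only helps, so that case is immediate; it therefore remains to show the bracketed continuation gap is at least $-O\!\big(\tfrac{(\alpha+\beta)^{1/3}Te^C}{\alpha^{1/3}}m^{1/3}\sqrt{\log m}\big)$ in expectation.

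To bound the continuation gap I use the tower property over $\cF_{\tau_n}$. The algorithm's tail revenue is the revenue of one particular policy started from state $(n,\,T-\tau_n)$, so $\bbE[\sum_{d=n+1}^{d_T}p_d\mid\cF_{\tau_n}]\le\stochV(n,T-\tau_n)\le\detV(\tfrac{n}{m},T-\tau_n)$, the last step by Lemma~\ref{lem: deterministic value better than stochastic value}. It then suffices to replace the time argument $T-\tau_n$ by $T-t^{det}_{n/m}$. From \eqref{eq: rate of change of deterministic value wrt time} together with the optimal-price identity \eqref{eq: opt det price wrt value function} one computes $\tfrac{\partial}{\partial T'}\detV(x,T')=\lambda(\popt(x,T'),x)\le m(\alpha+\beta)$ (using $e^{-\popt}\le1$), whence
$$\detV\!\left(\tfrac{n}{m},T-\tau_n\right)-\detV\!\left(\tfrac{n}{m},T-t^{det}_{n/m}\right)\le m(\alpha+\beta)\,\big|\tau_n-t^{det}_{n/m}\big| .$$
Taking expectations and bounding $\bbE|\tau_n-t^{det}_{n/m}|$ via Lemma~\ref{lem: detarrival vs stocharrival} (with the low-probability failure event absorbed by the trivial bound $\detV\le m\,p_{max}$ and $\delta$ chosen polynomially small in $m$) gives a continuation gap of order $m(\alpha+\beta)\cdot\frac{e^{p_{max}}}{\alpha(m-n)}\sqrt{n\log(1/\delta)}$. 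Substituting $n=\lfloor(\alpha/(\alpha+\beta))^{4/3}m^{2/3}\rfloor$, $m-n\ge m/2$, and $e^{p_{max}}=Te^{C}$ collapses the parameter powers to $(\tfrac{\alpha+\beta}{\alpha})^{1/3}$ and the $m$-power to $m^{1/3}$, which is exactly the claimed error; the second (discretization) term of Lemma~\ref{lem: detarrival vs stocharrival} is $O(m^{-4/3})$ and negligible.

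The main obstacle is the bookkeeping that keeps the deterministic continuation well defined: the identity in the first paragraph requires $t^{det}_{n/m}\le T$, since otherwise the remaining time $T-s$ turns negative and $\popt_x$ is evaluated beyond the horizon. I would handle this by running the argument on the good event of Lemma~\ref{lem: detarrival vs stocharrival}, where $t^{det}_{n/m}$ lies within the concentration radius of $\tau_n\le T$, and dispatching the complementary event (as well as the degenerate case where the algorithm's prices are so high that $t^{det}_{n/m}>T$) with the crude bound $\preg(T)\ge 0$ plus the observation that such slow trajectories leave the entire $m\int_0^{n/m}\min(\cdots)$ term uncollected. The remaining delicate point is meshing the conditional-expectation step with the high-probability concentration step; reconciling these cleanly is what introduces the extra $\sqrt{\log m}$ factor (through $\delta=\mathrm{poly}(1/m)$) relative to the bare $\sqrt n$ one would obtain from a pure expectation estimate.
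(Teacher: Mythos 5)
Your proposal is correct and follows essentially the same route as the paper's proof: the same decomposition of pseudo-regret into the deterministic regret of the induced trajectory over the first $n$ customers plus a continuation gap, the same use of Lemma~\ref{lem: instantaneous impact of suboptimal price} (via the change of variables $\lambda\,ds = m\,dx$), the same tower-property bound through $\stochV(n, T-\tau_n) \le \detV(\tfrac{n}{m}, T-\tau_n)$, and the same time-shift control via $\partial_{T'}\detV \le (\alpha+\beta)m$ together with Lemma~\ref{lem: detarrival vs stocharrival} at $\delta = 1/m$. The parameter bookkeeping also matches the stated error term, so no substantive gap remains.
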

%\lowerboundpseudoregretpricedifference*
\begin{proof}
%    \sycomment{I changed all the $X^{det}_t$ originally in this proof to $X_t$. }
    \renewcommand{\Xdet}{X}
    Let $\Xdet_t$ be the  trajectory of adoption levels in the deterministic Bass model on 
    following price curve $p_x$. Recall that by \eqref{eq:detrministicBass},
    $m\frac{d\Xdet_t}{dt} = \lambda(p_t, \Xdet_t)$. Let $p_t \coloneqq p_{\Xdet_t}$ be
    the same price trajectory as $p_x$ but  indexed by time.
    
    First, note that for $m$ large enough we have $t^{det}_{n/m} \leq \frac{n}{e^{-p_{max}}\alpha (m-n)}\leq \frac{Te^C n}{\alpha(m-n)} \leq T$, This is because the last inequality holds for $m\ge n+ \frac{n}{\alpha} e^C$, i.e., for any $m$ satisfying  $m^{1/3} \geq \left( \frac{(\alpha+1)\alpha^{1/3}}{(\alpha+\beta)^{4/3}} \right) e^{C}$. Here we used the upper bound $p_{max} := \log(T) + C$ from  Assumption~\ref{assum:price upper bound}. 
    \removedshipra{If $m$ does not satisfy this condition, then since the expected pseudo regret is  lower bounded by $0$, and the first part in the right hand side of the lemma statement is trivially upper bounded by $\frac{n}{10}\le \frac{\alpha^{2}}{10(\alpha+\beta)^4} e^{2C} $, the lemma still holds \sacomment{how does the lemma still hold? I am not sure how you concluded that. if $m$ is arbitrarily small the second term goes to $0$. I think we should just modify the lemma statement to say $m^{1/3} \ge \frac{(\alpha+1)\alpha^{1/3}}{(\alpha+\beta)^{4/3}} e^C$. We anyway argue about assuming large enough $m$ in the proof of the lower bound theorem.}\sycomment{I was thinking that the this $\frac{n}{10}$ expression can hide in the big O notation of the second term, since it does not depend on $m$, and so is a "constant"}. } Therefore for the rest of the proof we can assume that $t^{det}_{n/m}\leq T$. 
    \begin{align}
        &\bbE[\preg(T)]\nonumber\\
        =&\bbE\left[\detV(0, T) - \sum\limits_{d=1}^{n}p_d - \sum\limits_{d=n+1}^{d_T} p_d \right]\nonumber\\ 
        =  &\bbE\left[\detV(0, T) - \sum\limits_{d=1}^{n}p_d\right] - \bbE\left[\bbE\left[\left.\sum\limits_{d=n+1}^{d_T} p_d\right| \cF_{n}\right]\right]\nonumber\\ 
        \ge&  \bbE\left[\detV(0, T) - \sum\limits_{d=1}^{n}p_d\right] - \bbE\left[\stochV(\frac{n}{m}, T-\tau_{n})\right] \nonumber\\ 
        =& \bbE\left[\detV(0, T) -  \sum\limits_{d=1}^{n}p_d - \detV(\frac{n}{m}, T-t^{det}_{n/m})\right] 
         + \bbE\left[\detV(\frac{n}{m}, T-t^{det}_{n/m}) - \stochV(n, T-\tau_{n})\right]\label{eq: preg as price error and diff between det stoch}
    %     =& \bbE\left[\detreg(p_x, 0\leq x\leq n/m)\right]
    %       +\bbE\left[\detV(n/m, T-t_{n/m}^{\text{det}}) - \stochV(n, T-\tau_{n})\right]\nonumber\\
    %     \text{(Lemma~\ref{lem: det portion of regret lowerbound})}&\geq \Omega\left(\lowerboundorder\right)
    %       +\bbE\left[\detV(\frac{n}{m}, T-\tau_{n}) - \stochV(n, T-\tau_{n})\right]\nonumber\\
    %     &\qquad +\bbE\left[\detV(\frac{n}{m}, T-t_{n/m}^{\text{det}}) - \detV(n, T-\tau_{n})\right]\nonumber\\
    %     \text{(Lemma~\ref{lem: deterministic value better than
    % stochastic value})}&\geq \Omega\left(\lowerboundorder\right)
    %       +\bbE\left[\detV(\frac{n}{m}, T-t_{n/m}^{\text{det}}) - \detV(\frac{n}{m}, T-\tau_{n})\right] \label{eq: det portion of regret lowerbound}
    \end{align}
    Given a particular sequence of $p_d$ for $d=1,\ldots, n$, and its' induced continuous version $p_x$ as defined in the lemma statement, the quantity inside the first expectation brackets is the cumulative ``disadvantage'' that the $p_x$ incurs on the deterministic Bass model, where disadvantage is defined in Lemma~\ref{lem: instantaneous impact of suboptimal price}. Therefore we can bound it as follows:
    \begin{align*}
        &\bbE\left[\detV(0, T) -  \sum\limits_{d=1}^{n}p_d - \detV(n/m, T-t^{det}_{n/m})\right]\\
        =& \bbE\left[\int_{0}^{t^{det}_{n/m}} \detV(\Xdet_t, T-t)-\detQ(\Xdet_t, T-t, p_t, dt)\right]\\
        =& \bbE\left[\int_{0}^{t^{det}_{n/m}} \frac{\detV(\Xdet_t, T-t)-p\lambda(p_t, \Xdet_t)dt - \detV(\Xdet_t+\lambda(p_t, \Xdet_t)dt/m, T-t-dt)   }{dt}\,dt \right]\\
        % =&\bbE\left[ \lim_{\delta\rightarrow 0} \sum_{i=0}^{t^{det}_{n/m} / \delta-1} \detV(\Xdet_{i\delta}, T-i\delta)-\detQ(\Xdet_{i\delta},T-i\delta,p_{i\delta},\delta)  \right]\\
        \geq& \bbE\left[\int_{0}^{t^{det}_{n/m}} \lambda(p_t, \Xdet_t)\min\left((\frac{1}{4}\popt_{\Xdet_t}-p_t)^2, \frac{1}{10}\right)dt \right]\\ 
        =&m\bbE\left[\int_{0}^{n/m} \min\left(\frac{1}{4}(\popt_x-p_x)^2, \frac{1}{10}\right) dx\right]
    \end{align*}
    where in the last step we applied change of variables 
    $\lambda(p_t, \Xdet_t)dt = m\,d\Xdet_t$

    The second part of \eqref{eq: preg as price error and diff between det stoch} can be bounded by using Lemm~\ref{lem: deterministic value better than stochastic value} and bounding the difference between $\tau_n$ and $t^{det}_{n/m}$:
    \begin{align*}
        &\bbE\left[\detV(\frac{n}{m}, T-t^{det}_{n/m}) - \stochV(n, T-\tau_{n})\right]\\ 
        =&\bbE\left[\detV(\frac{n}{m}, T-\tau_n) - \stochV(n, T-\tau_{n})\right] + \bbE\left[\detV(\frac{n}{m}, T-t^{det}_{n/m}) - \detV(n, T-\tau_{n})\right]\\ 
        \geq&\bbE\left[\detV(\frac{n}{m}, T-t^{det}_{n/m}) - \detV(n, T-\tau_{n})\right]
    \end{align*}
    Now recall from \eqref{eq: rate of change of deterministic value wrt time}
    and \eqref{eq: opt det price wrt value function} that $\popt(x, \alpha, \beta, T') =
    1-m\frac{\partial \detV(x, T')}{\partial x}$  and 
    \begin{align}
        \label{eq: rate of change of detV wrt T}
        \frac{\partial \detV(x, T')}{\partial T'} &= \popt\lambda(\popt, x) +
        \lambda(\popt,x) (1-\popt) = \lambda(\popt, x) \leq (\alpha+\beta)m
    \end{align}
    Using Lemma~\ref{lem: detarrival vs stocharrival}, which bounds the
    difference between $T-t^{det}_{n/m}$ and $T-\tau_{n}$, we have
    with probability $1-\delta$:
    \begin{align*}
    |\detV(\frac{n}{m}, T-t_{n/m}^{\text{det}}) - \detV(\frac{n}{m}, T-\tau_{n}) | 
    %\leq &O\left(\frac{e^{p_{max}}(\alpha+\beta)}{\alpha
    %(1-\gamma)}\sqrt{8\gamma m\log(\frac{2}{\delta})} + \gamma
    %e^{p_{max}}\frac{(\alpha+\beta)^2}{\alpha^2(1-\gamma)^2}\right)\\
    \leq&O\left(\frac{e^{p_{max}}(\alpha+\beta)^{1/3}}{\alpha^{1/3}}m^{1/3}\sqrt{\log(\frac{2}{\delta})}\right)
    \end{align*}
    Since $\detV(\frac{n}{m}, T-\tau_n)\leq \detV(\frac{n}{m},T)\leq m\pcurve_{max} = O(m\log((\alpha+\beta)T))$, we can set $\delta=\frac{1}{m}$ and 
    use Assumption~\ref{assum:price upper bound} to conclude that
    \begin{equation}
        \label{eq: main lower bound thm helper}
        \bbE\left[\left|\detV(\frac{n}{m}, T-t_{n/m}^{\text{det}}) - \detV(\frac{n}{m}, T-\tau_{n})\right|\right]\leq O\left(\newshipra{\frac{(\alpha+\beta)^{1/3}T e^C}{\alpha^{1/3}}}m^{1/3}\sqrt{\log(m)}\right)
    \end{equation}
    Here we also assumed that $m\geq \left(2\log(\frac{2}{\delta})\right)^{3/2}\left(\frac{\alpha+\beta}{\alpha}\right)^{2}$, which implies that $n \geq 2\log(\frac{2}{\delta})$. This satisfies the condition for Lemma~\ref{lem: detarrival vs stocharrival}. If this assumption on $m$ does not hold, then since the expected pseudo-regret is  lower bounded by zero, and the first term in the lemma statement is at most $\frac{n}{10}\leq \frac{1}{5}\log(\frac{2}{\delta})$, we have that  the lemma statement still holds for $\delta=\Theta(\frac{1}{m})$.
\end{proof}

\subsection{Step 2: missing lemmas and proofs}
%: Proof of Lemma~\ref{lem: decision rule limit}}
\begin{restatable}{lemma}{lemdecisionrulelimit}
    For any $i$, let $[(p_1, I_1),\ldots (p_i, I_i)]$ be sequence of prices offered and inter-arrival times ($I_d:=\tau_{d}-\tau_{d-1}$) for the first $i$ customers, and let $\cF_i$ be the  filtration generated by these random variables. Here, prices could have been chosen
    adaptively using an arbitrary strategy as long as $p_i\in\cF_{i-1}$. Let
    $\pi^*_1 \neq \pi^*_2$ be any two fixed prices. Fix any deterministic
    algorithm that takes in the past $n$ observations as input and outputs a
    single price $\pi \in {\cF}_n$. Then, for any $\epsilon>0$ and $n$ such that $n\leq \left(\frac{\alpha
    m}{\epsilon }\right)^{2/3}$, at least one of the following holds:
    \begin{align*}
        \bbP_{\alpha, \beta}\left(
            \left|\pi- \pi^*_2)\right| \leq \left|\pi - \pi^*_1\right|
        \right) \geq \frac{1}{4}, & \text{ or,}\\
        \bbP_{\alpha, \beta+\epsilon}\left(
            \left|\pi- \pi^*_1\right| \leq \left|\pi - \pi^*_2\right| 
        \right) \geq \frac{1}{4}, &
    \end{align*}
    where $\bbP_{\alpha, \beta}$  denotes the probability distribution under the stochastic Bass model with parameters $\alpha, \beta$. Note that the
    only random quantity is $\pi$, which depends on the first $n$ observations.
    \label{lem: decision rule limit}
\end{restatable}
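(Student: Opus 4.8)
The plan is to argue by contradiction and reduce the statement to a bound on the total variation distance (equivalently, the KL-divergence) between the two laws of the first $n$ observations. Define the event $E := \{|\pi - \pi^*_2| \leq |\pi - \pi^*_1|\}$ that the output price is (weakly) closer to $\pi^*_2$; since $\pi \in \cF_n$, this event is $\cF_n$-measurable under both models. The first displayed condition is exactly $\bbP_{\alpha,\beta}(E)\geq \tfrac14$. For the second, its event $\{|\pi-\pi^*_1|\le|\pi-\pi^*_2|\}$ contains the complement of $E$, so its probability is at least $1-\bbP_{\alpha,\beta+\epsilon}(E)$; hence if both conditions fail we obtain $\bbP_{\alpha,\beta}(E)<\tfrac14$ and $\bbP_{\alpha,\beta+\epsilon}(E)>\tfrac34$, so that $\bbP_{\alpha,\beta+\epsilon}(E)-\bbP_{\alpha,\beta}(E)>\tfrac12$. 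By Pinsker's inequality this forces the total variation distance, and therefore $\sqrt{\tfrac12\,\KL(\bbP_{\alpha,\beta+\epsilon}\|\bbP_{\alpha,\beta})}$, to exceed $\tfrac12$. The goal is then to show that the hypothesis $n\le(\alpha m/\epsilon)^{2/3}$ forces this KL to be at most $\tfrac16$, which is the desired contradiction.

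The core computation is the KL-divergence between the two observation laws, and a structural observation makes it tractable. Because the pricing policy generating $(p_1,\ldots,p_n)$ is the same fixed (possibly adaptive) strategy under both models, and $p_d$ is $\cF_{d-1}$-measurable, the conditional law of $p_d$ given $\cF_{d-1}$ is identical under $\bbP_{\alpha,\beta}$ and $\bbP_{\alpha,\beta+\epsilon}$ and contributes nothing to the divergence. By the chain rule for KL-divergence, only the inter-arrival times $I_d$ contribute:
\[
\KL(\bbP_{\alpha,\beta+\epsilon}\|\bbP_{\alpha,\beta}) = \sum_{d=1}^{n} \bbE_{\alpha,\beta+\epsilon}\!\left[\KL\!\left(\mathrm{Exp}(\lambda_d')\,\big\|\,\mathrm{Exp}(\lambda_d)\right)\right],
\]
where the adoption level just before the $d$-th arrival is the \emph{deterministic} value $x=\tfrac{d-1}{m}$, so $\lambda_d = \lambda(p_d,\tfrac{d-1}{m})$ and $\lambda_d' = me^{-p_d}(\alpha+(\beta+\epsilon)\tfrac{d-1}{m})(1-\tfrac{d-1}{m})$. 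Crucially, the common factors $me^{-p_d}(1-\tfrac{d-1}{m})$ cancel in the ratio $\lambda_d'/\lambda_d$, so each per-step divergence is independent of the (random) price $p_d$ entirely, and each summand is in fact deterministic.

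Writing $r_d := \lambda_d'/\lambda_d = \tfrac{\alpha+(\beta+\epsilon)(d-1)/m}{\alpha+\beta(d-1)/m}\ge 1$, the exponential KL equals $\log r_d + 1/r_d - 1$, which for $r_d\ge1$ is at most $\tfrac12(r_d-1)^2$ (the two sides agree at $r_d=1$ and a derivative comparison, using $r_d^2\ge1$, shows the quadratic dominates). Since $r_d-1 = \tfrac{\epsilon(d-1)/m}{\alpha+\beta(d-1)/m}\le \tfrac{\epsilon(d-1)}{\alpha m}$, summation gives $\KL \le \tfrac{\epsilon^2}{2\alpha^2m^2}\sum_{d=1}^n (d-1)^2 \le \tfrac{\epsilon^2 n^3}{6\alpha^2 m^2}$. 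The hypothesis $n\le(\alpha m/\epsilon)^{2/3}$ is precisely $\epsilon^2 n^3 \le \alpha^2 m^2$, whence $\KL \le \tfrac16$, so the total variation distance is at most $\sqrt{1/12}<\tfrac12$, contradicting the gap exceeding $\tfrac12$ derived in the first step.

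The step I expect to require the most care is the chain-rule decomposition of the KL under an arbitrary adaptive policy: one must justify rigorously that the history-dependent pricing rule, being identical across the two models and measurable with respect to the past, drops out of the divergence, leaving only the sum of inter-arrival exponential-law divergences, and that each such term depends only on the deterministic pre-arrival adoption level $\tfrac{d-1}{m}$ rather than on the realized price. This cancellation is exactly what allows the bound to hold uniformly over all admissible (arbitrary, adaptive) strategies, which is essential since the lemma must apply to any algorithm.
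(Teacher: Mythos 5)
Your proposal is correct and follows essentially the same route as the paper's proof: argue by contradiction, decompose $\KL(\bbP_{\alpha,\beta+\epsilon}\|\bbP_{\alpha,\beta})$ over the $n$ inter-arrival times via the chain rule (noting that the adaptive prices cancel in the rate ratio, so each term depends only on the deterministic adoption level $\tfrac{d-1}{m}$), bound each exponential KL by $O\bigl((\epsilon(d-1)/(\alpha m))^2\bigr)$, and apply Pinsker's inequality together with $n\le(\alpha m/\epsilon)^{2/3}$. Your version is marginally cleaner in two spots — the sharper $\sum_{d=1}^n(d-1)^2\le n^3/3$ yields a strict inequality against the $1/2$ gap, and the tie cases between the two events are handled explicitly — but these are refinements of the same argument, not a different one.
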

\begin{proof}
    Since $p_i\in \cF_{i-1}$, the probability of observing the sequence
    $[(p_1, I_1),\ldots (p_n, I_n)]$ is the product of the probabilities of
    observing $I_i$ given $\cF_{i-1}$. 
    \begin{align*}
        \bbP_{\alpha, \beta}\left([(p_1, I_1),\ldots (p_n, I_n)]\right) &= \prod\limits_{i=1}^n \bbP_{\alpha, \beta}\left(I_i | \cF_{i-1}\right)\\
        \bbP_{\alpha, \beta+\epsilon}\left([(p_1, I_1),\ldots (p_n, I_n)]\right) &= \prod\limits_{i=1}^n \bbP_{\alpha, \beta+\epsilon}\left(I_i | \cF_{i-1}\right)
    \end{align*}
    The $(\alpha, \beta)$ subscript denotes the fact that $I_i$'s are
    generated according to the stochastic Bass model with parameters $(\alpha, \beta)$.
    Since customer arrivals are Poisson, we know that given price $p_i$, the arrival time $I_i$ between customer $i-1$ and $i$ is exponentially distributed. Specifically, $I_i\sim
    \text{Exp}\left(\lambda_{\alpha, \beta}(p_i, \frac{i-1}{m})\right)$ in the
    $(\alpha, \beta)$ market, and
    $I_i\sim
    \text{Exp}\left(\lambda_{\alpha, \beta+\epsilon}(p_i, \frac{i-1}{m})\right)$
    in the $(\alpha, \beta+\epsilon)$ market, where $\lambda_{\alpha,
    \beta}(p, x) = e^{-p}(\alpha+\beta x)(1-x)m$. We can now bound the KL
    divergence between the joint distributions of the first $n$ observations
    between the two markets:
    \begin{align}
        \KL\left( \bbP_{\alpha, \beta+\epsilon}, \bbP_{\alpha, \beta}\right) 
        &= \KL\left(\prod\limits_{i=1}^n \bbP_{\alpha, \beta+\epsilon}\left(I_i | \cF_{i-1}\right), \prod\limits_{i=1}^n \bbP_{\alpha, \beta}\left(I_i | \cF_{i-1}\right)\right)\nonumber\\
        &= \sum\limits_{d=1}^{n}
        \KL\left(\left.
            \text{Exp}\left(\lambda_{\alpha, \beta+\epsilon}(p_d, \frac{d-1}{m})\right),
            \text{Exp}\left(\lambda_{\alpha, \beta}(p_d, \frac{d-1}{m})\right)
        \right| \right)\nonumber\\ 
        &\leq \sum\limits_{d=1}^n \frac{(\epsilon \frac{d-1}{m})^2}{2(\alpha+\beta \frac{d-1}{m})^2} \nonumber\\
        &\leq n \frac{(\epsilon \frac{n}{m})^2}{2\alpha^2} \label{eq:KL join exponential}
    \end{align}
    where the second equality follows from the standard decomposition of KL divergence (see for
    example Lemma 15.1 of \cite{lattimore2020bandit}) and the inequality
    follows from the following bound on the KL divergence of the two
    exponential distributions:

    The KL divergence for a general pair of exponentials is
    $\text{KL} (Exp (\lambda), Exp(\lambda_0)) =
    \log(\frac{\lambda_0}{\lambda}) + \frac{\lambda}{\lambda_0}-1$
    \begin{align*}
        &\KL \left( 
            \text{Exp}\left(\lambda_{\alpha, \beta+\epsilon}(p_d, \frac{d-1}{m})\right),
            \text{Exp}\left(\lambda_{\alpha, \beta}(p_d, \frac{d-1}{m})\right)
        \right)\nonumber\\
        =& \frac{\alpha+\beta \frac{d-1}{m}+ \epsilon\frac{d-1}{m}}{\alpha+\beta \frac{d-1}{m}}-1 + ln\left(\frac{\alpha+\beta \frac{d-1}{m}}{\alpha+\beta \frac{d-1}{m}+\epsilon\frac{d-1}{m}}\right)\nonumber\\
        =& \frac{\epsilon\frac{d-1}{m}}{\alpha+\beta \frac{d-1}{m}} 
          -\frac{\epsilon\frac{d-1}{m}}{\alpha+\beta \frac{d-1}{m}} 
          +\frac{(\epsilon \frac{d-1}{m})^2}{2(\alpha+\beta \frac{d-1}{m})^2} 
          -\frac{2(\epsilon \frac{d-1}{m})^3}{3!(\alpha+(\beta+\tilde\epsilon) \frac{d-1}{m})^3}\nonumber\\
        \leq& \frac{(\epsilon \frac{d-1}{m})^2}{2(\alpha+\beta \frac{d-1}{m})^2} 
    \end{align*}
%    \sacomment{Steven, can you check the second last equation. In the third term there might be $d-1$ in the bottom, not $d$. Also, in fourth term $\tilde \epsilon$ in numerator? and $d-1$ everywhere not $d$?}\sycomment{Fixed. the numerator of the last term should be $\epsilon$. }
    where in the last equality $\tilde \epsilon$ is some value in between $0$ and $\epsilon$.
    Now let $A_n$ be a sequence of $n$ observations such that the policy
    outputs a price that is closer to $\pi^*_1$. Using Pinsker's inequality
    and \eqref{eq:KL join exponential}, we can bound the difference in probability of observing this sequence in the two markets:
    \begin{align}
        2(\bbP_{\alpha, \beta}(A_n) -\bbP_{\alpha, \beta+\epsilon}(A_n))^2 &\leq \text{KL}(\bbP_{\alpha, \beta+\epsilon}, \bbP_{\alpha, \beta}) \nonumber \\
        |\bbP_{\alpha, \beta}(A_n) - \bbP_{\alpha, \beta+\epsilon}(A_n)| &\leq \sqrt{n\frac{(\epsilon \frac{n}{m})^2}{4\alpha^2}} 
        = \frac{\epsilon n^{3/2}}{2\alpha m} \label{probability KL lower bound percustomer}
    \end{align}
    Plugging in the upper bound on on $n$ from the lemma statement to \eqref{probability KL lower bound percustomer} gives us $|\bbP_{\alpha, \beta}(A_n) -
    \bbP_{\alpha, \beta+\epsilon}(A_n)| < \frac{1}{2}$.

    However, suppose \emph{neither} inequality in the lemma statement holds,
    then by the definition of $A_n$, we have that $|\bbP_{\alpha, \beta}(A_n)
    - \bbP_{\alpha, \beta+\epsilon}(A_n)| \geq |\frac{3}{4} - \frac{1}{4}| = \frac{1}{2}$ for $n^{3/2} \leq
    \frac{\alpha m}{\epsilon}$, which is a contradiction.
\end{proof}
Above lemma holds for any two prices $\pi^*_1 \ne \pi^*_2$. Readers should think of $\pi^*_1, \pi^*_2$ as the optimal prices  for customer $n+1$ in the $(\alpha, \beta)$ and $(\alpha,
\beta+\epsilon)$ market respectively.
To reduce clutter in the following Corollary statement, let $\popt_1 =
    \popt(x, \alpha, \beta, T'), \pi^*_2 = \popt(x, \alpha,
    \beta+\varepsilon, T')$, and let $\popt_M(x)$ be the optimal price for market $M$ i.e., $ \popt_M = \popt_1$ if $M=(\alpha, \beta)$ and $\popt_M=\popt_2$ otherwise.
\begin{corollary}
    Consider market $M$ sampled uniformly at random from $\{(\alpha,
    \beta), (\alpha, \beta+\epsilon)\}$, let $M'$ be the other market. 
    Suppose $n$ is such that $n^{3/2}\leq \frac{\alpha m}{\epsilon }$.
    Let $[(p_1,I_1),\ldots,(p_n,I_n)]$ be a sequence of $n$ observations
    generated from the market $M$, where $p_{i}\in\cF_{i-1}$.
    Fix any pricing algorithm that outputs $\pi$ based on the first $n$
    observations. 
    Then for any $x\in[0,1)$, any $T'$ such that $\pi^*_1\neq \pi^*_2$:
    \removedsy{$\pcurve(x, \alpha,
    \beta, T') \neq \pcurve(x, \alpha, \beta+\epsilon, T')$:
    $$
        \bbP\left(
            \left|\pi- \popt(x, M', T')\right| 
            \leq \left|\pi - \popt(x, M, T')\right| 
        \right) \geq \frac{1}{8}
    $$}
    $$
        \bbP\left(
            \left|\pi- \pi^*_{M'}(x)\right| 
            \leq \left|\pi - \pi^*_{M}(x)\right| 
        \right) \geq \frac{1}{8}
    $$
    where the probability is taken both with respect to the randomness from
    the stochastic arrival times, as well as the uniform sampling of
    the market parameters.
    \label{cor:decision rule limit}
\end{corollary}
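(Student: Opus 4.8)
The plan is to obtain this corollary as an immediate consequence of Lemma~\ref{lem: decision rule limit} by averaging over the uniform random choice of market. First I would fix the adoption level $x$ and remaining time $T'$ under consideration. Since $\pi^*_1 = \popt(x,\alpha,\beta,T')$ and $\pi^*_2 = \popt(x,\alpha,\beta+\epsilon,T')$ are deterministic prices that do not depend on the observed data, and $\pi^*_1 \neq \pi^*_2$ by hypothesis, they qualify as the two fixed prices required in the statement of Lemma~\ref{lem: decision rule limit}. Moreover, the corollary's hypothesis $n^{3/2}\le \alpha m/\epsilon$ is exactly the condition $n \le (\alpha m/\epsilon)^{2/3}$ needed by that lemma, so the lemma applies verbatim with this choice of prices and this value of $n$.

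Next I would expand the target probability using the law of total probability over the uniform market draw. Let $E$ denote the event $\{|\pi - \pi^*_{M'}(x)| \le |\pi - \pi^*_M(x)|\}$. Since $M$ is sampled uniformly from the two markets,
$$
\bbP(E) = \tfrac{1}{2}\,\bbP\!\left(E \,\middle|\, M=(\alpha,\beta)\right) + \tfrac{1}{2}\,\bbP\!\left(E \,\middle|\, M=(\alpha,\beta+\epsilon)\right).
$$
The one step that requires care is matching each conditional event with the corresponding case of the lemma. When $M=(\alpha,\beta)$ we have $M'=(\alpha,\beta+\epsilon)$, so $\pi^*_M=\pi^*_1$ and $\pi^*_{M'}=\pi^*_2$, and the conditional event is exactly $\{|\pi-\pi^*_2|\le|\pi-\pi^*_1|\}$ evaluated under $\bbP_{\alpha,\beta}$; symmetrically, when $M=(\alpha,\beta+\epsilon)$ the conditional event is $\{|\pi-\pi^*_1|\le|\pi-\pi^*_2|\}$ under $\bbP_{\alpha,\beta+\epsilon}$. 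These are precisely the two events whose probabilities appear in the conclusion of Lemma~\ref{lem: decision rule limit}, with the ``wrong'' price being the optimal price of the other market in each case.

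Finally, the lemma guarantees that at least one of these two conditional probabilities is at least $\tfrac{1}{4}$. Since both terms in the average above are nonnegative and each carries weight $\tfrac{1}{2}$, keeping only the term known to be at least $\tfrac{1}{4}$ yields $\bbP(E) \ge \tfrac{1}{2}\cdot\tfrac{1}{4} = \tfrac{1}{8}$, which is the claimed bound. I do not expect any genuine technical obstacle here: once the events are identified correctly the argument is a single line of averaging. The only points demanding attention are the correct correspondence between the ``being wrong'' event in each market and the two cases of Lemma~\ref{lem: decision rule limit}, and the verification that the $n$-regime assumed in the corollary coincides with the one assumed in the lemma; both are immediate from the definitions.
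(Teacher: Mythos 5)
Your proposal is correct and matches the paper's own (very terse) proof: the paper likewise derives the corollary directly from Lemma~\ref{lem: decision rule limit}, attributing the extra factor of $1/2$ to the uniform random choice of market. Your write-up simply spells out the averaging and the case-matching that the paper leaves implicit.
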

\begin{proof}
    This directly follows from Lemma~\ref{lem: decision rule limit}.
    The extra factor of $1/2$ in the probability comes from the
    fact that we randomly picked one market out of two. 
    %Furthermore, since any stochastic algorithm is a mixture of deterministic
    %algorithms, this also holds for random algorithms.
\end{proof}

% \subsection{Proof of Lemma~\ref{lem: optimal price difference deterministic}}
\subsection{Step 3: Lipschitz bound on the optimal pricing policy}
% \optimalpricedifferencedeterministic*
\begin{restatable}{lemma}{optimalpricedifferencedeterministic}
    \label{lem: optimal price difference deterministic}
    \removedsy{For any remaining time $T'\geq 0$, any $x\in[0,1)$, 
    $$\frac{\partial \pcurve(x, \alpha, \beta, T')}{\partial \beta} \leq
             \frac{-e\Xopt_{T'}(x)}{(\alpha+\beta \Xopt_{T'}(x))((\alpha+\beta)T'+e)}
                    +\frac{x}{\alpha+\beta x}$$
    
    In particular, for any $T'\geq\frac{(1+\sqrt{2})e}{\alpha+\beta}$ and $x\leq
    \frac{\alpha^2 e}{4(\alpha+\beta)^3 T'}$
    %$$ \frac{\partial p^*(x, \alpha, \beta, T')}{\partial \beta} \leq \frac{-\alpha e}{4(\alpha+\beta)^3T'} $$
    $$ \frac{\partial p^*(x, \alpha, \beta, T')}{\partial \beta} \leq \frac{-\alpha e}{28(\alpha+\beta)^2} $$}
    For any remaining time $T\geq\frac{(1+\sqrt{2})e}{\alpha+\beta}$ and $x\leq
    \frac{\alpha^2 e}{4(\alpha+\beta)^3 T}$
    $$ \frac{\partial \popt(x, \alpha, \beta, T)}{\partial \beta} \leq \frac{-\alpha e}{4(\alpha+\beta)^3T} $$
    % $$ \frac{\partial p^*(x, \alpha, \beta, T')}{\partial \beta} \leq \frac{-\alpha e}{28(\alpha+\beta)^2} $$
\end{restatable}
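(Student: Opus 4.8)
The plan is to compute $\frac{\partial \popt}{\partial\beta}$ in closed form and then bound the two resulting pieces separately, using the hypothesis $x\le\frac{\alpha^2e}{4(\alpha+\beta)^3T}$ for one and $T\ge\frac{(1+\sqrt2)e}{\alpha+\beta}$ for the other. First I would start from the explicit formula \eqref{eq:opt_p_model},
$$\popt(x,\alpha,\beta,T)=1+\log(\alpha+\beta x)+\log(1-x)-\log(\alpha+\beta X)-\log(1-X),$$
where I abbreviate $X:=\Xopt_T(x)$, keeping in mind that $X$ depends on $\beta$ while $x$ is held fixed. Differentiating and collecting the terms carrying $X':=\frac{\partial X}{\partial\beta}$ gives
$$\frac{\partial\popt}{\partial\beta}=\frac{x}{\alpha+\beta x}-\frac{X}{\alpha+\beta X}+X'\cdot\frac{\alpha-\beta+2\beta X}{(1-X)(\alpha+\beta X)}.$$
To evaluate $X'$ I would implicitly differentiate the defining relation \eqref{eq: general XT_model_quadratic_sol1}, $X-x=\frac{T}{e}(\alpha+\beta X)(1-X)$, which yields $X'=\frac{TX(1-X)}{D}$ with $D:=e+(\alpha-\beta)T+2\beta TX$ (the same denominator appearing in \eqref{dXTdx0}). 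Substituting, and using the identity $\alpha-\beta+2\beta X=(D-e)/T$ that is immediate from the definition of $D$, the $\frac{X}{\alpha+\beta X}$ terms cancel and the whole expression collapses to the clean form
$$\frac{\partial\popt}{\partial\beta}=\frac{x}{\alpha+\beta x}-\frac{eX}{D(\alpha+\beta X)}.$$

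Next I would record two elementary facts: $D\le(\alpha+\beta)T+e$, since $(\alpha+\beta)T+e-D=2\beta T(1-X)\ge0$; and $\alpha+\beta X\le\alpha+\beta$ since $X\le1$. Together these give the intermediate inequality $\frac{\partial\popt}{\partial\beta}\le\frac{x}{\alpha+\beta x}-\frac{eX}{((\alpha+\beta)T+e)(\alpha+\beta X)}$, valid for all $x$. I would then bound each piece. For the positive piece, $\frac{x}{\alpha+\beta x}\le\frac{x}{\alpha}\le\frac{\alpha e}{4(\alpha+\beta)^3T}$, which is exactly the hypothesis on $x$. For the negative piece I lower-bound $X$ by its value with no initial adopters and no imitation: by monotonicity of $\Xopt_T(\cdot)$ (from $\frac{\partial\Xopt_T}{\partial x}=\frac{e}{D}\ge0$ in \eqref{dXTdx0}) together with Corollary~\ref{cor: beta improves final XT}, $X=\Xopt_T(x)\ge\Xopt_T(0)\ge\frac{\alpha T}{\alpha T+e}$.

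Writing $u:=(\alpha+\beta)T$ and $s:=\frac{\alpha}{\alpha+\beta}\in(0,1]$ (so $\alpha T=su$), the negative piece is at least $\frac{esu}{(su+e)(u+e)(\alpha+\beta)}$, and showing this dominates $\frac{\alpha e}{2(\alpha+\beta)^3T}=\frac{se}{2(\alpha+\beta)u}$ reduces to the single scalar inequality $2u^2\ge(su+e)(u+e)$. As the right side is increasing in $s$, it suffices to check $s=1$, i.e. $2u^2\ge(u+e)^2$, which is equivalent to $(\sqrt2-1)u\ge e$, that is precisely $u\ge(1+\sqrt2)e$ — exactly the assumption $T\ge\frac{(1+\sqrt2)e}{\alpha+\beta}$. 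Subtracting the two bounds then yields $\frac{\partial\popt}{\partial\beta}\le\frac{\alpha e}{4(\alpha+\beta)^3T}-\frac{\alpha e}{2(\alpha+\beta)^3T}=-\frac{\alpha e}{4(\alpha+\beta)^3T}$, as claimed.

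I expect the main obstacle to be keeping the last step non-lossy. The threshold $(1+\sqrt2)e$ is exactly where $2u^2=(u+e)^2$, so cruder estimates (for instance rewriting $\frac{X}{\alpha+\beta X}=\frac{TX(1-X)}{e(X-x)}$ and then dropping $\frac{X}{X-x}\ge1$, or replacing $X$ by the weaker bound $\frac{u-e}{u+e}$) overshoot the available slack and fail to reach the constant $\tfrac14$. The decisive choice is to use precisely $X\ge\frac{\alpha T}{\alpha T+e}$ from Corollary~\ref{cor: beta improves final XT}, which is tight in the limiting regime $\beta\to0$, $x\to\frac{\alpha^2e}{4(\alpha+\beta)^3T}$, $T\to\frac{(1+\sqrt2)e}{\alpha+\beta}$ where the stated inequality becomes nearly an equality; every estimate above holds with equality there, confirming both that the constant cannot be improved by this argument and that no slack is being wasted.
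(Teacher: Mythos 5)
Your proposal is correct and follows essentially the same route as the paper's proof: the same implicit differentiation of \eqref{eq: general XT_model_quadratic_sol1}, the same cancellation to the two-term form $\frac{x}{\alpha+\beta x}-\frac{eX}{D(\alpha+\beta X)}$, the same lower bound $X\ge\frac{\alpha T}{\alpha T+e}$ via Corollary~\ref{cor: beta improves final XT}, and the same division of labor between the two hypotheses. The only difference is cosmetic — the paper substitutes the closed-form square root for $D$ and bounds it by $\sqrt{2}(\alpha+\beta)T$, while you keep $D\le(\alpha+\beta)T+e$ and fold the slack into the scalar inequality $2u^2\ge(su+e)(u+e)$ — and both yield the identical constant.
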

\begin{proof}
        Differentiating both sides of \eqref{eq: general XT_model_quadratic_sol1} with
        respect to $\beta$ gives us
        $$\frac{1}{1-\Xopt_T(x)} \frac{\partial \Xopt_T(x)}{\partial \beta} = \frac{T\Xopt_T(x)}{2\beta T\Xopt_T(x) + (\alpha-\beta)T +e}$$
        We omit the initial state argument $x$ and denote $\Xopt_T = \Xopt_T(x)$ in the following proof.
        \begin{align*}
            \frac{\partial \popt(x, \alpha, \beta, T)}{\partial \beta} 
            & = \frac{1}{1-\Xopt_T}\frac{\partial \Xopt_T}{\partial \beta}
                    -\frac{\Xopt_T}{\alpha+\beta \Xopt_T} 
                    -\frac{ \beta \frac{\partial \Xopt_T}{\partial \beta}}{\alpha+\beta \Xopt_T}
                    +\frac{x}{\alpha+\beta x}\\ 
            & = \frac{T\Xopt_T}{2\beta T\Xopt_T + (\alpha-\beta)T +e} - \frac{\Xopt_T}{\alpha+\beta \Xopt_T} 
                    -\frac{\beta}{\alpha+\beta \Xopt_T}\frac{(1-\Xopt_T)T\Xopt_T}{2\beta T\Xopt_T + (\alpha-\beta)T +e}
                    +\frac{x}{\alpha+\beta x}\\ 
            & = \frac{T\Xopt_T(\alpha+\beta \Xopt_T) - \Xopt_T(2\beta T\Xopt_T + (\alpha-\beta)T +e) - (1-\Xopt_T)\beta T\Xopt_T}{(\alpha+\beta \Xopt_T)(2\beta T\Xopt_T + (\alpha-\beta)T +e)}
                    +\frac{x}{\alpha+\beta x}\\ 
            & = \frac{-e\Xopt_T}{(\alpha+\beta \Xopt_T)(2\beta T\Xopt_T + (\alpha-\beta)T +e)}
                    +\frac{x}{\alpha+\beta x}\\ 
            & = \frac{-e\Xopt_T}{(\alpha+\beta \Xopt_T)\sqrt{(\alpha+\beta)^2T^2 + 2e(\alpha-\beta)T + e^2 + 4e\beta x T}}
                    +\frac{x}{\alpha+\beta x}\\ 
            &\leq \frac{-e\Xopt_T}{(\alpha+\beta \Xopt_T)((\alpha+\beta)T+e)}
                    +\frac{x}{\alpha+\beta x}
        \end{align*}
        We replaced $\Xopt_T$ with \eqref{eq: general XT_model_quadratic_sol2} in the last equality. 
        The last inequality follows from the fact that\\ $\sqrt{(\alpha+\beta)^2T^2 +
        2e(\alpha-\beta)T + e^2 + 4e\beta x T} \leq (\alpha+\beta)T+e$.
        In particular, if $T\geq\frac{(1+\sqrt{2})e}{\alpha+\beta}$, then \\
        $\sqrt{(\alpha+\beta)^2T^2 + 2e(\alpha-\beta)T + e^2 + 4e\beta x T}
        \leq \sqrt{2} (\alpha+\beta)T$. Also 
        $$\Xopt_T(x, \alpha, \beta) \geq\Xopt_T(0, \alpha, \beta)\geq 
        \Xopt_T(0, \alpha, 0)= \frac{\alpha T}{\alpha T+e} \geq
        \frac{1+\sqrt{2}}{2+\sqrt{2}}\frac{\alpha}{\alpha+\beta}$$
        The first inequality is easy to see from \eqref{eq: general XT_model_quadratic_sol2},  the second follows from Corollary~\ref{cor: beta improves final XT}, and the last inequality follows from the assumption on $T$. So the above can be simplified to 

        $$
            \frac{\partial \popt(x, \alpha, \beta, T)}{\partial \beta} 
            \leq \frac{-e\frac{1+\sqrt{2}}{2+\sqrt{2}}\frac{\alpha }{\alpha+\beta }}{(\alpha+\beta)^2T\sqrt{2}} + \frac{x}{\alpha+\beta x} = \frac{-\alpha e}{2(\alpha+\beta)^3T} + \frac{x}{\alpha+\beta x}
        $$
        Then, it easy to verify that for $x\leq \frac{\alpha^2 e}{4(\alpha+\beta)^3 T}$, 
        $
            \frac{\partial \popt(x, \alpha, \beta, T)}{\partial \beta} 
            \leq \frac{-\alpha e}{4(\alpha+\beta)^3T} 
        $
\end{proof}

\subsection{Step 4: Putting it all together for proof of Theorem~\ref{thm:mainlowerbound}}

\removedsy{
}

We are now ready to prove our main lower bound result Theorem~\ref{thm:mainlowerbound}. In the following proof,
as defined earlier in Step 1, $p_x$ denotes the induced price trajectory from the algorithm's offered prices, and $t^{det}_x$ is the time when adoption level hits $x$ in the deterministic Bass model on following $p_x$ (both were defined in  detail in the paragraphs before Lemma~\ref{lem: detarrival vs stocharrival}).
%\thmmainlowerbound*
% Given a sequence of $n$ prices $[p_1,\ldots, p_n]$, let $\detreg(p_1, \dots,
% p_{n})$ be defined similarly to \eqref{eq: definition of
% deterministic regret} where $p_x = p_d \forall x\in[\frac{d-1}{m},
% \frac{d}{m})$.
\begin{proof}[Proof of Theorem \ref{thm:mainlowerbound}]
    Let $T=\frac{2(1+\sqrt{2})}{\alpha+\beta}$, $\epsilon=\frac{(\alpha+\beta)^2}{\alpha}$, $n=\left\lfloor \left(\frac{\alpha}{\alpha+\beta}\right)^{4/3} m^{2/3}\right\rfloor$.
    %and the market parameters be chosen in the same fashion as described in Lemma~\ref{lem: det portion of regret lowerbound}. 
    Randomly draw the Bass model parameters from $\{(\alpha, \beta), (\alpha, \beta+\epsilon)\}$ with equal probabilities. We denote the chosen market as $M$, and the other market $M'$. In the calculations that follow, we use $\bbE_M$ to indicate that the expectation is taken with respect to both the random choice of Bass model parameters as well as the randomness in the stochastic Bass model. 
    
    To reduce clutter, let $\popt_1(x) =
    \popt(x, \alpha, \beta, T-t^{det}_x), \popt_2(x) = \popt(x, \alpha,
    \beta+\varepsilon, T-t^{det}_x)$, and let $\pi^*_M(x)$ be the optimal price i.e., $ \pi^*_M(x) = \pi^*_1(x)$ if $M=(\alpha, \beta)$ and $\pi^*_M(x)=\pi^*_2(x)$ otherwise.

    \begin{align*}
    &\bbE_{M}[\preg(T)]\\ 
    \text{(Lemma~\ref{lem: lower bound pseudo regret price difference})}\geq &\bbE_{M}\left[m\int_{0}^{n/m}
    \min\left(\frac{1}{4}(\pi^*_M(x)-p_x)^2,
    \frac{1}{10}\right) dx\right]  -O\left(\frac{(\alpha+\beta)^{1/3}Te^C}{\alpha^{1/3}}m^{1/3}\sqrt{\log(m)}\right)\\ 
    \geq &  m\int_{0}^{n/m} \bbE_{M}\left[
        \min\left(\frac{1}{4}\left(\frac{\pi^*_1(x)-\pi^*_2(x)}{2} \right)^2,
        \frac{1}{10}\right)\1\left( \left|p_x- \pi^*_{M'}(x)\right| \leq
        \left|p_x - \pi^*_M(x)\right| \right)\right] dx \\ 
        &- O\left(\frac{m^{1/3}e^C}{(\alpha+\beta)^{2/3}\alpha^{1/3}}\sqrt{\log(m)}\right)\nonumber\\
        \text{(Lemma~\ref{lem: optimal price difference deterministic})}\geq
        & m\int_{0}^{n/m} \bbE_{M}\left[
        \min\left(\left(\frac{\alpha e\varepsilon}{16(\alpha+\beta)^3(T-t^{det}_x)}\right)^2,
        \frac{1}{10}\right)\1\left( \left|p_x- \pi^*_{M'}(x)\right| \leq
        \left|p_x - \pi^*_M(x)\right| \right)\right] dx\nonumber\\
        &- O\left(\frac{m^{1/3}e^C}{(\alpha+\beta)^{2/3}\alpha^{1/3}}\sqrt{\log(m)}\right)\nonumber\\
        \text{($T\geq T-t^{det}_x$)}\geq& m\int_{0}^{n/m} 
        \min\left(\left(\frac{e}{32(1+\sqrt{2})}\right)^2,
        \frac{1}{10}\right)\bbE_{M}\left[\1\left( \left|p_x- \pi^*_{M'}(x)\right| \leq
        \left|p_x - \pi^*_M(x)\right| \right)\right] dx\nonumber\\
        &- O\left(\frac{m^{1/3}e^C}{(\alpha+\beta)^{2/3}\alpha^{1/3}}\sqrt{\log(m)}\right)\nonumber\\
        \text{(Corollary~\ref{cor:decision rule limit})}\geq &
        \frac{n}{8} \min\left(\left(\frac{e}{32(1+\sqrt{2})}\right)^2, \frac{1}{10}\right)- O\left(\frac{m^{1/3}e^C}{(\alpha+\beta)^{2/3}\alpha^{1/3}}\sqrt{\log(m)}\right)\nonumber\\
        = &
        \Omega\left( \left(\frac{\alpha}{\alpha+\beta}\right)^{4/3}m^{2/3}\right) - O\left(\frac{m^{1/3}e^C}{(\alpha+\beta)^{2/3}\alpha^{1/3}}\sqrt{\log(m)}\right)\nonumber\\
        = &
        \Omega\left( \left(\frac{\alpha}{\alpha+\beta}\right)^{4/3}m^{2/3}\right)\nonumber
    \end{align*}
    
    Using the fact that the maximum
    expected regret between the two markets must be at least the average, we
    have that for at least one of $(\alpha, \beta)$ and $(\alpha,
    \beta+\epsilon)$,
    \begin{align*}
        \bbE[\preg(T)] \geq
        \Omega\left( \left(\frac{\alpha}{\alpha+\beta}\right)^{4/3}m^{2/3}\right)
        \nonumber%\label{eq: reg lowerbound wrt gamma T eps}
    \end{align*}
    
    Finally, we still need to check the assumptions of Lemma~\ref{lem: lower bound pseudo regret price difference}, Lemma~\ref{lem: optimal price difference deterministic} and Corollary~\ref{cor:decision rule limit} are satisfied for large enough $m$.
    
    To apply Lemma~\ref{lem: lower bound pseudo regret price difference}, we need $\Xopt_T\geq \left(\frac{\alpha}{\alpha+\beta}\right)^{4/3} m^{-1/3}=\frac{n}{m}$. Using the expanded notation $\Xopt(x, \alpha, \beta)$ introduced in Appendix~\ref{sec: det opt price properties},
    we know from Corollary~\ref{cor: beta improves final XT} that $\Xopt_T(0, \alpha, \beta) \geq \Xopt_T(0, \alpha, 0) = \frac{\alpha T}{\alpha T+e}$. It is easy to verify that for large enough $m$, specifically when $m^{1/3}\geq \left(\frac{\alpha}{\alpha+\beta}\right)^{4/3}\left(2+\frac{e(\alpha+\beta)}{\alpha(1+\sqrt{2})}\right)$ \newshipra{and $T=2(1+\sqrt{2})e/(\alpha+\beta)$}, $\frac{n}{m}\leq \frac{\alpha T}{\alpha T+e} \leq \Xopt_T(0,\alpha, \beta)=\Xopt_T$. \removedshipra{If this condition on $m$ does not hold, then the regret is at most $O(m p_{max}) = O(1)$, which does not change the asymptotic lower bound with respect to $m$.}
    
    To apply Lemma~\ref{lem: optimal price difference deterministic} we need the conditions $T'_x\coloneqq T-t^{det}_x \geq \frac{\newshipra{(1+\sqrt{2})e}}{\alpha+\beta}$ and $x\leq \frac{\alpha^2e}{4(\alpha+\beta)^3T'_x}$ for all $x\leq \frac{n}{m}$. 
    For any fixed $\alpha, \beta$, for large enough $m$, specifically for for $m^{1/3} \geq \frac{\alpha^{1/3}}{(\alpha+\beta)^{4/3}}(\alpha+e^C)$, where $C$ is a function of $\alpha,\beta$ as defined in Assumption~\ref{assum:price upper bound}, we have that
    $t^{det}_{n/m} \leq \frac{ n}{e^{-p_{max}}\alpha(m-n)} = \frac{e^{p_{max}}n}{\alpha(m-n)}\leq \frac{(1+\sqrt{2})e}{\alpha+\beta}$. The last inequality followed from simple algebraic calculations using the assumption on $m$ and Assumption~\ref{assum:price upper bound}. Therefore, $T-t^{det}_{x} \geq \frac{(1+\sqrt{2})e}{\alpha+\beta}$ for all $x\leq \frac{n}{m}$.  This satisfies the first condition. Furthermore, for $m^{1/3}\geq 8(1+\sqrt{2})\left(\frac{\alpha+\beta}{\alpha}\right)^{2/3}$, $x\leq n/m = \left(\frac{\alpha}{\alpha+\beta}\right)^{4/3} m^{-1/3} \leq  \frac{\alpha^2}{8(1+\sqrt{2})(\alpha+\beta)^2} = \frac{\alpha^2e}{4(\alpha+\beta)^3T} \leq \frac{\alpha^2e}{4(\alpha+\beta)^3T'_x}$. This satisfies the second condition. %Again, if $m$ does not satisfy this condition, then the overall regret is trivially bounded by $0$ which does not change the asymptotic result.
    
    Finally, to apply Corollary~\ref{cor:decision rule limit} we needed the condition $n^{3/2}\leq \frac{\alpha m}{\epsilon}$. By plugging in our choice of $m, \epsilon$, it is easy to verify that this condition is satisfied. 

\end{proof}

\section{Auxiliary Results}
\label{sec: auxiliary results}
\newcommand{\pub}{\bar p}
\begin{lemma}
\label{lem: const price ub and large T makes problem trivial}
Let $\pub$ be a constant upper bound on the prices that the seller can offer: $p_t\leq \pub$. Then there exists constants $C_0, C_1$ independent of $m$ such that for $T\geq C_0\log(m)+C_1$, there is a trivial policy that achieves $O\left(\log(\frac{1}{\delta})\right)$ regret.
\end{lemma}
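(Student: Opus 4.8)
The plan is to analyze the \emph{constant-price} policy that posts the maximal allowed price $\pub$ at every time $t\in[0,T]$, and to show that once $T$ grows logarithmically in $m$ this policy captures essentially the entire market. First I would reduce the regret to the number of \emph{uncaptured} customers. Since all admissible prices are bounded by $\pub$ and there are at most $m$ customers, the optimal benchmark satisfies $\stochV(T)\le \pub\, m$, while the constant-price policy earns exactly $\pub\, d_T$. Hence
\begin{equation*}
\reg(T)=\stochV(T)-\rev(T)\le \pub\,(m-d_T),
\end{equation*}
and since $\pub$ is constant in $m$ it suffices to show that, with probability at least $1-\delta$, the number of uncaptured customers $m-d_T$ is $O(\log(1/\delta))$.

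Next I would set up the arrival dynamics. Under the constant price $\pub$, Assumption~\ref{assum: const price in between arrivals} holds automatically, so when $j$ customers still remain (i.e.\ $d_t=m-j$ and $X_t=1-j/m$) the next arrival occurs after an exponential waiting time with rate
\begin{equation*}
\lambda_j=m e^{-\pub}\Big(\alpha+\beta\big(1-\tfrac{j}{m}\big)\Big)\tfrac{j}{m}=e^{-\pub}\big(\alpha+\beta-\beta \tfrac{j}{m}\big)\,j\ \ge\ c\,j,\qquad c:=e^{-\pub}\alpha .
\end{equation*}
Writing $T_K$ for the first time at which only $K$ customers remain, monotonicity of the remaining count gives the exact identity $\{\,m-d_T> K\,\}=\{\,T_K> T\,\}$, where $T_K=\sum_{j=K+1}^{m}I_j$ with $I_j\sim\mathrm{Exp}(\lambda_j)$ independent. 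Because $\lambda_j\ge cj$, each $I_j$ is stochastically dominated by $\tilde I_j\sim\mathrm{Exp}(cj)$, so $T_K\preceq \tilde T_K:=\sum_{j=K+1}^m \tilde I_j$ and $\bbP(T_K>T)\le \bbP(\tilde T_K>T)$.

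The key step, and the one I expect to be the main obstacle, is controlling the tail of $\tilde T_K$: it is a sum of independent exponentials whose rates $cj$ span the whole range from $c(K+1)$ to $cm$, so a crude second-moment (Chebyshev) bound only forces $K$ to be $\mathrm{poly}(1/\delta)$, which is far too weak. The clean resolution is to recognize $\tilde T_K$ as the time for a pure-death process in which each of $m$ items dies independently at rate $c$ to drop from $m$ to $K$ survivors; equivalently $\tilde T_K=D_{(m-K)}$, the $(m-K)$-th order statistic of i.i.d.\ $\mathrm{Exp}(c)$ variables. This converts the awkward waiting-time tail into a binomial survivor count: letting $N\sim\mathrm{Bin}(m,e^{-cT})$ denote the number of items still alive at time $T$,
\begin{equation*}
\bbP(\tilde T_K>T)=\bbP\big(N> K\big).
\end{equation*}

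Finally I would choose the constants. Taking $C_0:=1/c=e^{\pub}/\alpha$ and any constant $C_1\ge 0$, for $T\ge C_0\log m+C_1$ one gets $\bbE[N]=m e^{-cT}\le e^{-cC_1}=O(1)$, and the standard multiplicative Chernoff bound for the binomial upper tail gives $\bbP(N> K)\le (e\,\bbE[N]/K)^{K}\le 2^{-K}$ once $K\ge 2e\,\bbE[N]$. Hence $K=O(\log(1/\delta))$ makes this probability at most $\delta$, which combined with $\reg(T)\le \pub\,(m-d_T)$ yields the claimed $O(\log(1/\delta))$ regret bound with probability $1-\delta$. The only parameter-dependent (but $m$-independent) quantities entering $C_0,C_1$ are $\pub$ and $\alpha$, as required.
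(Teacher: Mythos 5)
Your proof is correct, and it takes a genuinely different (and in one respect stronger) route than the paper's. The paper also analyzes the constant-price policy and reduces regret to $\pub\,(m-d_T)$, but it then partitions the remaining customers into $O(\log m)$ geometrically shrinking segments, applies its sub-exponential concentration bound (Lemma~\ref{lem: general exponential arrival time martingale concentration bound}) to the sum of inter-arrival times within each segment, and union-bounds over segments; this yields the time bound $C_0\log m + C_1$ with probability $1-\delta\log_2(m)$. You instead stochastically dominate each inter-arrival time $\mathrm{Exp}(\lambda_j)$ by $\mathrm{Exp}(cj)$ with $c=e^{-\pub}\alpha$, and then invoke the R\'enyi representation of exponential order statistics to identify the dominated waiting time $\tilde T_K$ with the $(m-K)$-th order statistic of $m$ i.i.d.\ $\mathrm{Exp}(c)$ variables, so that $\{\tilde T_K>T\}$ becomes the binomial event $\{N>K\}$ with $N\sim\mathrm{Bin}(m,e^{-cT})$; a single Chernoff bound then finishes. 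What your approach buys: it avoids the union bound entirely (failure probability $\delta$ rather than $\delta\log_2 m$), it sidesteps the technical side condition $\epsilon_i\le 2(m_i-m_{i-1})/\underline{\lambda}_i$ that the paper must verify for each segment, and the constants $C_0=e^{\pub}/\alpha$, $C_1$ arbitrary are transparent. What the paper's approach buys is uniformity of technique — it reuses the same concentration lemma deployed throughout the upper-bound analysis rather than introducing a new probabilistic identity. All the individual steps you use (independence of the inter-arrival exponentials under a constant price, termwise stochastic domination of independent sums, the order-statistics identity, and the multiplicative Chernoff bound with $\bbE[N]=me^{-cT}\le e^{-cC_1}=O(1)$) check out.
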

\begin{proof}
Consider the trivial pricing policy where the seller sets the price to the highest possible value $\pub$ for the entire planning horizon. The rate of arrival of customer $d$ is given by $\lambda(\pub, \frac{d-1}{m}) = e^{-\pub} (\alpha+\beta\frac{d-1}{m})(m-d+1) \geq e^{-\pub}\alpha(m-d+1)$

We now divide the market of $m$ customers into a sequence of segments with geometrically decreasing lengths. Let $m_0 = 0, m_1 = \lfloor m/2\rfloor, m_2 = m_1 + \lfloor m/4\rfloor, \ldots, m_i = m_{i-1} + \lfloor m/2^{i}\rfloor$. Let $K=\lfloor \log_2(m) - \log_2\log(\frac{1}{\delta}) -2\rfloor$ such that $m_K = m-O(\log(\frac{1}{\delta}))$.
We call customers $d$ for $m_{i-1}< d \leq m_i$ the customers in segment $i$.

Since the price is constant, we use the following short hand notation $\lambda_d\coloneqq \lambda(\pub, \frac{d-1}{m})$. Note that by construction, $m-m_i\geq m/2^i$. So for $m_{i-1}< d\leq m_{i}$, $\lambda_d\geq \underline{\lambda}_i \coloneqq e^{-\pub}\alpha \frac{m}{2^{i}}$.

Let $I_d$ be the stochastic inter-arrival time between customer $d-1$ and $d$, then 
using Lemma~\ref{lem: general exponential arrival time martingale concentration bound}, we can obtain the following bound on the time it takes for all customers in segment $i$ to arrive:
\begin{align*}
    \bbP\left(\sum\limits_{d=m_{i-1}+1}^{m_i} I_d-\frac{1}{\lambda_d} \geq \epsilon_i\right) \leq exp\left(\frac{-\epsilon_i^2 \underline{\lambda}_i^2}{8 (m_i-m_{i-1})}\right)
\end{align*}
Setting this to $\delta$ and solving for $\epsilon_i$, one can verify that with probability $1-\delta$, $$\sum\limits_{d=m_{i-1}+1}^{m_i} I_d-\frac{1}{\lambda_d} \geq \epsilon_i \coloneqq \frac{4e^{\pub}}{\alpha} \sqrt{\log(\frac{1}{\delta}) \frac{2^{i}}{m}}$$
Note that we needed $\epsilon_i \leq \frac{2(m_i-m_{i-1})}{\underline{\lambda}_i}$ to apply Lemma~\ref{lem: general exponential arrival time martingale concentration bound}. This condition is satisfied for $i=1,\ldots, K$ because:
\begin{align*}
    \epsilon_i \leq \frac{2(m_i-m_{i-1})}{\underline{\lambda}_i} \iff & \sqrt{\log(\frac{1}{\delta}) \frac{2^i}{m}} \leq \frac{1}{2 } \iff i\leq 2\log_2\left(\frac{1}{2}\sqrt{\frac{m}{\log(1/\delta)}}\right)=\log(m)-\log_2\log(\frac{1}{\delta}) -2
\end{align*}
Applying a union bound to this result for all segments, we have that with probability at least $1-\delta \log_2(m)$, we have that 
\begin{align*}
    \sum\limits_{d=1}^{m_K} I_d \leq &\sum\limits_{d=1}^{m_K}\frac{1}{\lambda_d} + \sum_{i=1}^{K} \epsilon_i\\ 
    \leq& \sum_{i=1}^{K}\frac{m_i-m_{i-1}}{\underline{\lambda}_i} +  \sum_{i=1}^{K}\frac{4e^{\pub}}{\alpha} \sqrt{\log(\frac{1}{\delta}) \frac{2^{i}}{m}}\\ 
    = &2K\frac{e^{\pub}}{\alpha} + \frac{4e^{\pub}}{\alpha} \sqrt{\log(\frac{1}{\delta}) \frac{1}{m}}\frac{\sqrt{2}(2^{K/2}-1)}{\sqrt{2}-1}\\ 
    \leq & \frac{e^{\pub}}{\alpha}\left(2\log_2(m) + \frac{4\sqrt{2}}{\sqrt{2}-1}\sqrt{\log(\frac{1}{\delta})} \right)\\
    =& C_0 \log(m) + C_1
\end{align*}

This means that if $T\geq C_0\log(m) + C_1$, then with probability $1-\delta \log_2(m)$, the realized revenue is at least $m_K\pub$. Since the maximum possible revenue is $m\pub$, the regret is at most $\pub(m-m_K) = O(\log(\frac{1}{\delta}))$.
\end{proof}

\removedsy{
\section{A discussion on paper \cite{zhang2020data}}
\label{sec:michiganpaper}
The page numbers below refer to the page numbers from the latest version of this paper on SSRN, last updated on March 24, 2020. To consolidate the notations, in the discussion here we have replaced the symbols $p_0, q_0, m_0$ with the notations used in this paper, which are $\alpha, \beta, m$.

\subsection{Potential inconsistencies in proof of Lemma 3 in \cite{zhang2020data}}

At the bottom of page ec13, the authors wrote that 
$$\bbE\left.\left[\frac{\partial^2 \ln \frac{f_i(\alpha, \beta, m)}{f_i(\alpha+\delta, \beta, m)}}{\partial \delta^2} \right\vert \cF_{t_{k-1}}\right] \geq \frac{1}{C_I} \coloneqq \frac{1}{m^2(\alpha+\beta)^2}$$
This is probably a typo, since $$\bbE\left.\left[-\frac{\partial^2 \ln f_i(\alpha+\delta, \beta, m)}{\partial \delta^2} \right\vert \cF_{t_{k-1}}\right] = \frac{1}{(\alpha+\beta \frac{i}{m})^2}\geq \frac{1}{C_I} \coloneqq \frac{1}{(\alpha+\beta)^2}$$
Note that if they indeed intended to set $C_I = m^2(\alpha+\beta)^2$, then the conclusion of Lemma 3 is immediately not correct, since at the very last step of their proof, $\frac{8\sqrt{R}C_I}{k+1}$ is not $O(\frac{1}{k+1})$, but rather $O(\frac{m^2}{k+1})$. 
Therefore, let us assume this is a typo, and indeed they intended $C_I:=(\alpha+\beta)^2$. In that case, the result stated in this lemma for estimating $\alpha$ holds. 

However, a more serious issue concerns the claim made in this lemma that a similar result can be obtained for estimating $\beta$. We believe this is not true, since 
$$\bbE\left.\left[-\frac{\partial^2 \ln f_i(\alpha, \beta+\delta, m)}{\partial \delta^2} \right\vert \cF_{t_{k-1}}\right] = \frac{(\frac{i}{m})^2}{(\alpha+\beta \frac{i}{m})^2}\leq \frac{i^2}{\alpha^2m^2}$$
So by following the same steps, the best bound one can hope to achieve at the end is for $C_I = \frac{\alpha^2m^2}{k^2}$, i.e., $\frac{8\sqrt{R}C_I}{k+1} = O(\frac{m^2}{k^3})$, which is much worse than the $O(\frac{1}{k})$ bound claimed by this lemma.

\subsection{Potential inconsistencies in the proofs of Theorem 5 and Theorem 6 in \cite{zhang2020data}}

A first issue in the proofs of these theorems is that they both use Lemma~3, whose proof, as we discussed above, has several inconsistencies. However, even if we ignore the problem with Lemma~3, there are still several potential issues with the proof of Theorem~5 and Theorem~6.

We believe that the last inequality at the bottom of page ec30 does not hold. From the calculations shown on the top of page ec31, $\bbE\left[ \frac{1}{D_t}\right] \geq \frac{1}{\bar x^u m (\alpha+\beta)t} = \Theta(\frac{1}{m t})$. However the authors have dropped the dependence on $m$. 
This omission of dependence on $m$ seems to introduce at least two problems when this result is used in the proof of Theorem 5. 

First, the application of Proposition 4 requires the condition that $\bbE[\pi_t^{MBP-MLE} - \pi_t^*] = \Omega(te^{-t})$, but as pointed above, the calculations presented in the paper only shows that $\bbE[\pi_t^{MBP-MLE} - \pi_t^*] =\Omega(\frac{1}{m t})$. Clearly, for any $\omega(m^{-0.5}) \leq t\leq O(\log(m))$, the required condition is not satisfied. 

Second, the authors later claim that $O(D_t^{-1})$ dominates $O(\frac{t^2}{e^{2t}} + \frac{1}{m})$ on line 10 of page ec31. Again, since the authors only showed that $D_t^{-1} = \Theta(\frac{1}{m t})$, $D_t^{-1}$ does not in fact dominate the term $\frac{t^2}{e^{2t}}$ for any $t$ in range $\omega(m^{-1/3})\leq t\leq \frac{1}{2}\log(m)+O(1)$. 
Note that this may not merely be a technical hiccup, because $\bbE\left[\int_1^{\frac{1}{2}\log(m_0)} \frac{D_t}{t+t_0} \frac{t^2}{e^{2t}}\right] \geq \Omega(\bbE[D_1])\geq \Omega(\bar x^l \alpha m) = \Omega(m)$. Therefore, on following the current arguments in the proof of Theorem 5, the best regret bound one can obtain is $O(m)$ unless $T\le \frac{1}{\sqrt{m}}$.
%which means that the regret bound obtained using the argument in Theorem 5 should be $O(m)$. 

Now, let's consider the case when $T\le 1/\sqrt{m}$. %Since the proof steps need to hold for all $0\leq t\leq T$, the steps presented in the proof of Theorem 5 only works when $T \leq O(m_0^{-0.5})$. 
However in this case, when the corrected version of Lemma~3 result (presented above) is used ($O(\frac{m^2}{D_t^3})$ instead of $O(D_t^{-1})$), the regret bound is vacuous: 
\begin{align*}
    \bbE\left[ \int_0^T \frac{D_t}{t+t_0}\frac{m^2}{D_t^3} dt\right] \geq& \bbE\left[ \int_0^T \frac{m^2}{t+t_0}\frac{1}{D_t^2} dt\right]\\ 
    \text{($m \geq D_T$, and $D_t$ is monotone)}\geq &\bbE\left[ \int_0^T \frac{m}{t+t_0}\frac{1}{D_T} dt\right]\\ 
    \text{(Jensen's)}\geq &\frac{m}{\bbE[D_T]}\int_0^T \frac{1}{t+t_0}dt\\ 
    \text{(By calculations on top of page ec31)} \geq &\frac{m}{\bar x^u (\alpha+\beta)m T}\log(1+\frac{T}{t_0})
    %\\\geq &\Omega(\sqrt{m}\log(m))
\end{align*}
If $T\geq t_0$, then the last expression can be bounded by $\Omega(\sqrt{m})$ using  $T=O(\frac{1}{\sqrt{m}})$. This regret bound is vacuous because the optimal expected revenue is at most $\pub (\alpha+\beta)mT = O(\sqrt{m})$, where $\bar p$ is a constant upper bound on the prices.\footnote{On page 11, the authors assume that the effort function $x(p)$ can be lower bounded by constant $\bar x^l$. In the case of $x(p)=e^{-p}$, this implies a constant upper bound on $p$.}

If $T<t_0$, where $t_0=1/m$ is as defined in \cite{zhang2020data},  then the total number of adoption under any pricing policy is upper bounded by a constant: $\bbE[D_T]\leq \bar x^u (\alpha+\beta)mT\leq O(1)$, and therefore so is the total revenue. Therefore, an $O(1)$ bound on regret is trivial in that case. %regret analysis is not the appropriate objective to study under this problem regime. 
% This however is not an interesting regime of $T$ to study, because the optimal expected revenue one can achieve in such settings is already sublinear. 

}
% \begin{proof}
% \emph{Sketch.}

% We shorten the notation for $D^{\pi^{MDP-MLE}}_t$ with $D_t$.
% Suppose $\Theta(\frac{t^2}{e^{2t}})$ is indeed dominated by $O(D_t^{-1})$, as is claimed by \cite{zhang2020data} on page ec31, then the following should also hold: 
% $$\bbE\left[\int_0^T \frac{D_t}{t+t_0}\frac{t^2}{e^{2t}}dt\right] \leq O\left(\log(mT+1)\right)$$

% Now using the same calculations used on top of page ec31, $\bbE[D_t | \cF_t] \geq \frac{1}{2}{\bar x}^l m \alpha t$ for $t$ where $\bbE[D_t|\cF_t] \leq \frac{m}{2}$. On the other hand, $\bbE[D_t|\cF_t]\leq {\bar x}^u m(\alpha+\beta)t$, therefore $\tau \geq \frac{1}{2{\bar x}^u (\alpha+\beta)}$, where $\tau$ is the first time when $\bbE[D_t|\cF_t]$ hits $m/2$.

% Combining these results we know that 
% \begin{align*}
%     \bbE\left[\left.\bbE\left[\int_0^T \frac{D_t}{t+t_0}\frac{t^2}{e^{2t}}dt\right\vert \cF_T\right]\right] \geq 
%     \int_0^{T\wedge \tau} \frac{\frac{1}{2}{\bar x}^l m \alpha t}{t+t_0}\frac{t^2}{e^{2t}}dt \geq \Omega(m)
% \end{align*}
% The last step holds for $T\geq \Omega(1)$. This contradicts their Lemma 5 result.
% \end{proof}

\end{document}